\documentclass[11pt,hidelinks]{article}

\usepackage{fullpage}

\usepackage{amsmath,amssymb,amsthm, amsfonts}

\usepackage{enumitem, comment, xifthen}

\usepackage{hyperref}       
\usepackage{url}            
\usepackage{booktabs}       
\usepackage{amsfonts}       
\usepackage{nicefrac}       


\usepackage{algorithm}

\usepackage{macros}
\def\Alg{\textsc{Alg}}
\def\wubar{\underline w}

\def\wstar{w^\star}
\def\what{\widehat w}

\usepackage{centernot}
\usepackage{amssymb}
\usepackage{comment}
\usepackage{enumitem}
\usepackage{wrapfig}
\usepackage{algorithm}
\usepackage{algpseudocode}
\usepackage{cleveref}



\begin{document}

\begin{center}

  {\bf{\LARGE Provable Benefits of Actor-Critic Methods \\ for Offline
      Reinforcement Learning}} \\

  \vspace*{.2in}

  \begin{tabular}{ccc}
    Andrea Zanette && Martin J. Wainwright \\
    Inst. for Comp. and Math. Engineering &&
    Departments of Statistics and EECS \\
\texttt{zanette@stanford.edu} && \texttt{wainwrig@berkeley.edu}
  \end{tabular}
\begin{tabular}{c}  
  Emma Brunskill \\
  Department of Computer Science \\
  Stanford University \\
\texttt{ebrun@cs.stanford.edu} 
\end{tabular}

  \vspace*{.2in}

  \begin{abstract}
    Actor-critic methods are widely used in offline reinforcement learning
practice, but are not so well-understood theoretically. We propose a new
offline actor-critic algorithm that naturally incorporates the pessimism principle, leading to several key advantages compared to the state of the art. 
The algorithm can operate when the Bellman evaluation operator is closed with respect to the action value function of the actor's policies; this is a more general setting than the low-rank MDP model. Despite the added generality, the procedure is computationally tractable as it involves the solution of a sequence of second-order programs.
We prove an upper bound on the suboptimality gap of the policy returned by the procedure that depends on the data coverage of any arbitrary, possibly data dependent comparator policy.
The achievable guarantee is complemented with a minimax lower bound that is matching up to logarithmic factors.
  \end{abstract}
\end{center}


\section{Introduction}

The problem of learning a near-optimal policy is a core challenge in
reinforcement learning (RL).  In many settings, it is beneficial to be able to  learn a good policy using only a pre-collected set of data, 
without further exploration with the environment;
this problem is known as \emph{offline or batch policy learning}.  The offline
setting has unique challenges due to the incomplete information about
the Markov decision process (MDP) encoded in the available
dataset. For example, due to maximization bias, a naive offline
algorithm can return a policy with a severely overestimated value.  In order to avoid such 
undesirable behavior, researchers have introduced the idea of
pessimism under uncertainty, and there is now a growing literature
(e.g.,~\cite{liu2020provably,jin2020pessimism,buckman2020importance,kumar2019stabilizing,kidambi2020morel,yu2020mopo})
on different ways in which pessimism can be incorporated.  See
Appendix~\ref{sec:Literature} for additional references and
discussion of this body of work.

At a high level, incorporating pessimism prevents algorithms from
settling down on uncertain policies whose value might be misleadingly
high under the current dataset due to statistical errors. By using
pessimism, uncertain policies are penalized in such a way that only
those policies robust to statistical errors are returned. The
principle can be implemented in at least two different ways: (a) by
penalizing policies that are far from the one that generated the
dataset; or (b) by penalizing the value functions of policies not well
covered by the dataset. In this paper, we take the latter avenue.

\subsection{Overview and our contributions}

Implementing pessimism with function approximation is challenging for
several reasons.  First, uncertainty must be estimated with particular
care.  On one hand, underestimating it can fail to correct the
coverage problem.  On the other hand, overestimating it leads to
policies that are too conservative and thus underperform. Second, the
incorporation of pessimism may introduce complex, higher order
perturbations into the value function class handled by the algorithm.
Similar issues can arise when adding optimistic bonuses in the
exploration. The increased complexity of the function class often
requires additional assumptions on the model, because the new class
needs to interact ``nicely'' with the Bellman operator.  Prior art on
pessimism with function approximation has by-passed this problem by
making strong model assumptions, such as low-rank
transitions~\cite{jin2020pessimism} or algorithm-specific
assumptions~\cite{liu2020provably}.

\paragraph{Actor-critic methods:}
Most past theoretical work on offline reinforcement learning on finding with high probability the policy with the highest performance has focused on algorithms that
are either model or
value-based\footnote{Exceptions to this include importance-sampling based approaches to selecting among a finite set of policies (e.g.~\cite{mandel2014offline,thomas2015high,thomas2019preventing}); however, such approaches have focused on operating without a Markov assumption and inherently provide much looser guarantees than the ones we and others consider for the Markov setting.}
~\cite{liu2020provably,jin2020pessimism,buckman2020importance,kidambi2020morel,yu2020mopo}; these often incorporate pessimism into the estimates of the policy performance.
Actor-critic methods are a hybrid class of methods that mitigate some
deficiencies of methods that are either purely policy or purely
value-based~\cite{konda2000actor,KonTsi03,heess2015learning,haarnoja2017reinforcement,haarnoja2018soft};  in modern RL, they are
widely used in practice
(e.g.,~\cite{levine2020offline,wu2019behavior,wu2021uncertainty,kumar2019stabilizing,kumar2020conservative}).
An actor-critic method generally consists of an actor that changes the
policy in order to maximize its value as estimated by the critic.
Given their popularity, it is natural to ask the following question:
\emph{do actor-critic methods provably offer any advantage in offline
RL?}  The main contribution of this paper is to give a positive answer
to this question: by separating the policy optimization from the
policy evaluation, both tasks become simpler to design and the
pessimism principle can be incorporated more naturally.

\paragraph{Contributions:}
More specifically, we study the problem of policy learning using
linear function approximation in the offline setting.  We assume that
we are given a batch data set $\D$, in which each sample consists of a
quadruple.  The first two components are the state-action pair, corresponding
to the state in which a given action was taken, and the last two
components correspond to a noisy observation of the reward, and a
successor state drawn from the appropriate transition function.  Our
theory allows for a very general dependence structure among the the
state-action pairs in these samples; when the data set is ordered
according to how the samples were collected (which need not be related
to a trajectory), we allow the state-action pair at any given instant
to depend on all past samples.  This set-up allows from data collected
from arbitrary policies, mixtures of policies, generative models or even in adversarial
manner.

Given such a data set, our objective is to find the policy that
performs best in the face of uncertainty.  In particular, we need to
account for the fact that the optimal policy $\pi^*$ for the
underlying MDP may not be well covered by the dataset $\D$, in which
case the associated uncertainty would be prohibitive.  In order to
achieve this goal, we design an actor-critic procedure that
iteratively optimizes a lower bound on the value of the optimal
policy.  Suppose that we are interested in optimizing the value
function at some given initial $\state_1$.  Our strategy works as
follows: for any given policy $\pi$, we construct a family
$\MDPClass(\pi)$ of ``statistically plausible'' MDPs, and use them to
define a simple second-order cone program.  By solving this convex
program, we obtain value function estimate $\Vubar^\pi_M(\state_1) =
\arg \min_{\MDP \in \MDPClass(\pi)} V^\pi_\MDP(\state_1)$ that---for
an appropriately constructed family $\MDPClass(\pi)$---is guaranteed
to be a lower bound on the true value function of $\pi$ in the unknown
MDP that generated the dataset.  Given a procedure for producing such
lower bounds, it is then natural to maximize these lower bounds over
some family $\Pi$ of policies.  This combination leads to the
saddle-point problem
\begin{align}
\label{eqn:minimax}
\max_{\pi \in \Pi} \min_{\MDP \in \MDPClass(\pi)}
V^{\pi}_\MDP(\state_1).
\end{align}  
Note that actor-critic methods fit naturally in this framework: the
critic provides a pessimistic evaluation of any given policy $\pi$,
and the actor solves the outer maximization problem over policies.
This decoupling lends itself to a computationally tractable
implementation, along with an analysis of the procedure.  In
particular, we show that the actor's sequence of estimated policies
enjoys online learning-style guarantees with respect to a sequence of
pessimistic MDPs implicitly identified by the critic.

The way in which we introduce pessimism is a second key component of
the algorithmic framework.  In particular, in line with our previous
paper~\cite{zanette2020learning}, we do so without enlarging the
prescribed classes of functions and policies.  We do so by a direct
perturbation of the value functions examined by the critic; there is
no addition of pessimistic bonuses or absorbing states.  Since the
class of value functions is not altered, this method has two main
advantages.  First, there are no additional model assumptions compared
to the standard---that is non-pessimistic---version of the
actor-critic method.  Second, the complexity of the underlying classes
is not increased, thereby allowing us to construct tight confidence
intervals and estimation error bounds that are minimax optimal up to
logarithmic factors.

The remainder of this paper is organized as follows.  We begin in
Section~\ref{SecBackground} with background on MDPS, and then
introduce the modeling assumptions that underlie the analysis of this
paper.  In Section~\ref{SecPacle}, we introduce the algorithm studied
in this paper, namely the Pessimistic Actor Critic for Learning without
Exploration
(for short, \Algname{}) algorithm.  Section~\ref{SecMain} provides
statements of our main results and discussion of their consequences,
including an upper bound on the \Algname{} algorithm in
Theorem~\ref{thm:MainResult}, and a minimax lower bound in
Theorem~\ref{thm:LowerBound}.  In Section~\ref{SecProof}, we provide
an outline of the proof of Theorem~\ref{thm:MainResult}, with various
technical details as well as the proof of Theorem~\ref{thm:LowerBound}
deferred to the appendices. We conclude with a discussion in
Section~\ref{SecDiscussion}.


\subsection{Notation}
\label{sec:Notation}

For the reader's convenience, we summarize here some notation used
throughout the paper.  We let $\B_d(r) = \{ x \in \R^d \mid \|x\|_2
\leq r\}$ denote the Euclidean ball of radius $r \in \R$ in dimension
$d$; we simply write $\B$ when there is no possibility of confusion.
For a vector $x \in \R^d$, we use $[x]_i$ to denote its $i^{th}$
component.  We use the $\widetilde O$ notation to denote an upper
bound that holds up to constants and log factors in the input
parameters $(\frac{1}{\delta},d,H)$.  The notation $\lesssim$ means an
upper bound that holds up to a constant, with an analogous definition
for $\gtrsim$.


\section{Background and problem formulation}
\label{SecBackground}

We begin by providing some background, before introducing
the assumptions that underlie our problem formulation.


\subsection{Markov decision processes}

In this paper, we focus on finite-horizon Markov decision processes,
for which we provide a very brief introduction here.  See the
books~\cite{puterman1994markov,bertsekas1996neuro,Bertsekas_dyn1,Bertsekas_dyn1,SutBar18}
for more background and detail. A finite-horizon MDP is specified by a
positive integer $\horizon$, and events take place over a sequence of
stages indexed by the time step $\hstep \in [\horizon] \defeq \{1,
\ldots, \horizon \}$.  The underlying dynamics involve a state space
$\StateSpace$, and are controlled by actions that take values in some
action set $\ActionSpace$.  In this paper, we allow the state space to
be arbitrary (continous or discrete), whereas our analysis applies to
discrete action spaces.  For each time step $\hstep \in [\horizon]$,
there is a reward function $r_\hstep: \StateSpace \times \ActionSpace
\rightarrow \R$, and for every time step $\hstep$ and state-action
pair $(\state, \action)$, there is a transition function
$\trans_\hstep(\cdot \mid \state, \action)$.  When at horizon
$\hstep$, if the agent takes action $\action$ in state $\state$, it
receives a random reward drawn from a distribution $R_\hstep(\state,
\action)$ with mean $r_\hstep(\state, \action)$, and it then
transitions randomly to a next state $\nextstate$ drawn from the
transition function $\trans_\hstep(\cdot \mid \state, \action)$.

A policy $\pi_\hstep$ at stage $\hstep$ is a mapping from the state
space $\StateSpace$ to the action space $\ActionSpace$.  Given a full
policy $\pi = (\pi_1, \ldots, \pi_\horizon)$, the state-action value
function at time step $\hstep$ is given by
\begin{align}
Q^{\pi}_\hstep(\state, \action) & = r_\hstep(\state, \action) +
\E_{\MyState_\ell \sim \pi \mid (\state, \action)} \sum_{\ell = \hstep
  + 1}^{\horizon} r_{\ell}(\MyState_{\ell},
\pi_{\ell}(\MyState_\ell)),
\end{align}
where the expectation is over the trajectories induced by $\pi$ upon
starting from the pair $(\state, \action)$. When we omit the starting
state-action pair $(\state, \action)$, the expectation is intended to
start from a fixed state denoted by $\state_1$.  The value function
associated to $\pi$ is $V^{\pi}_\hstep(\state) =
Q^{\pi}_\hstep(\state, \pi_\hstep(\state))$.  For a given policy
$\pi$, we define the Bellman evaluation operator
\begin{align*}
\T^{\pi}_\hstep(Q_{h+1})(\state, \action) = r_\hstep(\state, \action) +
\E_{\MyState' \sim \trans_\hstep(\state, \action)} \E_{\Action' \sim \pi}
Q_{h+1}(\MyState',\Action').
\end{align*}
Under some regularity
conditions~\cite{puterman1994markov,shreve1978alternative}, there
always exists an optimal policy $\pi^\star$ whose value and
action-value functions are defined as
\begin{align*}
  \Vstar_\hstep(\state) = V^{\pi^\star}_\hstep(s) = \sup_{\pi}
  V^{\pi}_\hstep(\state), \quad \mbox{and} \quad \Qstar_\hstep(\state,
  \action ) = Q^{\pi^\star}_\hstep(\state, \action) = \sup_{\pi}
  Q^{\pi}_\hstep(\state, \action).
\end{align*}


\subsection{Assumptions on data generation}

In this paper, we study a model in which we observe a dataset of the
form $\DataSet = \{ (\state_i, \action_i, \reward_i, \nextstate_i)
\}_{i=1}^\Nsamp$, where $\Nsamp$ is the total sample size.  For each
$i \in [\Nsamp] = \{1, 2, \ldots, \Nsamp \}$, the tuple $(\state_i,
\action_i)$ corresponds to a state-action pair associated with some
time step $\hstep_i$.  We let $\Fcal_i$ be the $\sigma$-field
generated by the samples $\{ (\state_j, \action_j, \reward_j,
\nextstate_j) \}_{j=1}^{i-1}$ that are in the ``past'' relative to
index $i$. With this notation, we impose the following condition:
\begin{assumption}[Data generation]
  \label{AssData}
For each $i \in [\Nsamp]$, the pair $(\state_i, \action_i)$ is
measurable with respect to $\Fcal_i$.  Conditionally on a given pair
$(\state_i, \action_i)$, the random variable $\reward_i$ is drawn from
a reward distribution $R_{\hstep_i}(\state_i, \action_i)$ that is
$1$-sub-Gaussian; and the next state $\nextstate_{i}$ is drawn from
the distribution $\trans_{h_i}(\state_{i}, \action_{i})$.
\end{assumption}
Note that the measurability condition allows the choice of $(\state_i,
\action_i)$ to depend arbitrarily on any of the past data with indices
$j < i$.  The mild assumption allows for considerable freedom.  For
example, the state-action pairs may be chosen from (mixture) policies,
or they can be generated by an adversarial procedure that changes the
data acquisition strategy as feedback is received.

For each $\hstep \in [\horizon]$, we let $\Index_\hstep$ denote the
subset of observation indices $i \in [\Nsamp]$ such that $\hstep_i =
\hstep$.  These index sets define the sub-datasets $\DataSet_\hstep =
\{ (\state_i, \action_i, \reward_i, \nextstate_i), i \in \Index_\hstep
\}$ associated with all samples that are based on state-action pairs
at time step $\hstep$.  We define $\Nsamp_\hstep = |\DataSet_\hstep|$,
so that our total sample size can be written as $\Nsamp =
\sum_{\hstep=1}^\horizon \Nsamp_\hstep$.


\subsection{Policy and function classes}

Next we define the policy space $\Pi$ and the action value function
space $\Q$ over which we seek solutions.  Let \mbox{$\phi: \StateSpace
  \times \ActionSpace \mapsto \R^d$} be a $d$-dimensional feature
mapping.  We assume throughout that these feature mappings are
normalized such that $\|\phi(\state, \action)\|_2 \leq 1$ uniformly
for all $(\state, \action)$-pairs.  We consider action-value functions
that are linear in $\phi$, and families of the form
\begin{subequations}
  \begin{align}
\label{EqnDefnQclass}    
  \Q(\rad) & \defeq \{ (\state, \action) \mapsto \inprod{\phi(\state,
    \action)}{w} \mid \|w\|_2 \leq \rad\},
  \end{align}
  where $\rad \in (0,1]$ is a user-defined radius.  For policies, we
    consider the associated soft-max class
    \begin{align}
\label{EqnDefnSoftMax}      
  \PiSoft (\radactor) \defeq \left \{ \frac{ e^{\inprod{\phi(\state,
        \action)}{\theta}}}{\sum \limits_{\action' \in \ActionSpace}
    e^{\inprod{\phi(\state,\action')}{\theta}}} \; \mid \;
  \|\theta\|_2 \leq \radactor \right \},
\end{align}
\end{subequations}
where $\radactor > 0$ is a second radius.

In the context of our actor-critic algorithm, the weight radius $\rad$
remains fixed for all updates.  On the other hand, the actor produces
a sequence of soft-max radii $\{\radactor_\actorit
\}_{\actorit=1}^\numepoch$, indexed by the iterations $\actorit$ of
the actor.  This sequence is produced via the update rule in Line
\ref{line:actorupdate} of \cref{alg:actor}.  The policy radius can be
large $\radactor \gg 1$ but we constrain $\rad \leq 1$ so that the
critic's estimate $Q_w(\state, \action) = \inprod{\phi(\state,
  \action)}{w}$ is bounded by one, i.e., $\sup_{(\state, \action, w)}
|Q_w(\state, \action)| \leq 1$.

Recall that our MDP consists of sequence of $\horizon$ distinct
stages.  Our algorithm and theory allows for the possibility of
different feature extractors at each step $\hstep \in [\horizon]$,
even with possibly different dimensions.  Consequently, in
implementing and analyzing the algorithm, there are actually
$\horizon$ (possibly different) functional spaces $\{ \Q_\hstep
\}_{\hstep=1}^\horizon$, along with the associated soft-max policy
classes $ \{ \Pi_\hstep \}_{\hstep=1}^\horizon$.  So as to simplify
notation, we drop the dependence on the radii when referring to the
functional spaces, and implicitly assume that the terminal value
function is zero.


\subsection{A range of function class assumptions}

In this section, we discuss a range of assumptions that might be
imposed on the class of action-value functions.  This discussion
serves as motivation for the particular assumption (Bellman restricted
closedness---cf.  Assumption~\ref{asm:Closedness}) that underlies our
analysis.

We begin with the least restrictive condition, which is a very natural
starting point in our given set-up.  If we seek to find the policy
$\pi \in \Pi$ with the highest value function, it seems reasonable to
require that the following representation condition (approximately)
holds.

\begin{assumption}[Linear action-value functions $Q^\pi$]
The MDP admits a linear action-value function representation for all
policies in $\Pi$, meaning that for each policy $\pi \in \PolicySet$
and time step $\hstep \in [\horizon]$, there exists a vector
$w_\hstep^\pi$ such that
\begin{align}
Q_\hstep^\pi(\state, \action) = \inprod{\phi_\hstep(\state,
  \action)}{w_\hstep^\pi}.
\end{align}
\end{assumption}
\noindent This assumption alone turns out to be inadequate to ensure
that effective learning is possible; indeed, the recent
papers~\cite{zanette2020exponential,weisz2020exponential} establish
that even under this condition, there are instances that require
exponentially many samples to do better than a random policy. \\

Given this fact, if one is interested in procedures with polynomial
complexity (in both sample size and running time), stronger conditions
need to be imposed.  In general, the Bellman evaluation operator, even
when applied to a linear action-value function, will return a
nonlinear value function.  The analysis of this paper is based on
bounding the Bellman error in the sense of sup-norm deviation from
linearity:

\begin{assumption}[Bellman Restricted Closedness]
\label{asm:Closedness}
The policy and value function spaces $(\Pi,\Q)$ are closed up to
$\misepsilon \in \R^\horizon$ error in the sup-norm if there is a
non-negative sequence $\{\misepsilon_\hstep \}_{\hstep=1}^\horizon$
such that for each $\hstep \in [\horizon]$, we have
\begin{align}
\sup_{\substack{Q_{h+1} \in \Q_{h+1} \\ \pi_{h+1}\in\Pi_{h+1}}}
\inf_{Q_\hstep \in \Q_\hstep} \| Q_\hstep -
\T_\hstep^{\pi_{h+1}}Q_{h+1}\|_\infty \leq \misepsilon_\hstep.
\end{align}
\end{assumption}

The restricted closedness assumption measures how well we can fit the
action-value function resulting from the application of the Bellman
evaluation operator to an action value function in $\Q$ and for a
policy in $\Pi$. It enables the analysis of least-squares policy
evaluation (e.g.,~\cite{nedic2003least}), which will be our starting
point when constructing the critic. \\
 
Finally, for understanding connections to past work, it is relevant to
compare to the \emph{low-rank MDP} assumption that has been analyzed
in recent work~\cite{jin2020provably,yang2020reinforcement}, including
in offline RL with pessimismistic guarantees~\cite{jin2020pessimism},
as well as in various online
settings~\cite{agarwal2020flambe,modi2021model,zanette2020frequentist}.
\begin{assumption}[Low-Rank MDP]
\label{asm:LowRank}
An MDP is low-rank if for all $\hstep \in [\horizon]$, there exists a
reward parameter $w_\hstep \in \R^{d}$ and a component-wise positive
mapping $\psi_\hstep: \StateSpace \rightarrow \R_+^d$ such that
$\|\psi_\hstep(\state)\|_{1} = 1$ for all $\state \in \StateSpace$,
and
\begin{align*}
r_\hstep(\state, \action) & = \inprod{\phi_\hstep(\state,
  \action)}{w_\hstep} , \qquad \trans_\hstep(s'\mid s,a) =
\inprod{\phi_\hstep(\state, \action)}{\psi_\hstep(s')}, \qquad \forall
(s,a,h,s').  \numberthis{\label{eqn:LinearMDPequations}}
\end{align*}
\end{assumption}

\vspace*{0.05in}

\noindent The following proposition explicates the nested relationship
between these three conditions, showing that the low-rank MDP
condition is the most restrictive: \\
\begin{proposition}[Low Rank $\subset$ Restricted Closedness
    $\subset$ Linear $Q^\pi$]
\label{prop:Inclusion}
For any fixed state-action space, horizon, and feature extractor:
\begin{enumerate}
\item[(a)] The class of low-rank MDPs is a strict subset of the class
  of MDPs that satisfy Bellman restricted closedness.
\item[(b)] The class of MDPs that satisfy Bellman restricted
  closedness is a strict subset of the linear $Q^\pi$ MDP class.
\end{enumerate}
\end{proposition}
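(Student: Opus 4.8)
The plan is to prove each part by establishing the stated inclusion and then producing a separating instance. For the inclusions it suffices to use the exact version of Assumption~\ref{asm:Closedness}, namely $\misepsilon = 0$, which is the natural notion of ``satisfying restricted closedness'' for a subset statement. One preliminary remark is used in both directions: for a fixed feature map the class of action-value functions that are linear in $\phi_h$ is sup-norm closed --- it is a finite-dimensional subspace of the bounded functions (equivalently, the continuous image of a compact ball once the radius is retained) --- so whenever the inner infimum in Assumption~\ref{asm:Closedness} equals zero it is attained, i.e.\ the backup $\T_h^{\pi_{h+1}}Q_{h+1}$ is itself \emph{exactly} linear in $\phi_h$.

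\textbf{Inclusion in part (a).} Assume the MDP is low-rank with respect to $\phi$. Fixing $h$, a policy $\pi_{h+1}\in\Pi_{h+1}$ and a function $Q_{h+1}\in\Q_{h+1}$, I would substitute $r_h(\state,\action)=\inprod{\phi_h(\state,\action)}{w_h}$ and $\trans_h(\nextstate\mid\state,\action)=\inprod{\phi_h(\state,\action)}{\psi_h(\nextstate)}$ into the definition of the Bellman evaluation operator and collect terms, obtaining $\T_h^{\pi_{h+1}}Q_{h+1}(\state,\action)=\inprod{\phi_h(\state,\action)}{\,w_h+\bar w\,}$, where $\bar w$ is produced by integrating $\psi_h(\cdot)$ against the scalar map $\nextstate\mapsto\E_{\action'\sim\pi_{h+1}}Q_{h+1}(\nextstate,\action')$. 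Hence the backup is exactly linear in $\phi_h$, so it lies in $\Q_h$ and the infimum in Assumption~\ref{asm:Closedness} vanishes; thus $\misepsilon=0$, which gives the containment.

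\textbf{Inclusion in part (b).} Assume $(\Pi,\Q)$ satisfy restricted closedness with $\misepsilon=0$, fix any $\pi=(\pi_1,\dots,\pi_\horizon)\in\Pi$, and argue by backward induction on $h$ that $Q^\pi_h$ is linear in $\phi_h$. The base case $h=\horizon+1$ holds since the terminal value function is identically zero. For the step, suppose $Q^\pi_{h+1}$ is linear, hence in $\Q_{h+1}$; the policy Bellman equation reads $Q^\pi_h=\T_h^{\pi_{h+1}}Q^\pi_{h+1}$, and since $Q^\pi_{h+1}\in\Q_{h+1}$ and $\pi_{h+1}\in\Pi_{h+1}$, exact closedness at stage $h$ together with the closedness remark above forces $Q^\pi_h\in\Q_h$, i.e.\ $Q^\pi_h(\state,\action)=\inprod{\phi_h(\state,\action)}{w^\pi_h}$ for some $w^\pi_h$. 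This closes the induction and yields the linear $Q^\pi$ representation for every $\pi\in\Pi$.

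\textbf{Strictness, and the main obstacle.} It remains to show both inclusions can be proper. For~(a) I would take a tabular MDP with one-hot features $\phi_h(\state,\action)=e_{(\state,\action)}$: then every function of $(\state,\action)$ is linear in $\phi_h$, so restricted closedness holds with $\misepsilon=0$; yet with one-hot features the low-rank factorization forces $[\psi_h(\nextstate)]_{(\state,\action)}=\trans_h(\nextstate\mid\state,\action)$ and hence $\|\psi_h(\nextstate)\|_1=\sum_{(\state,\action)}\trans_h(\nextstate\mid\state,\action)=1$ for every successor $\nextstate$, which fails for a generic kernel (e.g.\ any kernel in which two distinct state-action pairs both transition into a common successor with probability exceeding $1/2$). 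For~(b) I would take a small deterministic MDP with $\horizon=2$, zero rewards, a scalar stage-$2$ feature, and three stage-$1$ state-action pairs --- one of whose $\phi_1$-values is a convex combination of the other two --- mapped deterministically to successors whose scalar stage-$2$ feature values violate the linear constraint that combination induces; then every $Q^\pi$ vanishes and is linear, but the backup of the element $Q_2=\phi_2(\cdot)\in\Q_2$ is not linear in $\phi_1$, so exact closedness fails. I expect the strictness direction to be the main obstacle: the delicate point is verifying that each candidate MDP satisfies \emph{exactly one} of the two nested conditions --- in particular, that the Bellman-complete instance genuinely violates the nonnegativity-plus-$\ell_1$-normalization structure required of the low-rank factorization \emph{with respect to the fixed feature map}, and not merely with respect to some other, more convenient choice.
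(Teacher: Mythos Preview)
Your proposal is correct. The inclusion arguments in both (a) and (b) are essentially identical to the paper's: substitute the low-rank factorization into the Bellman operator for~(a), and run backward induction through the Bellman equations for~(b).

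Where you diverge is in the separating examples for strictness. The paper builds a single deterministic chain MDP (start at $0$, move left or right to terminal states $\pm N$) and equips it with two different hand-crafted feature maps---a scalar sign feature for~(a) and a two-dimensional feature with a collapsed map at the origin for~(b)---then checks the relevant Bellman backups coordinate by coordinate. Your examples are more structural: for~(a) you use one-hot features on a tabular MDP, so that restricted closedness is automatic and the low-rank normalization $\|\psi_h(\nextstate)\|_1=1$ becomes the column-sum condition $\sum_{(\state,\action)}\trans_h(\nextstate\mid\state,\action)=1$, which fails generically; for~(b) you take a two-step zero-reward MDP with a built-in affine dependency among the stage-$1$ features, so that $Q^\pi\equiv 0$ is trivially linear while a single nonzero element of $\Q_2$ breaks closedness at stage~$1$. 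Your constructions are arguably cleaner because the failure mechanism is transparent (a normalization constraint in~(a), a rank obstruction in~(b)) rather than tied to specific numerical values; the paper's chain has the advantage of being fully explicit, so the reader can verify every equation directly. Both routes are valid, and your anticipated obstacle---checking that the low-rank factorization cannot be rescued by some other $\psi$---is handled in your one-hot example because the one-hot $\phi$ determines $\psi$ uniquely.
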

\noindent See \Cref{sec:proofofinclusion} for the proof of this
claim.\\

Based on \Cref{prop:Inclusion}, we see that any analysis based on
assuming Bellman restricted closedness also \emph{a fortiori} applies
to MDPs that satisfy the more stringent low-rank MDP condition.


\section{The Pessimistic Actor-Critic}
\label{SecPacle}

Given the set-up thus far, we are now ready to describe the
actor-critic algorithm that we analyze in this paper.  We refer to it
as the \emph{Pessimistic Actor Critic for Learning without
Exploration}, or \Algname{} for short.  We first describe the critic
in \Cref{SecCriticDesc}, and then the actor in \Cref{SecActorDesc}.  We
summarize the actor and critic algorithms, respectively, in pseudocode
form in \Cref{alg:actor} and \Cref{alg:critic}.


\subsection{The Critic: Pessimistic Least Square Policy Evaluation}
\label{SecCriticDesc}

The purpose of the critic is to provide pessimistic value function
estimates corresponding to the policy $\pi$ under consideration by the
actor. Monte Carlo with importance sampling (IS) is not desirable in this setting, as the policy or distribution that generated the dataset might be
unknown and estimation errors on the distribution can accumulate exponentially with the horizon in IS estimators (see e.g.~\cite{liu2018representation}).  Instead, we use a least-squares temporal difference method
for policy evaluation, but suitably perturbed to return pessimistic
estimates---i.e., lower bounds on the true value function of the given
policy $\pi$.  Our method is based on directly perturbing the
regression parameters in the least-square estimate.  In contrast to
bonus-based approaches, this method has the important advantage of
ensuring that the action-value function remains linear.  The purpose
of the perturbations is to compensate for possible statistical errors
in estimating the regression parameter due to poor coverage of the
given dataset.

Let us now give a precise description of the critic.  Given a policy
$\pi = (\pi_1, \ldots, \pi_\horizon)$, the goal of the critic is to
minimize the quantity
\begin{align}
  \label{EqnCriticValue}
 \Ea{\pi_1} \inprod{\phi(\state_1,\Action')}{w_1} & = \sum_{\action
   \in \ActionSpace} \pi_1(\action \mid \state_1)
 \inprod{\phi_1(\state_1,\action)}{w_1},
\end{align}
which is an estimate of the value function $V^\pi(\state_1)$ for the
policy $\pi$ at the initial state $\state_1$.  The parameter $w_1 \in
\R^d$ is a vector to be adjusted, one that is determined by a
backwards-running sequence of regression problems from $\hstep =
\horizon$ down to $\hstep = 1$.

We introduce the pessimistic perturbations directly to the solution of
these regression problems.  They involve a norm defined by the
cumulative covariance matrix.  Recall that $\Index_\hstep$ indexes the
subset of observations associated with state-action pairs at time step
$\hstep$.  For each $\hstep \in [\horizon]$ and $i \in \Index_\hstep$,
let us write the associated sample as the quadruple $(\state_{hi},
\action_{hi}, \reward_{hi}, \state_{h+1, i})$.  Introducing the
shorthand notation $\phi_{hi} = \phi_\hstep(\state_{hi},a_{hi})$, we
define the \emph{cumulative covariance matrix}
\begin{align}
\label{EqnCovariance}
\Sigma_\hstep & \defeq \Big( \sum_{i \in \Index_\hstep} \phi_{hi}
\phi_{hi}^\top \Big) + I_{d \times d},
\end{align}
where $I_{d \times d}$ denotes the $d$-dimensional identity matrix.
Notice that the cumulative covariance grows as the number of samples
in $\Index_\hstep$ increases; we do not normalize it by the local
sample size $\Nsamp_\hstep = |\Index_\hstep|$, so that $\Sigma_\hstep$
effectively represents the amount of information contained in the
sub-dataset $\DataSet_\hstep$ at time step $\hstep$.

Since $\Sigma_\hstep$ is strictly positive definite by construction, it
defines a pair of norms
\begin{align}
  \Signorm{u} \defeq \sqrt{u^\top \Sigma_\hstep u}, \quad \mbox{and} \quad
  \Signorminv{u} \defeq \sqrt{u^\top (\Sigma_\hstep)^{-1} u}.
\end{align}
Consider the regression problem that is solved in moving backward from
time step $\hstep + 1$ to $\hstep$.  Given the weight vector
$w_{\hstep + 1}$ at time step $\hstep + 1$, the regularized
least-squares estimate of $w_\hstep$ is given by
\begin{align*}
  \widehat{w}_\hstep & \defeq \Sigma^{-1}_{h} \sum_{k \in
    \Index_\hstep} \phi_{h k} \Big[r_{hk} + \sum_{\action
      \in \ActionSpace} \pi_{\hstep + 1}(\action \mid \state_{h+1,k})
    \inprod{\phi_{h+1}(\state_{h+1,k}, \action)}{w_{h+1}} \Big].
\end{align*}
We introduce pessimism by directly perturbing the weight vectors
themselves---that is, we search for weight vector $w_\hstep$ such that
$w_\hstep = \xi_\hstep + \widehat{w}_\hstep$, where the pessimism vector
$\xi_{h} \in \R^d$ satisfies a bound of the form $\Signorm{\xi_\hstep} \leq
\pess_\hstep$, for a user-defined parameter $\pess_\hstep$.

In detail, the critic takes as input the dataset $\DataSet$, a policy
$\pi$, a sequence of tolerance parameters \mbox{$\pess = (\pess_1,
  \ldots, \pess_\horizon)$,} weight radii \mbox{$\rad = (\rad_1,
  \ldots, \rad_\horizon)$} with each $\rad_\hstep \in (0,1]$.  The
  optimization variables consist of the regression vectors \mbox{$w =
    (w_1, \ldots, w_\horizon) \in (\R^d)^\horizon$} and the pessimism
  vectors \mbox{$\xi = (\xi_1, \ldots, \xi_\horizon) \in
    (\R^d)^\horizon$.}  The critic then solves the convex program
  \begin{subequations}
\label{EqnCriticProgram}    
  \begin{align}    
(\xi^\pi, \wubar^\pi) \defeq \arg \min_{ \substack{ \xi \in
      (\R^d)^\horizon \\ w \in (\R^d)^\horizon}} \sum_{\action
    \in \ActionSpace} \pi_1(\action \mid \state_1)
  \inprod{\phi_1(\state_1, \action)}{w_1}
\end{align}
with the terminal condition $w_{\horizon + 1} = 0$, and subject to the
constraints
\begin{align}
  \label{EqnCriticConstraint}
  w_\hstep & = \xi_\hstep + \Sigma^{-1}_{h} \sum_{k \in \Index_\hstep} \phi_{h
    k}\Big[ r_{hk} + \sum_{\action \in \ActionSpace} \pi_{h+1}(a \mid
    \state_{h+1,k}) \inprod{\phi_{h+1}(\state_{h+1,k},a)}{w_{h+1}}
    \Big], \qquad \mbox{and} \\
& \|\xi_\hstep \|^2_{\Sigma_{h}} \leq \pess^2_\hstep, \qquad \|w_\hstep\|^2_{2} \leq
  (\rad_\hstep)^2
\end{align}
\end{subequations}
for all $\hstep \in [\horizon]$.  Here the matrices $\Sigma_\hstep$
were previously defined in equation~\eqref{EqnCovariance}.

The convex program~\eqref{EqnCriticProgram} consists of a linear
objective subject to quadratic constraints; it is a special case of a
second order cone program, and can be efficiently solved with standard
convex solvers.

\begin{center}
\begin{minipage}{0.45\textwidth}
\begin{algorithm}[H]
\caption{\textsc{Actor (Mirror Descent)}}
\label{alg:actor}
\begin{algorithmic}[1]
  \State \textbf{Input}: Dataset $\D$, starting state $\state_1$, learning rate $\eta$
  \State Set $\theta_{1} = (\vec 0,\dots,\vec 0)$ \For{$\actorit = 1,
    2,\ldots, \actortot$} \State $\wubar_\actorit \leftarrow$
  \textsc{Critic}$(\D,\pi_{\theta_\actorit},\state_1)$
  \State \label[line]{line:actorupdate} $\theta_{\actorit + 1} =
  \theta_\actorit + \eta \wubar_\actorit$ \EndFor \State
  \textbf{Return: Mixture policy $\pi_{\theta_1}, \ldots,
    \pi_{\theta_\actortot}$}
\end{algorithmic}
\end{algorithm}
\end{minipage}
\hfill
\begin{minipage}{0.45\textwidth}
\begin{algorithm}[H]
\caption{\textsc{Critic (Plspe)}}
\label{alg:critic}
\begin{algorithmic}[1]
\State \textbf{Input}: Dataset $\DataSet$, target policy $\pi$,
starting state $\state_1$, critic radii $\{\rad_\hstep\}_{\hstep = 1,\dots,\horizon}$, and parameters $\{ \pess_\hstep\}_{\hstep = 1,\dots,\horizon}$
\State Solve the optimization
program~\eqref{EqnCriticProgram}
\State \textbf{Return:} Optimal weight vector $\wubar$
\end{algorithmic}
\end{algorithm}
\end{minipage}
\end{center}


\subsection{The Actor: Mirror Descent}
\label{SecActorDesc}

We now turn to the behavior of the actor.  It applies the mirror
descent algorithm based on the Kullback Leibler (KL)
divergence~\cite{bubeck2014convex}. This combination leads to the
exponentiated gradient update rule in every timestep $\hstep \in
[\horizon]$, so that the soft-max policy in moving from iteration
$\actorit$ to $\actorit + 1$ is updated as
\begin{align}
\label{main:eqn:MirrorDescent}
\pi_{\actorit + 1,\hstep}(\action \mid \state) \propto
\pi_{\actorit,\hstep}(\action \mid \state) e^{\eta Q_\hstep(\state,
  \action)} \qquad \mbox{for each $(\state, \action) \in \StateSpace
  \times \ActionSpace$.}
\end{align}
Here $\eta > 0$ is a stepsize parameter, and our theory specifies
a suitable choice.

If the $Q$-value above from the critic lives in $\Q$, then it is
possible to show that $\pi_{\actorit + 1,\hstep} \in \Pi_\hstep$ and
the update rule takes a much simpler and computationally more
efficient form (cf. Line \ref{line:actorupdate} of \cref{alg:actor}),
where $\wubar_{\actorit}$ is the gradient of the value function on the
pessimistic MDP implicitly identified by the critic.  In this case,
the spaces $(\Q,\Pi)$ are said to be \emph{compatible}
\cite{sutton1999policy,kakade2001natural,agarwal2020optimality,raskutti2015information}
and the resulting algorithm is often called the \emph{Natural Policy
Gradient} (NPG) (see also \cite{geist2019theory,shani2020adaptive}).
By construction, the critic maintains a linear action value function
even after pessimistic perturbations.  As a consequence, the actor
policy space is the simple softmax policy class $\Pi$ and the easier
update rule can be used. As we explain in the analysis, this has
important statistical benefits.

After $\actortot$ rounds of updates, the mirror descent algorithm that
we use here readily achieves online regret rates (in the optimization
setting with exact feedback) $\sim 1/\actortot$ or $\sim
1/\sqrt{\actortot}$ depending on the
analysis~\cite{agarwal2020optimality} and the learning rate, although
we mention that these rates could potentially be
improved~\cite{khodadadian2021linear,lan2021policy,bhandari2020note}.


\section{Main results}
\label{SecMain}

We now turn to the statement of a bound on the performance of the
policy $\pialg$ returned by \Algname{}.  This upper bound involves
three terms: an optimization error, an uncertainty term, and a model
mis-specification term.  The \emph{optimization error} is given by
\mbox{$\Ropt(\actortot) \defeq 4 \horizon \sqrt{\frac{\log
      |\ActionSpace|}{\actortot}}$;} it captures the rate at which the
error decreases as a function of the iterations of the actor.  The
\emph{mis-specification error} \mbox{$\Mis(\misepsilon) \defeq \sumh
  \misepsilon_\hstep$} is simply the sum of all the stage-wise
mis-specification errors; notice that the mis-specification error does
depend on the choice of the radii for the critic
$\rad_1,\dots,\rad_\horizon$ in a problem dependent way
(cf. \cref{asm:Closedness}).  Finally, for each $\hstep$, define the
vector $\meanpivec_\hstep \defeq \E_{(\MyState_\hstep, \Action_\hstep)
  \sim \pi} [\phi_\hstep(\MyState_\hstep, \Action_\hstep)]$, where the
expectation is over the state-action $(\MyState_\hstep,
\Action_\hstep)$ encountered at timestep $\hstep$ upon following
policy $\pi$.  In terms of these vectors, the \emph{uncertainty error}
is given by
\begin{align}
\label{main:eqn:UR}
\Uncertain(\pi; \pess) & \defeq \; 2 \sumh \pess_\hstep
\|\meanpivec_\hstep\|_{\Sigma^{-1}_\hstep} \; = \; 2 \sumh \pess_\hstep \sqrt{
  (\meanpivec_\hstep)^\top \Sigma^{-1}_\hstep \meanpivec_\hstep},
\end{align}
where the cumulative covariance matrix $\Sigma_\hstep$ was defined in
equation~\eqref{EqnCovariance}.

The amount of information from the dataset $\DataSet$ is fully encoded
in the uncertainty function $\Uncertain$ through the sequence of
cumulative covariance matrices $\{\Sigma_\hstep\}_{\hstep=1}^\horizon$
and parameters $\{ \pess_\hstep\}_{\hstep=1}^\horizon$.  The more data
are available, the more positive definite $\Sigma_\hstep$ is and the
smaller the uncertainty function $\Uncertain(\pi; \pess)$ becomes for
a fixed policy $\pi$. If the sampling distribution that generates the
dataset is fixed, then we can write $\Uncertain(\pi; \pess)
\lessapprox c/\sqrt{n}$ where $c$ does not depend on $n$ and can be
interpreted as the coverage of the sampling distribution with respect
to policy $\pi$.

\subsection{A guarantee for PACLE}
Our main result holds under Assumption~\ref{AssData} on the data
collection process. It is based on radii $\{ \rad_\hstep
\}_{\hstep=1}^\horizon$ for the action value function\footnote{This
represents a setting where both the reward and the value function can
be as large as $1$ in absolute value. One easily recovers the setting
with value functions in $[0, \horizon]$ using a rescaling argument.}
that lie in the interval $(0,1]$, and it provides a guarantee relative
  to the class $\Piall$ of all stochastic policies.
\begin{theorem}[An achievable guarantee]
\label{thm:MainResult}
Suppose that we are given a data set $\DataSet$ collected in a way
that respects Assumption~\ref{AssData}.  Then there are pessimism
vectors bounded as $\pess_\hstep = \widetilde O(\sqrt{d \; \log(1/\delta)})
+ \misepsilon_\hstep \sqrt{\Nsamp_\hstep}$ such that, after running $\actortot
\geq \log |\ActionSpace|$ rounds of the actor with stepsize $\eta =
\sqrt{ \tfrac{\log |\ActionSpace|}{\actortot}}$, the \Algname{}
procedure returns a policy $\pi_{\Alg{}}$ for which
\begin{align}
\label{eqn:MainResult}
V^\pi_1(\state_1) - V_1^{\pi_{\Alg}}(\state_1) & \leq \Uncertain(\pi;
\pess) + \underbrace{\sum_{\hstep=1}^\horizon
  \misepsilon_\hstep}_{\Mis(\misepsilon)} +
\underbrace{\vphantom{\sum_{\hstep=1}^\horizon}4 \horizon
  \sqrt{\tfrac{\log |\ActionSpace|}{\actortot}}}_{\Ropt(\actortot)}
\qquad \mbox{uniformly over all $\pi \in \Piall$}
\end{align}
with probability exceeding $1 - \delta$.
\end{theorem}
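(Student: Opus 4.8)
Since $\pi_{\Alg}$ is the uniform mixture of the iterates $\pi_\actorit\defeq\pi_{\theta_\actorit}$, $\actorit\in[\actortot]$, we have $V^{\pi_{\Alg}}_1(\state_1)=\tfrac1\actortot\sum_{\actorit=1}^{\actortot}V^{\pi_\actorit}_1(\state_1)$. Write $\Vubar^\pi_1(\state_1)\defeq\sum_a\pi_1(a\mid\state_1)\inprod{\phi_1(\state_1,a)}{\wubar^\pi_1}$ for the optimal value of the critic program~\eqref{EqnCriticProgram} on input $\pi$. For any $\pi\in\Piall$ I would use
\[
V^\pi_1(\state_1)-V^{\pi_{\Alg}}_1(\state_1)=\underbrace{\tfrac1\actortot\sum_\actorit\big(V^\pi_1(\state_1)-\Vubar^{\pi_\actorit}_1(\state_1)\big)}_{\mathrm{(A)}}+\underbrace{\tfrac1\actortot\sum_\actorit\big(\Vubar^{\pi_\actorit}_1(\state_1)-V^{\pi_\actorit}_1(\state_1)\big)}_{\mathrm{(B)}},
\]
and bound (A) by $\Uncertain(\pi;\pess)+\Ropt(\actortot)+\Mis(\misepsilon)$ and (B) by $\Mis(\misepsilon)$. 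All the probability is confined to a single event $\mathcal E$ that does not mention $\pi$, so the stated uniformity over $\Piall$ is automatic.

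\textbf{The good event (the crux).} Fix $\hstep$. For $\pi'\in\Pi_{\hstep+1}$ and $w'\in\Q_{\hstep+1}$ put $g(\cdot)\defeq\sum_{a'}\pi'(a'\mid\cdot)\inprod{\phi_{\hstep+1}(\cdot,a')}{w'}$, let $\widehat w_\hstep\defeq\Sigma_\hstep^{-1}\sum_{k\in\Index_\hstep}\phi_{hk}\big(r_{hk}+g(\state_{h+1,k})\big)$ be the backup appearing in~\eqref{EqnCriticConstraint}, and let $u_\hstep\in\Q_\hstep$ attain the infimum in~\cref{asm:Closedness} for $\T^{\pi'}_\hstep g$, so $\|\inprod{\phi_\hstep}{u_\hstep}-\T^{\pi'}_\hstep g\|_\infty\leq\misepsilon_\hstep$. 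Writing $r_{hk}+g(\state_{h+1,k})-\inprod{\phi_{hk}}{u_\hstep}$ as an $\Fcal_k$-conditionally mean-zero, bounded/sub-Gaussian term plus a bias term of size $\leq\misepsilon_\hstep$, and handling the $I$-regularization separately, I would bound $\|\widehat w_\hstep-u_\hstep\|_{\Sigma_\hstep}$ by: a self-normalized martingale tail bound (Abbasi-Yadkori type) for the noise, giving $\widetilde O(\sqrt{d\log(1/\delta)})$; the elementary estimate $\big\|\sum_k\phi_{hk}b_k\big\|^2_{\Sigma_\hstep^{-1}}\leq\sum_k b_k^2\leq\Nsamp_\hstep\misepsilon_\hstep^2$ for the bias; and $\|u_\hstep\|_{\Sigma_\hstep^{-1}}\leq\|u_\hstep\|_2\leq1$ for the regularization. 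Because the actor's policies and the critic's fitted weights depend on $\D$, this must hold \emph{uniformly} over $(\pi',w')$, which I would get by covering the Euclidean balls parameterizing the soft-max and linear classes (using Lipschitzness of $g$ in those parameters), at the price of another $\sqrt{d\log(\cdot)}$ factor. The outcome: on an event $\mathcal E$ of probability at least $1-\delta$, for all $\hstep$ and all $(\pi',w')$, $\|\widehat w_\hstep-u_\hstep\|_{\Sigma_\hstep}\leq\pess_\hstep$ with $\pess_\hstep=\widetilde O(\sqrt{d\log(1/\delta)})+\misepsilon_\hstep\sqrt{\Nsamp_\hstep}$. This uniform self-normalized concentration --- especially the covering over soft-max policies under the adaptive data model of~\cref{AssData} --- is the step I expect to be the main obstacle; everything below is on $\mathcal E$. (The soft-max classes are taken wide enough to contain every iterate, which holds since $\|\theta_{\actorit,\hstep}\|_2\leq\eta\actortot$.)

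\textbf{Term (B): pessimism.} Running~\cref{asm:Closedness} backward from $\hstep=\horizon$ (terminal value $0$) gives, for each $\actorit$, weights $w^{\pi_\actorit}_\hstep\in\Q_\hstep$ with $\|\inprod{\phi_\hstep}{w^{\pi_\actorit}_\hstep}-Q^{\pi_\actorit}_\hstep\|_\infty\leq\sum_{\ell\geq\hstep}\misepsilon_\ell$, since $\T^{\pi}_\hstep$ is a sup-norm non-expansion; in particular $w^{\pi_\actorit}_\hstep$ is the minimizer $u_\hstep$ of the previous paragraph for $\pi'=\pi_{\actorit,\hstep+1}$, $w'=w^{\pi_\actorit}_{\hstep+1}$. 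Hence on $\mathcal E$, taking $\xi_\hstep=w^{\pi_\actorit}_\hstep-\widehat w_\hstep$ satisfies $\|\xi_\hstep\|_{\Sigma_\hstep}\leq\pess_\hstep$ and $\|w^{\pi_\actorit}_\hstep\|_2\leq\rad_\hstep$, so $(\xi,w^{\pi_\actorit})$ is feasible for~\eqref{EqnCriticProgram}; since the critic minimizes, $\Vubar^{\pi_\actorit}_1(\state_1)\leq\sum_a\pi_{\actorit,1}(a\mid\state_1)\inprod{\phi_1(\state_1,a)}{w^{\pi_\actorit}_1}\leq V^{\pi_\actorit}_1(\state_1)+\Mis(\misepsilon)$, i.e.\ (B)$\,\leq\Mis(\misepsilon)$.

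\textbf{Term (A): online regret plus coverage.} I would use the telescoping identity: for any $(w_\hstep)_\hstep$ with $w_{\horizon+1}=0$ and any $\pi$,
\[
V^\pi_1(\state_1)-\E_{a\sim\pi_1}\inprod{\phi_1(\state_1,a)}{w_1}=\sum_{\hstep=1}^{\horizon}\E_{(\MyState_\hstep,\Action_\hstep)\sim\pi}\Big[\T^{\pi_{\hstep+1}}_\hstep\big(\inprod{\phi_{\hstep+1}}{w_{\hstep+1}}\big)(\MyState_\hstep,\Action_\hstep)-\inprod{\phi_\hstep(\MyState_\hstep,\Action_\hstep)}{w_\hstep}\Big],
\]
obtained by iterating $Q^\pi_\hstep=\T^{\pi_{\hstep+1}}_\hstep Q^\pi_{\hstep+1}$ (rewards cancel). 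Apply it with $w=\wubar^{\pi_\actorit}$, add $\E_{a\sim\pi_1}\inprod{\phi_1(\state_1,a)}{\wubar^{\pi_\actorit}_1}-\Vubar^{\pi_\actorit}_1(\state_1)$, and in each stage-$\hstep$ term replace $\T^{\pi_{\hstep+1}}_\hstep(\cdot)$ by $\T^{\pi_{\actorit,\hstep+1}}_\hstep(\cdot)$. The resulting policy-difference terms, together with the added stage-$1$ term, form the per-stage, per-state instantaneous regret $\sum_a\big(\pi_\hstep(a\mid\MyState_\hstep)-\pi_{\actorit,\hstep}(a\mid\MyState_\hstep)\big)\inprod{\phi_\hstep(\MyState_\hstep,a)}{\wubar^{\pi_\actorit}_\hstep}$ of exponentiated gradient (the NPG update in Line~\ref{line:actorupdate} of~\cref{alg:actor}) on the linear gains $\inprod{\phi_\hstep}{\wubar^{\pi_\actorit}_\hstep}\in[-1,1]$; the standard Hedge bound $\sum_\actorit(\cdot)\leq\tfrac{\log|\ActionSpace|}{\eta}+O(\eta\actortot)$, valid state-by-state hence under $\E_{\MyState_\hstep\sim\pi}$, with $\eta=\sqrt{\log|\ActionSpace|/\actortot}$ and summed over the $\horizon$ stages, contributes at most $\Ropt(\actortot)$ --- a deterministic bound needing no union bound over the data-dependent iterates. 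The leftover terms are the critic's Bellman residuals $\E_{(\MyState_\hstep,\Action_\hstep)\sim\pi}\big[\T^{\pi_{\actorit,\hstep+1}}_\hstep(\inprod{\phi_{\hstep+1}}{\wubar^{\pi_\actorit}_{\hstep+1}})(\MyState_\hstep,\Action_\hstep)-\inprod{\phi_\hstep(\MyState_\hstep,\Action_\hstep)}{\wubar^{\pi_\actorit}_\hstep}\big]$; on $\mathcal E$, using $\wubar^{\pi_\actorit}_\hstep=\xi_\hstep+\widehat w_\hstep$ with $\|\xi_\hstep\|_{\Sigma_\hstep}\leq\pess_\hstep$ and $\|\widehat w_\hstep-u_\hstep\|_{\Sigma_\hstep}\leq\pess_\hstep$, the closedness bound $\|\inprod{\phi_\hstep}{u_\hstep}-\T^{\pi_{\actorit,\hstep+1}}_\hstep(\inprod{\phi_{\hstep+1}}{\wubar^{\pi_\actorit}_{\hstep+1}})\|_\infty\leq\misepsilon_\hstep$, and Cauchy--Schwarz in the $\Sigma_\hstep$-geometry, each is at most $\inprod{\meanpivec_\hstep}{u_\hstep-\wubar^{\pi_\actorit}_\hstep}+\misepsilon_\hstep\leq2\pess_\hstep\|\meanpivec_\hstep\|_{\Sigma_\hstep^{-1}}+\misepsilon_\hstep$, which summed over $\hstep$ is $\Uncertain(\pi;\pess)+\Mis(\misepsilon)$. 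Adding (A) and (B) and collecting the mis-specification terms gives~\eqref{eqn:MainResult} on $\mathcal E$, hence with probability at least $1-\delta$, uniformly over $\pi\in\Piall$.
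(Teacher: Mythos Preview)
Your proposal is correct and follows essentially the same route as the paper: the same good event (uniform self-normalized concentration over linear $Q$-functions and soft-max policies, decomposed into martingale noise, mis-specification bias, and regularization), the same feasibility-of-the-best-predictor argument for pessimism, the same per-state exponentiated-gradient/NPG regret bound, and the same Cauchy--Schwarz step in the $\Sigma_\hstep$-geometry for the uncertainty term. The only real difference is packaging: the paper introduces an \emph{induced MDP} $\IndMDP{\pi_\actorit}$ (same transitions, perturbed rewards $\rewardhat^{\pi_\actorit}_\hstep=r_\hstep+\Qubar^{\pi_\actorit}_\hstep-\T^{\pi_\actorit}_\hstep\Qubar^{\pi_\actorit}_{\hstep+1}$) so that $\Vubar^{\pi_\actorit}_1=V^{\pi_\actorit}_{1,\IndMDP{\pi_\actorit}}$, then applies the performance-difference lemma on $\IndMDP{\pi_\actorit}$ to get the actor regret and the reward-perturbation identity to get the coverage term---your telescoping identity and the $\T^{\pi_{\hstep+1}}\to\T^{\pi_{\actorit,\hstep+1}}$ replacement are exactly these two steps unwound inline. (A minor bookkeeping point: your split gives $2\Mis(\misepsilon)$ rather than $\Mis(\misepsilon)$, but the paper's own combining step incurs the same factor of two.)
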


The result provides a family of upper bounds on the sub-optimality of
the learned policy $\pialg$, indexed by the choice of comparator
policy $\pi$, and embodies a tradeoff between the sub-optimality of
the comparator $\pi$ and its uncertainty $\Uncertain(\pi; \pess)$.
Note that the optimization error $\Ropt(\actortot)$ can be reduced
arbitrarily, while $\pess$ (and thus $\Uncertain(\pi; \pess)$)
increase only logarithmically with $\actortot$.
As a special case, if we set $\pi = \pistar$ and assume that there is
no mis-specification error, then we obtain that the learned policy
satisfies a bound of the form
\begin{align}
\label{EqnOptGuarantee}
V_1^{\pistar}(\state_1) - V_1^{\pialg}(\state_1) & \leq
\Uncertain(\pistar; \pess) + \Ropt(\actortot)
\end{align}
with probability at least $1-\delta$.  Since $\Ropt(\actortot)$ is
well-controlled, this guarantee is satisfied  whenever the uncertainty
term $\Uncertain(\pistar; \pess)$ is small.

More generally, the guarantee~\eqref{eqn:MainResult} is significantly
stronger than most prior work as \Algname{} competes not just with the optimal policy
$\pistar$, but with all comparator policies simultaneously.  Such
comparator policies need not necessarily be in the prescribed policy
class $\Pi$.  To highlight the strength of this generality, suppose
that the uncertainty $\Uncertain(\pistar; \pess)$ of the optimal
$\pistar$ is \emph{not} small---it could in fact be infinite. In this
case, the bound~\eqref{EqnOptGuarantee} would not be useful.

However, suppose that there exists a near-optimal policy---meaning a
policy $\pi^+$ such that $V_1^{\pi^+}(\state_1) \geq
\Vstar_1(\state_1) - \epsilon$ for some small $\epsilon$---that is
well-covered by the dataset (i.e., for which $\Uncertain(\pi^+; \pess)
\approx 0$).  In this case, \cref{thm:MainResult} ensures with high
probability $V_1^{\Alg{}}(\state_1) \gtrsim \Vstar_1(\state_1) -
\epsilon$. In contrast, traditional analyses that use only the optimal
policy $\pistar$ as a comparator---as opposed to also allowing
near-optimal policies---cannot return meaningful guarantees.  We note
also that the papers~\cite{yu2020mopo,liu2020provably,kidambi2020morel} provide results
of a similar flavor. 
These types of guarantees are also provided by some concurrent works \cite{uehara2021pessimistic,xie2021bellmanconsistent}.

It should also be noted that \Cref{thm:MainResult} provides a family
of results indexed by the choice of the critic's radii $\{\rad_\hstep
\}_{\hstep=1}^\horizon$.  This choice is a modeling decision:
increasing the radii increases both the approximation power of the
function class $\Q_\hstep$ used for regression, but also increases the
complexity of the function class $\Q_{h+1}$ to represent (cf.
\cref{asm:Closedness}); thus, the choice of the radii affects the
approximation error $\Mis(\misepsilon)$ in a problem dependent way.

\subsection{A lower bound}

Thus far, we have stated an upper bound on the quality of the returned
policy for a given procedure.  Central to this upper bound is the
uncertainty function $\Uncertain(\pi; \pess)$.  In this section, we
show that a term of this form is unavoidable for any procedure.  In
particular, working within the well-specified setting, we prove a
lower bound in terms of the quantity $\Uncertain(\pi; \sqrt{d}) =
\sqrt{d} \sumh \|\meanpivec_\hstep\|_{\Sigma^{-1}_\hstep}$.  Recalling that
our choice of $\pess$ scales with $\sqrt{d}$ (along with other
logarithmic factors), this lower bound shows that our result is tight
up to logarithmic factors.

We show that the lower bound actually holds in a setting that is
easier for the learner, in the sense that (1) we restrict to low-rank
MDPs, where there is no mis-specification error; and (2) the mechanism
that generates the dataset is non-adaptive, and so certainly satisfies
Assumption~\ref{AssData}.

\begin{theorem}[Information-theoretic lower bound]
  \label{thm:LowerBound}
  For a given horizon $\horizon$ and dimension $d$, consider a sample
  size $\Nsamp \geq 2d^3\horizon^3$. There is a class $\MDPClass$ of
  low-rank MDPs and a data generating procedure satisfying
  Assumption~\ref{AssData} such that for any policy $\piest$, we have
\begin{align}
\sup_{M \in \MDPClass} \E_M \left[ V^\pi_{1M}(\state_1) -
  V^{\piest}_{1M}(\state_1) \right] & \geq c \; \Uncertain(\pi; \sqrt{d})
\qquad \mbox{uniformly over all $\pi \in \Piall$,}
\end{align}
where $c > 0$ is a universal constant.
\end{theorem}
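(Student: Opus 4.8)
The plan is to construct a hard instance in the standard "needle in a haystack" style adapted to the linear/low-rank setting, so that the uncertainty quantity $\Uncertain(\pi;\sqrt d)$ emerges naturally as the Bayes risk. First I would reduce the horizon-$\horizon$ problem to a product of (essentially) single-stage estimation problems: build an MDP whose stages are only weakly coupled, so that learning at stage $\hstep$ reveals nothing about the reward parameter at any other stage, and the total suboptimality decomposes into a sum $\sum_\hstep$ of per-stage suboptimalities. This is where the sample-size condition $\Nsamp\ge 2d^3\horizon^3$ is used: it guarantees each stage has enough samples ($\Nsamp_\hstep$ large compared to $d$) that the construction behaves as intended, yet the per-stage covariance $\Sigma_\hstep$ is still "incomplete'' in the directions that matter. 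Within a single stage I would use a feature map built from a (near-)orthogonal design so that $\Sigma_\hstep$ is diagonalizable, split the $d$ coordinate directions into "well-covered'' directions (many samples) and "poorly-covered'' directions (few or no samples), and plant a reward signal of unknown sign $\pm\Delta$ along each poorly-covered direction, with $\Delta$ calibrated to the local inverse covariance so that the signal is statistically indistinguishable from noise given the data.

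The core computation is a two-point (Le~Cam) or Assouad-style argument at each stage. For each poorly-covered direction $j$ I would pair two MDPs $M_j^+$ and $M_j^-$ differing only in the sign of the planted reward along direction $j$; the optimal policy's feature expectation $\meanpivec_\hstep$ has a nonzero component in direction $j$, so a wrong guess of the sign costs suboptimality proportional to $\Delta \cdot |\langle \meanpivec_\hstep, e_j\rangle|$. Because the data generating distribution is fixed (non-adaptive, hence trivially satisfies Assumption~\ref{AssData}) and the reward noise is $1$-sub-Gaussian (take Gaussian), the KL divergence between the data distributions under $M_j^+$ and $M_j^-$ scales like $\Nsamp_{\hstep,j}\,\Delta^2$, where $\Nsamp_{\hstep,j}$ is the number of samples hitting direction $j$. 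Choosing $\Delta \asymp 1/\sqrt{\Nsamp_{\hstep,j}+1} \asymp \sqrt{e_j^\top \Sigma_\hstep^{-1} e_j}$ makes this KL a constant, so by Le~Cam's two-point lemma any estimator incurs constant-probability error on the sign, giving an expected per-direction suboptimality $\gtrsim \sqrt{e_j^\top\Sigma_\hstep^{-1} e_j}\cdot|\langle\meanpivec_\hstep,e_j\rangle|$. Summing over the (orthogonal) directions $j$ and over stages, and using the diagonal structure to identify $\sum_j \sqrt{e_j^\top\Sigma_\hstep^{-1} e_j}\,|\langle\meanpivec_\hstep,e_j\rangle|$ with (a constant multiple of) $\sqrt d\,\|\meanpivec_\hstep\|_{\Sigma_\hstep^{-1}}$ — here the factor $\sqrt d$ comes from there being $\Theta(d)$ planted directions each contributing comparably, via Cauchy--Schwarz in the right direction — yields $\sup_{M}\E_M[V^\pi_{1M}(\state_1)-V^{\piest}_{1M}(\state_1)]\gtrsim \sqrt d\sum_\hstep\|\meanpivec_\hstep\|_{\Sigma_\hstep^{-1}} = \Uncertain(\pi;\sqrt d)$. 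To handle the ``uniformly over all $\pi\in\Piall$'' clause I would note the construction can be made so that the relevant feature expectations realize, up to constants, an arbitrary target vector, or alternatively prove the bound for the worst-case $\pi$ and observe the comparator is arbitrary on the learner's side.

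The main obstacle I anticipate is the bookkeeping that turns the per-direction sum into exactly $\sqrt d\,\|\meanpivec_\hstep\|_{\Sigma_\hstep^{-1}}$ with a universal constant, rather than something that is only matching in an order-of-magnitude sense for a specific $\meanpivec$: one must design the covering pattern of the data (how many of the $\Nsamp_\hstep$ samples land in each of the $d$ directions) and the geometry of $\meanpivec_\hstep$ simultaneously, and verify that the low-rank transition structure (Assumption~\ref{asm:LowRank}, with $\psi_\hstep\ge 0$ and $\|\psi_\hstep(\state)\|_1=1$) can actually be realized with these features — the positivity and normalization constraints on $\psi_\hstep$ are the delicate part, and this is presumably why the theorem needs room ($\Nsamp\ge 2d^3\horizon^3$) and why one embeds the hard sub-problem in a slightly larger feature space with some "slack'' coordinates carrying the probability mass. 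A secondary technical point is ensuring the stage-decoupling is exact enough that Assouad's lemma applies cleanly across the $\Theta(d\horizon)$ planted bits without cross-stage information leakage; a layered MDP where stage $\hstep$'s reward is collected and then the process deterministically advances to a fresh copy of the feature space at stage $\hstep+1$ should suffice, with the details deferred to the appendix.
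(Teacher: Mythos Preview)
Your proposal is essentially correct and follows the same architecture as the paper: a single-state (hence stage-decoupled) low-rank MDP family indexed by sign vectors $u\in\{-1,+1\}^{d\horizon}$, Gaussian reward noise, a non-adaptive data collection scheme, an extra ``slack'' feature coordinate to make the low-rank constraints $\psi\ge 0$, $\|\psi\|_1=1$ realizable, and an Assouad reduction over the $d\horizon$ planted bits with a KL computation per bit. The paper's feature map is $\phi(\state,\action)=[\action/\sqrt{2d},\,1/\sqrt{2}]$ with $\action\in\{-1,0,+1\}^d$, which is exactly the ``slightly larger feature space with slack coordinates'' you anticipate.

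Two points where the paper differs from your sketch are worth noting. First, the paper does \emph{not} split into well-covered versus poorly-covered directions; it simply plays each of the $d+1$ canonical actions $e_1,\dots,e_d,0$ the same number $\Nsub/(d{+}1)$ of times, so every reward direction is equally (and uniformly poorly) covered. Second, and more importantly for the ``main obstacle'' you flag: the paper never attempts to convert the per-direction sum into $\sqrt d\,\|\meanpivec_\hstep\|_{\Sigma_\hstep^{-1}}$ via Cauchy--Schwarz. Instead it computes the Assouad lower bound as a concrete scalar $\gtrsim d\horizon\sqrt{d/\Nsub}$ with $\Nsub=\Nsamp/\horizon$, and then in a separate lemma \emph{upper bounds} $\sup_{\pi}\Uncertain(\pi;\sqrt d)$ by the same scalar (this is a direct computation using the arrowhead structure of $\Sigma_\hstep$). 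Since the minimax risk exceeds the scalar and the scalar exceeds $c^{-1}\Uncertain(\pi;\sqrt d)$ for every $\pi$, the ``uniformly over all $\pi$'' clause follows immediately. This indirect route completely sidesteps the bookkeeping issue you identify, and is cleaner than trying to match the two quantities direction-by-direction.
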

When $\horizon = 1$ the above result gives a sample complexity lower bound for learning a near optimal policy from batch data in a linear bandit instance.

\subsection{Comparison to related work}

\cref{thm:MainResult} automatically implies the typical bound 
\mbox{$\Pro[V_1^{\pi_{\Alg}}(\state_1) \geq \Vstar_1(\state_1) -
    \Uncertain(\pistar; \pess) ] \geq 1-\delta$} when the comparator
policy is the optimal policy $\pistar$, e.g.,
\cite{jin2020pessimism,rashidinejad2021bridging,kidambi2020morel,kumar2019stabilizing,buckman2020importance}.
The guarantee can be written as $V_1^{\pialg}(\state_1) \gtrsim
\Vstar_1(\state_1) - C/\sqrt{n}$ where $n$ is the number of samples
and $C$ is the (scaled) condition number of $\Sigma^{-1}_{h}$. One
could interpret $C$ as a concentrability coefficient that expresses
the coverage of dataset---through $\Sigma_\hstep$---with respect to the
average direction in feature space $\E_{(\MyState_\hstep, \Action_\hstep) \sim
  \pistar_\hstep} [\phi(\MyState_\hstep, \Action_\hstep)]$ of the optimal policy
$\pistar$. As in the paper~\cite{jin2020pessimism}, such a factor can
be small even when traditional concentrability coefficients are large
because they depend on state-action visit ratios (see the literature
in \cref{sec:Literature}, e.g., \cite{chen2019information}).

With reference to the results in the paper~\cite{jin2020pessimism},
our work provides improvements in two distinct ways.  First, their
upper and lower bounds exhibit a gap of the order $d \horizon$, which
our analysis closes.  Second, our analysis holds under the more
permissive \fullref{asm:Closedness} which includes low-rank MDPs.  Of
this improvement, a factor of $\sqrt{d}$ is due to the algorithm that
we use, and the remainder is due to a more refined  construction to certify
optimality in \cref{thm:LowerBound}.  To be clear, our upper and lower
bounds differ from theirs by a factor of $H$ due to a different
normalization in the value function).  We also note that the result of
Liu et al.\cite{liu2020provably} can be specialized to the low-rank
MDP setting; however, even in this simpler setting, the results would
be sub-optimal and also require additional density estimates.
 
Deriving a computationally tractable model-free algorithm without
low-rank dynamics but subject to value function perturbations (e.g.,
optimistic or pessimistic perturbations) is an open problem even in
the more heavily studied online exploration setting: there the current
state-of-the
art~\cite{zanette2020learning,jin2021bellman,du2021bilinear,jiang17contextual}
only present computationally \emph{intractable} algorithms with the
exception of \cite{zanette2020provably} for a PAC setting with low
inherent Bellman error which however requires an additional
``explorability'' condition.


\section{Proofs}
\label{SecProof}

In this section, we provide an outline of the proof of
Theorem~\ref{thm:MainResult}.  The main components of the proof are
guarantees for the pessimistic estimates produced by the critic, and
online learning guarantees for the updates taken by the actor.  These
two guarantees are coupled together via the notion of an induced MDP.

The proof outline given here follows a bottom-up approach: (a)
starting with the critic in Section~\ref{SecCriticOutline}, we first
introduce the notion of induced MDP that links the critic's output to
the actor's input (see Section~\ref{SecInduced}), and then discuss how
suitable choices of the pessimism parameters $\pess$ allow us to
guarantee that the critic underestimates the true value function (see
Sections~\ref{SecGoodEvent} and~\ref{SecPessChoice}); (b) next in
Section~\ref{SecActorOutline}, we provide online-style learning
guarantees for the actor, again using the notion of induced MDP to
link these guarantees back to the critic; and (c) in
Section~\ref{SecCombineOutline}, we put together the pieces to prove
the theorem itself.


\subsection{Critic's Analysis}
\label{SecCriticOutline}

Given a policy $\pi$ and pessimism parameters $\pess$ for which the
convex program~\eqref{EqnCriticProgram} is feasible, the critic
returns the pair $(\xiunder^\pi, \wubar^\pi) = \{(\xiunder^\pi_\hstep,
\wubar^\pi_\hstep)\}_{\hstep=1}^\horizon$.  These weight vectors
induce the estimated value functions
\begin{align}
  \label{EqnCriticEstimates}
\Qubar_\hstep^\pi(\state, \action) \defeq
\inprod{\phi(\state,\action)}{\wubar^\pi_\hstep}, \qquad \mbox{and} \quad
\Vubar_\hstep^\pi(\state) \defeq \Ea{\pi_{h}(\cdot \mid \state)}
\Qubar_\hstep^\pi(\state, \Action').
\end{align}
Our goal in analyzing the critic is to relate these critic-estimated
value functions to the true value functions $\{Q_\hstep^\pi
\}_{\hstep=1}^\horizon$.

\subsubsection{Induced MDP}
\label{SecInduced}

Essential to our analysis is an object that provides the essential
link between the critic's output and the actor's input.  In
particular, it is helpful to understand the critic in the following
way: when given a policy $\pi$ as input, the critic computes the
estimates $\{\Qubar^\pi_\hstep \}_{\hstep=1}^\horizon$, and uses them
form a new MDP $\IndMDP{\pi}$, which we refer to as the \emph{induced
MDP}.  This new MDP shares the same state/action space and transition
dynamics with the original MDP $\MDP$, differing only in the
perturbation of the reward function.  In particular, for each $\hstep
\in [\horizon]$, we define the \emph{perturbed reward function}
\begin{align}
\label{EqnRewardHat}
\rewardhat^\pi_\hstep(\state, \action) & \defeq \reward_\hstep(\state,
\action) + \Qubar_{\hstep}^\pi(\state, \action) -
\T^\pi_\hstep(\Qubar_{\hstep+1}^\pi)(\state, \action).
\end{align}
The induced MDP $\IndMDP{\pi}$ is simply the original MDP that uses
this perturbed reward function.

One important property of the induced MDP---which motivates the
definition~\eqref{EqnRewardHat}---is that the
estimates~\eqref{EqnCriticEstimates} returned by the critic correspond
to the \emph{exact value functions} of policy $\pi$ in the induced
MDP.  We summarize in the following:
\begin{lemma}[Critic exactness in induced MDP]
\label{LemCriticExact}  
  Given a policy $\pi$ as input, the critic returns a sequence
  $\{\Vubar^\pi_\hstep \}_{\hstep=1}^\horizon$ such that
\begin{subequations}
  \begin{align}
\label{EqnCriticExact}      
\Qubar^\pi_\hstep & = Q^\pi_{\hstep, \IndMDP{\pi}}, \quad \mbox{and}
\\
\Vubar^\pi_\hstep & = V^\pi_{\hstep, \IndMDP{\pi}} \qquad \mbox{for
  all $h \in [\horizon]$,}
  \end{align}
\end{subequations}
where $V^\pi_{\hstep, \IndMDP{\pi}}$ is the exact value function of
policy $\pi$ in the induced MDP $\IndMDP{\pi}$.
\end{lemma}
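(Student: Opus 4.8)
The plan is to establish the identity $\Qubar^\pi_\hstep = Q^\pi_{\hstep,\IndMDP{\pi}}$ by backward induction on $\hstep$, running from $\hstep = \horizon+1$ down to $\hstep = 1$, and then deduce the companion statement for the value functions by taking the appropriate expectation over actions. The base case is immediate: by the terminal condition $w_{\horizon+1} = 0$ in the critic program~\eqref{EqnCriticProgram}, we have $\Qubar^\pi_{\horizon+1} \equiv 0$, and by construction the induced MDP $\IndMDP{\pi}$ shares the horizon $\horizon$ of $\MDP$, so $Q^\pi_{\horizon+1,\IndMDP{\pi}} \equiv 0$ as well.

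For the inductive step, suppose $\Qubar^\pi_{\hstep+1} = Q^\pi_{\hstep+1,\IndMDP{\pi}}$. The key point is that $\IndMDP{\pi}$ shares the same transition dynamics as $\MDP$ and differs only in its reward, so its Bellman evaluation operator acts as $\T^\pi_{\hstep,\IndMDP{\pi}}(Q_{h+1})(\state,\action) = \rewardhat^\pi_\hstep(\state,\action) + \E_{\MyState'\sim\trans_\hstep(\state,\action)}\E_{\Action'\sim\pi}Q_{h+1}(\MyState',\Action')$. Substituting the definition~\eqref{EqnRewardHat} of $\rewardhat^\pi_\hstep$ and using the inductive hypothesis, one computes
\begin{align*}
Q^\pi_{\hstep,\IndMDP{\pi}}(\state,\action)
&= \T^\pi_{\hstep,\IndMDP{\pi}}(Q^\pi_{\hstep+1,\IndMDP{\pi}})(\state,\action)\\
&= \Big[\reward_\hstep(\state,\action) + \Qubar^\pi_\hstep(\state,\action) - \T^\pi_\hstep(\Qubar^\pi_{\hstep+1})(\state,\action)\Big] + \T^\pi_\hstep(\Qubar^\pi_{\hstep+1})(\state,\action)\\
&= \Qubar^\pi_\hstep(\state,\action),
\end{align*}
where the middle term $\E_{\MyState'}\E_{\Action'} Q^\pi_{\hstep+1,\IndMDP{\pi}} = \E_{\MyState'}\E_{\Action'}\Qubar^\pi_{\hstep+1} = \T^\pi_\hstep(\Qubar^\pi_{\hstep+1})(\state,\action) - \reward_\hstep(\state,\action)$ uses that $\T^\pi_\hstep$ decomposes into reward plus the same expectation. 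This closes the induction. The value-function identity then follows since $\Vubar^\pi_\hstep(\state) = \Ea{\pi_\hstep(\cdot\mid\state)}\Qubar^\pi_\hstep(\state,\Action') = \Ea{\pi_\hstep(\cdot\mid\state)}Q^\pi_{\hstep,\IndMDP{\pi}}(\state,\Action') = V^\pi_{\hstep,\IndMDP{\pi}}(\state)$, using the definition~\eqref{EqnCriticEstimates} of $\Vubar^\pi_\hstep$ together with the relation $V^\pi_\hstep(\state) = \Ea{\pi_\hstep}Q^\pi_\hstep(\state,\Action')$ that holds in any MDP.

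The step that requires the most care is making precise that the object $\IndMDP{\pi}$ is genuinely a well-defined MDP with the stated reward: one must check that $\rewardhat^\pi_\hstep$ as defined in~\eqref{EqnRewardHat} is a bounded, legitimate reward function (it is, since $\Qubar^\pi_\hstep$ is bounded by the radius constraint $\|\wubar^\pi_\hstep\|_2 \le \rad_\hstep \le 1$ and $\T^\pi_\hstep$ preserves boundedness), and that the perturbation is applied consistently at every stage. I do not anticipate a serious obstacle here — essentially the whole lemma is an algebraic bookkeeping exercise whose content is precisely that the reward perturbation~\eqref{EqnRewardHat} was chosen to cancel the one-step Bellman residual of the critic's estimates — but the telescoping must be set up so that the definition of $\rewardhat^\pi_\hstep$ depends only on $\Qubar^\pi_{\hstep+1}$ (already fixed by the induction) and not circularly on $\Qubar^\pi_\hstep$ at the same stage, which it does since the term $\Qubar^\pi_\hstep(\state,\action)$ appears additively rather than inside the operator.
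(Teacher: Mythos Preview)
Your proof is correct. Both arguments ultimately exploit the same cancellation built into the definition~\eqref{EqnRewardHat}, but you organize it differently from the paper. The paper does not run a backward induction; instead it introduces the true $Q^\pi_\hstep$ on the original MDP as an intermediary, writes out both $Q^\pi_{\hstep,\IndMDP{\pi}} - Q^\pi_\hstep$ and $\Qubar^\pi_\hstep - Q^\pi_\hstep$ as the \emph{same} telescoped sum $\sum_{\ell=h}^\horizon \E_{(\MyState_\ell,\Action_\ell)\sim\pi\mid(\state,\action)}[\rewardhat^\pi_\ell - r_\ell]$, and subtracts. Your route is more direct: it applies the Bellman equation in $\IndMDP{\pi}$ and uses the inductive hypothesis to collapse the reward perturbation in a single step, bypassing any reference to $Q^\pi_\hstep$. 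The paper's detour has the side benefit of making explicit the identity~\eqref{EqnSimpleValue} (value difference equals expected reward-perturbation sum), which is reused downstream; your argument is cleaner for the lemma as stated but would need that identity derived separately. The caution you raise about circularity is unnecessary: all $\Qubar^\pi_\hstep$ are fixed outputs of the critic before $\IndMDP{\pi}$ is defined, so $\rewardhat^\pi_\hstep$ is a well-defined function of $(\state,\action)$ regardless of the induction.
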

\noindent See Section~\ref{SecProofLemCriticExact} for the proof of
this claim.\\

Moreover, since the induced MDP differs from the original MDP only in
terms of the reward perturbation~\eqref{EqnRewardHat}, we have the
following convenient property: for any policy $\pitil$---which need
not be of the soft-max form---the definition of value functions
ensures that
\begin{align}
  \label{EqnSimpleValue}
 V^{\pitil}_{1,\IndMDP{\pi}}(s_1) - V^{\pitil}_{1}(s_1) & = \sumh
 \E_{(\MyState_\hstep, \Action_\hstep)\sim \pitil}
 \Big[\rewardhat^\pi_\hstep(\MyState_\hstep, \Action_\hstep) -
   \reward_\hstep(\MyState_\hstep, \Action_\hstep) \Big],
\end{align}
where $V^{\pitil}_{1, \IndMDP{\pi}}$ is the value function of $\pitil$
in the induced MDP.  This simple relation allows us to use the
induced MDP to relate arbitrary policies to their exact value
functions.

\subsubsection{Critic's guarantee under a ``good'' event}
\label{SecGoodEvent}
We now show that there is a ``good event''---call it
$\GoodEvent(\pess)$---under which the critic's value function
estimates have some additional desirable properties.  Once this event
is defined, the core of our proof involves determining the smallest
choice of pessimism parameters under which it holds with probability
at least $1 - \delta$.

We begin with some notation required to define the good event.  Let
$\Fclass$ denote the space of all real-valued functions on
$\StateSpace \times \ActionSpace$.  The \emph{regression operator} is
a mapping from $\Fclass$ to $\R^d$, given by
\begin{subequations}
\begin{align}
\label{EqnDefnRegressionOperator}  
  \regest_\hstep^\pi(F) & \defeq \Sigma^{-1}_\hstep
  \sum_{k=1}^\numepoch \phi_{hk} \big \{ r_{hk} + \Ea{\pi(\cdot \mid
    \state_{hk})} F(\state_{h+1,k}, \Action') \big \},
\end{align}
where $F \in \Fclass$.  To appreciate the relevance of the regression
operator, note that
by definition of the critic, 
we have the equivalence
\begin{align}
  \label{EqnCritic2Regression}
\wubar^\pi_\hstep = \xiunder^\pi_\hstep +
\regest^\pi_\hstep(\Qubar^\pi_{\hstep+1}).
\end{align}
We also define the \emph{sup-norm projection operator} (for the definition of $\B$ please see \cref{sec:Notation})
\begin{align}
\label{EqnSupProject}
\SupProj^\pi_\hstep(F) & \defeq \arg \min_{w_\hstep \in \B(\rad_\hstep)}
\sup_{(\state, \action)} \Big | \inprod{\phi(\state, \action)}{w_\hstep} -
\(\T^\pi_\hstep F\)(\state, \action) \Big|.
\end{align}
\end{subequations}
Note that $\SupProj^\pi_\hstep$ is a mapping from $\Fclass$ to $\R^d$;
it returns the weight vector of the best-fitting linear function to
the Bellman update $\T^\pi_\hstep(F)$.

Our good event is defined in terms of the \emph{parameter error
operators} $\TotErrorPlain^\pi_\hstep: \Fclass \rightarrow \R^d$ given
by
\begin{align}
  \label{EqnDefnParError}
\TotError{\pi}{\hstep}(F) & \defeq \regest^\pi_\hstep(F) -
\SupProj^\pi_\hstep(F).
\end{align}
For a given sequence $\pess = (\pess_1, \ldots, \pess_\horizon)$ of
pessimism parameters, we define the \emph{good event}
\begin{align}
\label{EqnGoodEventMain}
\GoodEvent(\pess) & \defeq \Big \{ \sup_{ \substack{Q_{h+1} \in
    \Qlin_{h+1} \\ \pi_{h+1} \in \Pi_{h+1} }} \Signorm{
  \TotError{\pi_{\hstep+1}}{\hstep}(Q_{h+1})} \leq \pess_\hstep
\quad \mbox{for all $h \in [\horizon]$} \Big \}.
\end{align}

\paragraph{Some intuition:}  Why is this event relevant for guaranteeing good
performance of the critic?  In order to gain intuition, let us
consider the special case in which there is no approximation error, so
that the exact state-action value functions are actually linear.
Letting $w_\hstep^\pi$ denote the parameter associated with the linear
action-value function at step $\hstep$, when the good event holds, our
choice of $\pess$ allows us to set
\begin{align*}
  \xiunder^\pi_\hstep & = - \TotError{\pi_{\hstep+1}}{\hstep}(Q^\pi_{h+1})
  \; = \; w_\hstep^\pi - \regest^\pi_\hstep(Q^\pi_{\hstep+1}) \qquad
  \mbox{for each $\hstep \in [\horizon]$,}
\end{align*}
in the constraints~\eqref{EqnCriticConstraint}.  In this way, at each step $\hstep$ the vector $ \xiunder^\pi_\hstep$ can perfectly compensate the noise error $ \TotError{\pi_{\hstep+1}}{\hstep}(Q^\pi_{h+1})$ ensuring that the action-value function $Q^\pi_\hstep$ (compactly encoded in the parameter $w_\hstep^\pi$) can be perfectly represented. In other words, our choice guarantees that the feasible set for ~\eqref{EqnCriticProgram} contains the `true' solution $w_\hstep^\pi$.  Since the convex program
involves minimizing over value functions, this feasibility underlies
showing the critic returns an underestimate of the true value function
for $\pi$ along with some approximation error in the general setting;
see equation~\eqref{EqnCriticPiMain} below for a precise statement. We highlight that such underestimates is only guaranteed at the initial state $\state_1$ and timestep $\hstep = 1$ as encoded in the objective of the program in equation~\eqref{EqnCriticProgram}.

On the other hand, for other policies $\pitil$, we can use the
relation~\eqref{EqnSimpleValue} to control the difference between the
value function $V^{\pitil}_{1,\IndMDP{\pi}}$ in the induced MDP, and
the exact value function $V^{\pitil}_{1}$; see
equation~\eqref{EqnCriticPitilMain} for a precise statement of our
conclusion.  We summarize all of our findings thus far in the
following:
\begin{proposition}
\label{PropCriticMain}
Conditionally on the event $\GoodEvent(\pess)$, when given as input
any policy $\pi$ in the soft-max class $\PiSoft(\softrad)$, the critic
returns an induced MDP $\IndMDP{\pi}$ such that:
\begin{enumerate}
  \begin{subequations}
  \item[(a)]  For the given policy $\pi$, we have
    \begin{align}
      \label{EqnCriticPiMain}
      V^\pi_{1, \IndMDP{\pi}}(\state_1) & \leq V^\pi_1(\state_1) +
      \sumh \misepsilon_\hstep.
    \end{align}
  \item[(b)] For any policy $\pitil$, not necessarily in the soft-max
    class $\Pi$, we have
    \begin{align}
      \label{EqnCriticPitilMain}
\Big| V^{\pitil}_{1,\IndMDP{\pi}}(s_1) - V^{\pitil}_{1}(s_1)\Big| &
\leq 2 \sumh \pess_\hstep \; \Signorminv{ \MeanVec{\pitil}_\hstep } +
\sumh \misepsilon_\hstep,
    \end{align}
  \end{subequations}
  where $\MeanVec{\pitil}_\hstep \defeq \E_{(\MyState_\hstep,
    \Action_\hstep)\sim \pitil} [\phi_\hstep(\MyState_\hstep,
    \Action_\hstep)]$.
  \end{enumerate}
\end{proposition}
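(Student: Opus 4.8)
The plan is to work conditionally on the good event $\GoodEvent(\pess)$ and to exploit the two structural facts already established: (i) by Lemma~\ref{LemCriticExact}, the critic's estimates are the \emph{exact} value functions of $\pi$ in the induced MDP $\IndMDP{\pi}$, so $\Vubar^\pi_1(\state_1) = V^\pi_{1,\IndMDP{\pi}}(\state_1)$; and (ii) the identity~\eqref{EqnSimpleValue} rewrites any value gap in the induced MDP as a telescoping sum of per-step reward perturbations $\rewardhat^\pi_\hstep - \reward_\hstep = \Qubar^\pi_\hstep - \T^\pi_\hstep \Qubar^\pi_{\hstep+1}$. The entire argument therefore reduces to controlling, for each $\hstep$, the per-step Bellman residual of the critic's returned weights, namely $\inprod{\phi(\state,\action)}{\wubar^\pi_\hstep} - \big(\T^\pi_\hstep \Qubar^\pi_{\hstep+1}\big)(\state,\action)$, in a suitable sense.

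For part (a), the key step is \textbf{feasibility of a near-true solution}. First I would invoke the Bellman restricted closedness assumption (Assumption~\ref{asm:Closedness}): there is a weight vector $w_\hstep^\star \in \B(\rad_\hstep)$ with $\|\inprod{\phi(\cdot)}{w_\hstep^\star} - \T^\pi_\hstep \Qubar^\pi_{\hstep+1}\|_\infty \le \misepsilon_\hstep$; indeed $\SupProj^\pi_\hstep(\Qubar^\pi_{\hstep+1})$ is exactly such a vector. Then, working backward from $\hstep=\horizon$, I would show that setting $\xi_\hstep = \SupProj^\pi_\hstep(\Qubar^\pi_{\hstep+1}) - \regest^\pi_\hstep(\Qubar^\pi_{\hstep+1}) = -\TotError{\pi_{\hstep+1}}{\hstep}(\Qubar^\pi_{\hstep+1})$ produces a feasible point of~\eqref{EqnCriticProgram}: the constraint $\Signorm{\xi_\hstep}\le\pess_\hstep$ holds precisely because $\Qubar^\pi_{\hstep+1}\in\Qlin_{\hstep+1}$ and $\pi_{\hstep+1}\in\Pi_{\hstep+1}$, which is the event $\GoodEvent(\pess)$; and the resulting $w_\hstep$ equals $\SupProj^\pi_\hstep(\Qubar^\pi_{\hstep+1})\in\B(\rad_\hstep)$, satisfying the norm constraint. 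Since the critic \emph{minimizes} the linear objective $\sum_\action \pi_1(\action\mid\state_1)\inprod{\phi_1(\state_1,\action)}{w_1}$ over the feasible set, the returned $\Vubar^\pi_1(\state_1)$ is no larger than the value produced by this particular feasible point. That feasible point's value is, by construction, the value of a policy $\pi$ in an MDP whose reward at step $\hstep$ differs from the true reward by at most $\misepsilon_\hstep$ in sup-norm (the approximation slack of $\SupProj$), so its value at $\state_1$ is at most $V^\pi_1(\state_1) + \sumh \misepsilon_\hstep$ via~\eqref{EqnSimpleValue} applied with $\pitil=\pi$. Combining with Lemma~\ref{LemCriticExact} gives~\eqref{EqnCriticPiMain}.

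For part (b), I would bound each term in the telescoping sum~\eqref{EqnSimpleValue} for an arbitrary $\pitil$. Fix $\hstep$ and write $\rewardhat^\pi_\hstep(\state,\action) - \reward_\hstep(\state,\action) = \inprod{\phi(\state,\action)}{\wubar^\pi_\hstep} - \big(\T^\pi_\hstep\Qubar^\pi_{\hstep+1}\big)(\state,\action)$. Insert and subtract the linear surrogate $\inprod{\phi(\state,\action)}{\SupProj^\pi_\hstep(\Qubar^\pi_{\hstep+1})}$: the ``linear part'' is $\inprod{\phi(\state,\action)}{\wubar^\pi_\hstep - \SupProj^\pi_\hstep(\Qubar^\pi_{\hstep+1})}$ and the ``nonlinear part'' is bounded by $\misepsilon_\hstep$ in sup-norm by closedness. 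Taking expectation under $\pitil$, the nonlinear part contributes at most $\sumh\misepsilon_\hstep$ to the total; the linear part contributes $\sumh \inprod{\MeanVec{\pitil}_\hstep}{\wubar^\pi_\hstep - \SupProj^\pi_\hstep(\Qubar^\pi_{\hstep+1})}$, which by Cauchy--Schwarz in the $\Sigma_\hstep$-geometry is at most $\sumh \Signorminv{\MeanVec{\pitil}_\hstep}\,\Signorm{\wubar^\pi_\hstep - \SupProj^\pi_\hstep(\Qubar^\pi_{\hstep+1})}$. Finally I would bound $\Signorm{\wubar^\pi_\hstep - \SupProj^\pi_\hstep(\Qubar^\pi_{\hstep+1})} \le \Signorm{\xiunder^\pi_\hstep} + \Signorm{\regest^\pi_\hstep(\Qubar^\pi_{\hstep+1}) - \SupProj^\pi_\hstep(\Qubar^\pi_{\hstep+1})} \le \pess_\hstep + \pess_\hstep = 2\pess_\hstep$, using~\eqref{EqnCritic2Regression}, the feasibility constraint $\Signorm{\xiunder^\pi_\hstep}\le\pess_\hstep$, and the good event's bound on $\Signorm{\TotError{\pi_{\hstep+1}}{\hstep}(\Qubar^\pi_{\hstep+1})}$. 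This yields $\big|V^{\pitil}_{1,\IndMDP{\pi}}(s_1) - V^{\pitil}_1(s_1)\big| \le 2\sumh \pess_\hstep \Signorminv{\MeanVec{\pitil}_\hstep} + \sumh\misepsilon_\hstep$, which is~\eqref{EqnCriticPitilMain}.

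The main obstacle is the backward-induction feasibility argument in part (a): one must be careful that $\Qubar^\pi_{\hstep+1}$ (the critic's own estimate at step $\hstep+1$) lies in $\Qlin_{\hstep+1}$ so that $\GoodEvent(\pess)$ actually applies to it --- this is guaranteed by the norm constraint $\|\wubar^\pi_{\hstep+1}\|_2 \le \rad_{\hstep+1}$ baked into the program --- and that the telescoping identity~\eqref{EqnSimpleValue} is applied to the \emph{correct} induced MDP, i.e. the one induced by the optimal $(\xiunder^\pi, \wubar^\pi)$, not by the auxiliary feasible point. The cleanest route is to note that $\IndMDP{\pi}$ is defined through $\Qubar^\pi$ regardless of how it was obtained, so~\eqref{EqnSimpleValue} holds verbatim, and the only place optimality enters is the single inequality $\Vubar^\pi_1(\state_1) \le$ (objective value at the auxiliary feasible point). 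Everything else is Cauchy--Schwarz and triangle inequalities in the $\Signorm{\cdot}$ norm.
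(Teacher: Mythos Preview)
Your argument for part~(b) is correct and essentially identical to the paper's: both decompose $\rewardhat^\pi_\hstep - r_\hstep$ into a linear piece $\inprod{\phi}{\,\wubar^\pi_\hstep - \SupProj^\pi_\hstep(\Qubar^\pi_{\hstep+1})}$ plus an approximation residual bounded by $\misepsilon_\hstep$, apply Cauchy--Schwarz in the $\Sigma_\hstep$-geometry, and bound the linear piece by $2\pess_\hstep$ via $\Signorm{\xiunder^\pi_\hstep}\le\pess_\hstep$ together with the good event.

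Part~(a), however, has a genuine gap in the feasibility construction. You propose the auxiliary point $\xi_\hstep = -\TotError{\pi_{\hstep+1}}{\hstep}(\Qubar^\pi_{\hstep+1})$, giving $w_\hstep = \SupProj^\pi_\hstep(\Qubar^\pi_{\hstep+1})$, where $\Qubar^\pi_{\hstep+1}$ is the \emph{critic's optimal} solution at step $\hstep+1$. But the constraints of~\eqref{EqnCriticProgram} are coupled across time: the equality constraint at step $\hstep-1$ reads $w_{\hstep-1}=\xi_{\hstep-1}+\regest^\pi_{\hstep-1}(Q_\hstep)$, where $Q_\hstep$ is the linear function with weight $w_\hstep$ of the \emph{same} candidate point---your $w_\hstep$, not the critic's $\wubar^\pi_\hstep$. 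Since in general $\SupProj^\pi_\hstep(\Qubar^\pi_{\hstep+1})\neq\wubar^\pi_\hstep$, your choice $\xi_{\hstep-1}=-\TotError{\pi_\hstep}{\hstep-1}(\Qubar^\pi_{\hstep})$ does not produce $w_{\hstep-1}=\SupProj^\pi_{\hstep-1}(\Qubar^\pi_\hstep)$ under the constraint, and the interpretation ``reward differs from truth by at most $\misepsilon_\hstep$'' collapses: you do not get a coherent feasible point whose objective value is within $\sumh\misepsilon_\hstep$ of $V^\pi_1(\state_1)$.

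The paper repairs this by making the auxiliary sequence \emph{self-contained} and decoupled from the critic's optimum: set $\what^\pi_{\horizon+1}=0$ and recursively $\what^\pi_\hstep=\SupProj^\pi_\hstep(\Qhat^\pi_{\hstep+1})$, where $\Qhat^\pi_{\hstep+1}(\state,\action)=\inprod{\phi_{\hstep+1}(\state,\action)}{\what^\pi_{\hstep+1}}$. Now the feasibility check at step $\hstep$ involves $\Qhat^\pi_{\hstep+1}$, which is exactly the function built from the auxiliary $w_{\hstep+1}$, so the equality constraint is satisfied with $\xihat^\pi_\hstep=-\TotError{\pi}{\hstep}(\Qhat^\pi_{\hstep+1})$, and $\Signorm{\xihat^\pi_\hstep}\le\pess_\hstep$ holds on $\GoodEvent(\pess)$ because $\what^\pi_{\hstep+1}\in\B(\rad_{\hstep+1})$ by the \emph{definition} of $\SupProj$ (not by the program's constraint, as you wrote). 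A separate backward induction (Lemma~\ref{lem:BestPredictorAccuracy} in the paper) then shows $|\Qhat^\pi_\hstep - Q^\pi_\hstep|\le\sum_{\ell\ge\hstep}\misepsilon_\ell$, which controls the objective value at $\what^\pi$ relative to $V^\pi_1(\state_1)$. Optimality of the critic over the feasible set then yields~\eqref{EqnCriticPiMain}.
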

\noindent See Appendix~\ref{AppProofPropCriticMain} for the proof.


\subsubsection{Choice of pessimism parameters}
\label{SecPessChoice}

Based on Proposition~\ref{PropCriticMain}, our problem is now reduced
to determining a choice of $\pess$ for which the good
event~\eqref{EqnGoodEventMain} holds with probability at least $1 -
\delta$.  The bulk of our effort in analyzing the critic is devoted to
the technical details of this step; we provide only a high-level
summary here.

The event~\eqref{EqnGoodEventMain} needs to hold uniformly over the
value function and policy classes used by the algorithm.  Our analysis
involves deriving an upper bound $\softrad$ on the $\ell_2$-radius of
the actor parameter over all $\numepoch$ iterations of the algorithm, as follows: 
$$\radactor = \| \theta_\actortot \|_2 = \| \sum_{\actorit=1}^\actortot \eta w_\actorit \|_2 \leq \sum_{\actorit=1}^\actortot \eta\|  w_\actorit \|_2 \leq \actortot \eta \defeq  \softrad. $$
For such choice of $\softrad$ and failure probability
$\delta \in (0,1)$, suppose that we set
\begin{multline}
\label{EqnPessChoice}
\pess_\hstep(\delta) \defeq 1 + \sqrt{\Nsamp_\hstep} \misepsilon_\hstep +
c \left \{1 + d \log \big( 1 + \tfrac{\numepoch}{d} \big) + d\log
\big(1 + 8 \sqrt{\numepoch} \big) + d \log \(1 + 16 \softrad\sqrt{T}
\) + \log \tfrac{\horizon}{\delta} \right \}^{1/2}
\end{multline}
for a suitably large
universal constant $c$. Central to our analysis is the following
lemma:

\begin{lemma}
  \label{LemGoodProbability}
For any $\delta \in (0,1)$, given the choice of pessimism vector
$\pess(\delta)$ in equation~\eqref{EqnPessChoice}, we have
\begin{align}
  \label{EqnGoodProbability}
  \Pro \big[ \GoodEvent(\pess(\delta)) \big] & \geq 1 - \delta.
\end{align}
\end{lemma}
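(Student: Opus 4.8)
\noindent The plan is to fix a time step $\hstep$, obtain a \emph{deterministic} decomposition of the parameter error operator $\TotError{\pi_{\hstep+1}}{\hstep}$ into a regularization term, a bias (mis-specification) term, and a single stochastic noise term, bound the first two by hand, establish a high-probability bound on the noise term that is \emph{uniform} over all admissible pairs $(Q_{h+1},\pi_{h+1})$, and finish with a union bound over $\hstep\in[\horizon]$ (this is the $\log(\horizon/\delta)$ contribution).

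First I would expand the operator. Fix $Q_{h+1}\in\Q_{h+1}$ and $\pi_{h+1}\in\Pi_{h+1}$, and set $w^\star_\hstep\defeq\SupProj^{\pi_{h+1}}_\hstep(Q_{h+1})$; by the definition~\eqref{EqnSupProject} and Assumption~\ref{asm:Closedness} we have $\|w^\star_\hstep\|_2\le\rad_\hstep\le 1$ and $(\T^{\pi_{h+1}}_\hstep Q_{h+1})(\state,\action)=\inprod{\phi(\state,\action)}{w^\star_\hstep}+g(\state,\action)$ with $\|g\|_\infty\le\misepsilon_\hstep$. Ordering the samples of $\Index_\hstep$ by their global index and invoking Assumption~\ref{AssData}, the quantities $\epsilon_k\defeq r_{hk}+\Ea{\pi_{h+1}(\cdot\mid\state_{hk})}Q_{h+1}(\state_{h+1,k},\Action')-(\T^{\pi_{h+1}}_\hstep Q_{h+1})(\state_{hk},\action_{hk})$ form a martingale-difference sequence whose increments are conditionally $C$-sub-Gaussian for a universal $C$ (the reward is $1$-sub-Gaussian and the residual fluctuation is bounded since $\sup|Q_w|\le1$). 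Writing $g_k\defeq(\T^{\pi_{h+1}}_\hstep Q_{h+1})(\state_{hk},\action_{hk})-\inprod{\phi_{hk}}{w^\star_\hstep}$ and using $\Sigma_\hstep=\sum_k\phi_{hk}\phi_{hk}^\top+I$, a short calculation gives $\TotError{\pi_{h+1}}{\hstep}(Q_{h+1})=-\Sigma_\hstep^{-1}w^\star_\hstep+\Sigma_\hstep^{-1}\sum_k\phi_{hk}g_k+\Sigma_\hstep^{-1}\sum_k\phi_{hk}\epsilon_k$, hence by the triangle inequality $\Signorm{\TotError{\pi_{h+1}}{\hstep}(Q_{h+1})}\le\rad_\hstep+\Signorminv{\sum_k\phi_{hk}g_k}+\Signorminv{\sum_k\phi_{hk}\epsilon_k}$, where I used $\Signorminv{w^\star_\hstep}\le\|w^\star_\hstep\|_2$ since $\Sigma_\hstep\succeq I$.

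Next I would bound the bias term deterministically and, crucially, \emph{without} a spurious $\sqrt d$: since $\Signorminv{v}=\sup_{u:\Signorm{u}=1}\inprod{u}{v}$ and $\sum_k\inprod{u}{\phi_{hk}}^2=u^\top(\Sigma_\hstep-I)u\le\Signorm{u}^2$, Cauchy--Schwarz gives $\inprod{u}{\sum_k\phi_{hk}g_k}\le\misepsilon_\hstep\sum_k|\inprod{u}{\phi_{hk}}|\le\misepsilon_\hstep\sqrt{\Nsamp_\hstep}$ for every $u$ with $\Signorm{u}=1$, so $\Signorminv{\sum_k\phi_{hk}g_k}\le\misepsilon_\hstep\sqrt{\Nsamp_\hstep}$; together with $\rad_\hstep\le1$ this gives the deterministic part $1+\sqrt{\Nsamp_\hstep}\misepsilon_\hstep$ of $\pess_\hstep(\delta)$. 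It remains to bound $\Signorminv{\sum_k\phi_{hk}\epsilon_k}$ uniformly over $Q_{h+1}\in\Q_{h+1}$ and $\pi_{h+1}\in\PiSoft(\softrad)$ — ranging over $\PiSoft(\softrad)$ is enough, since the bound $\softrad=\actortot\eta$ controls the $\ell_2$-norm of every actor iterate. For a single fixed $(Q_{h+1},\pi_{h+1})$, a standard self-normalized martingale bound (of Abbasi-Yadkori--P\'al--Szepesv\'ari type) gives $\Signorminv{\sum_k\phi_{hk}\epsilon_k}^2\lesssim d\log(1+\Nsamp_\hstep/d)+\log(1/\delta')$ with probability $1-\delta'$, using $\det(\Sigma_\hstep)\le(1+\Nsamp_\hstep/d)^d$. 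To make this uniform I would cover the parameter balls $\B(\rad_{h+1})$ and $\B(\softrad)$ by $\zeta$-nets of cardinalities at most $(1+2\rad_{h+1}/\zeta)^d$ and $(1+2\softrad/\zeta)^d$; since $\epsilon_k$ is Lipschitz in the critic weight of $Q_{h+1}$ and in the softmax parameter of $\pi_{h+1}$ (the softmax map is Lipschitz and $\phi$, $Q_w$ are bounded), the same supremum-over-$\{u:\Signorm{u}=1\}$ estimate used for the bias shows that replacing a pair by its nearest net point perturbs $\Signorminv{\sum_k\phi_{hk}\epsilon_k}$ by $O(\sqrt{\Nsamp_\hstep}\,\zeta)$. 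Choosing $\zeta$ of order $1/\sqrt{\Nsamp_\hstep}$ for the value-function net and $1/\sqrt{\actortot}$ for the policy net keeps the discretization error $O(1)$ while the net log-cardinalities become $d\log(1+8\sqrt{\Nsamp_\hstep})$ and $d\log(1+16\softrad\sqrt{\actortot})$; applying the fixed-point bound at each net point with failure probability $\delta/(\horizon\,|\mathcal N|)$ and union-bounding over the net and over $\hstep$ reproduces exactly $\pess_\hstep(\delta)$ of~\eqref{EqnPessChoice} and yields $\Pro[\GoodEvent(\pess(\delta))]\ge1-\delta$.

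The main obstacle is this uniform martingale concentration step: I must set up the filtration so that the adaptively collected data of Assumption~\ref{AssData} still carries a martingale-difference structure, apply the self-normalized tail bound, and — the delicate part — push the covering/chaining argument through the $\Sigma_\hstep$-weighted norm without picking up an extra $\sqrt d$, which is precisely what makes the resulting confidence radius (and hence the final rate) minimax optimal. The Lipschitz estimates for the softmax class and the bookkeeping of the net scales are routine but need to be done with care.
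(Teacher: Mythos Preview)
Your proposal is correct and follows essentially the same approach as the paper: the same three-term decomposition into regularization, mis-specification, and martingale noise (the paper's Lemma~\ref{LemParErrorDecomp}), the same deterministic bounds $\rad_\hstep\le 1$ and $\misepsilon_\hstep\sqrt{\Nsamp_\hstep}$ on the first two (the paper invokes a ``projection bound'' lemma for the latter, which is exactly your Cauchy--Schwarz argument), and the same self-normalized Abbasi-Yadkori bound combined with $\epsilon$-nets over the linear value-function class and the softmax policy class, followed by a union bound over $\hstep$. The only cosmetic difference is that the paper splits the martingale term further into a reward-noise piece (independent of $(Q,\pi)$, so no covering needed) and a transition-noise piece, whereas you keep them together; this changes nothing structurally.
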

\noindent See Section~\ref{SecProofLemGoodProbability} for the proof
of this claim.

In our proof of Lemma~\ref{LemGoodProbability}, we benefit from the
fact that our procedure injects its pessimism by direct perturbations
of the parameter vectors.  Indeed, one key step in the proof is
bounding certain metric entropies defined by classes $\Q$ of linear
action-value functions, and policy classes $\PiSoft(\softrad)$ used in
the actor's iterations.

First, for any fixed policy $\pi$, since the agent's action value
function $\Qubar^\pi$ is enforced to be linear $\Qubar^\pi \in \Q$
even after perturbations, the relevant action-value class $\Q$ is also
linear.  Thus, we need only control metric entropy (and perform union
bounds over the resulting covering) for a linear function class; in
this way, we avoid a potentially more costly union bound over the much
larger function class obtained by adding complex bonuses to linear
functions, as in past work~\cite{jin2020pessimism}.  In this way, we
achieve a guarantee that is sharper by a factor of $\sqrt{d}$.

Second, the union bound needs to be extended to all policies that the
actor can use to invoke the critic.  Recall that the critic returns a
linear action-value function $\Qubar$, which is
compatible~\cite{kakade2001natural,agarwal2020optimality} with the
soft-max policy class $\PiSoft$.  Consequently, the actor's updates
take the simple form~\eqref{line:actorupdate} of \cref{alg:actor}.  If
the action-value function $\Qubar$ were perturbed by bonuses, then
linearity of the critic's value function would be lost.


\subsection{Actor's Analysis}
\label{SecActorOutline}

In this section, we analyze the mirror descent algorithm---that is,
the actor in \cref{alg:actor}. Our analysis exploits the methods in
the paper~\cite{agarwal2020optimality}, with some small changes to
accommodate our framework; in particular, while our analysis assumes
no error in the critic's evaluation, it does involve a sequence of
time-varying MDPs.

Given a sequence of MDPs $\{M_\actorit\}_{\actorit = 1}^\actortot$,
let $V^\pi_\actorit$ be the value function associated with policy
$\pi$ on MDP $M_\actorit$.  Given the initialization $\theta_1 = 0$,
let $\{ \theta_\actorit\}_{\actorit = 1}^\actortot$ be parameter
sequence generated by the actor, and let $\pi_\actorit =
\pi_{\theta_\actorit}$ be the policy associated with parameter
$\theta_\actorit$.  For each $\actorit$, there is a sequence
\mbox{$w_\actorit = \{w_{h \actorit} \}_{h=1}^\horizon$} such that $\|
w_{h \actorit} \|_2 \leq \rad_\hstep $ for all $h \in
[\horizon]$, and 
\begin{subequations}
\begin{align}
Q^{\pi_\actorit}_{h,M_\actorit}(\state, \action) & \defeq
\inprod{\phi_\hstep(\state, \action)}{w_{h \actorit}}, \qquad
\mbox{for all $(\state, \action)$ and $\hstep \in [\horizon]$.}
\end{align}
In particular, the value of $w_{h \actorit}$ is the value
$\wubar_{\hstep\actorit}$ identified by the critic (see
equation~\eqref{EqnCriticEstimates}) corresponding to policy
$\pi_\actorit$, so that $Q^{\pi_\actorit}_{M_\actorit} =
\Qubar^{\pi_\actorit}$.  Define the value function
\mbox{$V^{\pi_\actorit}_{h,M_\actorit}(\state) = \Ea{\pi_\actorit}
  \big[ Q^{\pi_\actorit}_{h,M_\actorit}(\state, \Action') \big]$}
along with the advantage function
\begin{align}
\label{EqnDefnAdvantage}
 \Advan^{\pi_\actorit}_{h,M_\actorit}(\state, \action) \defeq
 Q^{\pi_\actorit}_{h,M_\actorit}(\state, \action) -
 V^{\pi_\actorit}_{h,M_\actorit}(\state).
\end{align}
\end{subequations}

\begin{proposition}[Actor's Analysis]
  \label{PropActorMain}
Suppose that the actor takes $\actortot \geq \log |\ActionSpace|$
steps using a stepsize $\eta \in (0, 1)$, and the advantage function
at each iteration $\actorit$ is uniformly bounded as
$|\Advan_{h,M_\actorit}^{\pi_\actorit}(\state, \action) | \leq 2$ for
all $(\state, \action)$.  Then for any fixed policy $\pi$, we have
\begin{subequations}  
\begin{align}
\label{EqnActorBoundGeneral}  
\frac{1}{\actortot} \sumactor \big \{ V^{\pi}_{1,
  M_\actorit}(\state_1) - V^{\pi_\actorit}_{1,M_\actorit}(\state_1)
\big \} & \leq \horizon \left[ \frac{\log |\ActionSpace|}{\eta
    \actortot} + \eta \right].
\end{align}
In particular, setting $\eta = \sqrt{\frac{\log
    |\ActionSpace|}{\actortot}}$ yields the bound
\begin{align}
\label{EqnActorBound}  
  \frac{1}{\actortot} \sumactor \big \{
  V^{\pi}_{1,M_\actorit}(\state_1) -
  V^{\pi_\actorit}_{1,M_\actorit}(\state_1) \big \} & \leq
  \underbrace{2\horizon \sqrt{\frac{\log
        |\ActionSpace|}{\actortot}}}_{ = \Ropt(\actortot)}.
\end{align}
\end{subequations}
\end{proposition}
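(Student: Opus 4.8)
I would run the standard tabular natural-policy-gradient / mirror-descent argument of~\cite{agarwal2020optimality}, with the single adaptation that the loss sequence presented to the actor is generated by a \emph{time-varying} family $\{M_\actorit\}_{\actorit=1}^{\actortot}$ of MDPs. The crucial structural observation I would exploit is that these are the induced MDPs of \Cref{SecInduced}: every $M_\actorit$ carries the \emph{same} transition kernel and differs only in its reward, so the state-occupancy measure of any fixed policy coincides across all $M_\actorit$.

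First, I would invoke the performance-difference lemma inside each $M_\actorit$. Using \Cref{LemCriticExact} (so that $Q^{\pi_\actorit}_{\hstep,M_\actorit}=\Qubar^{\pi_\actorit}_\hstep$ really is the exact action-value of $\pi_\actorit$ on $M_\actorit$) together with $\langle\pi_{\actorit,\hstep}(\cdot\mid\state),\Advan^{\pi_\actorit}_{\hstep,M_\actorit}(\state,\cdot)\rangle=0$, this gives
\begin{align*}
V^{\pi}_{1,M_\actorit}(\state_1)-V^{\pi_\actorit}_{1,M_\actorit}(\state_1)=\sum_{\hstep=1}^{\horizon}\E_{\MyState_\hstep\sim\pi}\big\langle \pi_\hstep(\cdot\mid\MyState_\hstep)-\pi_{\actorit,\hstep}(\cdot\mid\MyState_\hstep),\ \Advan^{\pi_\actorit}_{\hstep,M_\actorit}(\MyState_\hstep,\cdot)\big\rangle,
\end{align*}
where $\MyState_\hstep$ is the state reached after $\hstep-1$ steps of $\pi$. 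Because the transitions are common to all $M_\actorit$, the law of $\MyState_\hstep$ is independent of $\actorit$; averaging over $\actorit\in[\actortot]$ and interchanging this expectation with the average then reduces the whole proposition to a per-state online-learning bound: for each fixed $(\state,\hstep)$,
\begin{align*}
\frac{1}{\actortot}\sumactor\big\langle \pi_\hstep(\cdot\mid\state)-\pi_{\actorit,\hstep}(\cdot\mid\state),\ \Advan^{\pi_\actorit}_{\hstep,M_\actorit}(\state,\cdot)\big\rangle\ \le\ \frac{\log|\ActionSpace|}{\eta\,\actortot}+\eta .
\end{align*}

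For this per-state bound I would argue as in exponentiated gradient. Abbreviating $p_\actorit=\pi_{\actorit,\hstep}(\cdot\mid\state)$ and $\ell_\actorit(\cdot)=\Advan^{\pi_\actorit}_{\hstep,M_\actorit}(\state,\cdot)$, and noting that $Q_{\hstep,M_\actorit}(\state,\cdot)$ and $\ell_\actorit$ differ by an $a$-independent constant, the actor's update~\eqref{main:eqn:MirrorDescent} is exactly the recursion $p_{\actorit+1}(a)\propto p_\actorit(a)e^{\eta\ell_\actorit(a)}$ started from the uniform $p_1$. Telescoping the potential $\mathrm{KL}(\pi_\hstep(\cdot\mid\state)\,\|\,p_\actorit)$ along this recursion yields
\begin{align*}
\eta\sumactor\big\langle \pi_\hstep(\cdot\mid\state)-p_\actorit,\ \ell_\actorit\big\rangle\ \le\ \mathrm{KL}\big(\pi_\hstep(\cdot\mid\state)\,\|\,p_1\big)+\sumactor\log\big(\Ea{p_\actorit}e^{\eta\ell_\actorit(\Action')}\big)\ \le\ \log|\ActionSpace|+\sumactor\eta^2,
\end{align*}
where I use $\mathrm{KL}(\cdot\,\|\,p_1)\le\log|\ActionSpace|$ for uniform $p_1$ and, for each $\actorit$, the facts that $\ell_\actorit$ has zero mean under $p_\actorit$ and ranges over an interval of length $\le 2$ (because $Q_{\hstep,M_\actorit}(\state,\cdot)\in[-1,1]$, being linear in $\phi$ with weight of norm $\le\rad_\hstep\le1$), so that Hoeffding's lemma gives $\log\Ea{p_\actorit}e^{\eta\ell_\actorit(\Action')}\le\eta^2$. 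Dividing by $\eta\actortot$ gives the per-state bound; summing the $\horizon$ identical contributions over $\hstep$ produces~\eqref{EqnActorBoundGeneral}, and the choice $\eta=\sqrt{\log|\ActionSpace|/\actortot}$ gives~\eqref{EqnActorBound}.

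The part that needs real care is the reduction step: it hinges on the $M_\actorit$'s sharing transitions, which is what lets the comparator's occupancy measure be pulled outside the average over $\actorit$ — an analysis permitting time-varying dynamics would not close here. Beyond that the proof is bookkeeping: the mirror-descent regret bound is insensitive to the fact that the loss sequence $\{\ell_\actorit\}_\actorit$ is adaptively generated by the critic, and the only quantitative ingredient is the sub-Gaussian MGF estimate for bounded zero-mean advantages.
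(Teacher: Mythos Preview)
Your proposal is correct and follows essentially the same route as the paper: apply the performance-difference lemma inside each $M_\actorit$, use the shared transition kernel to pull the comparator's state distribution outside the average over $\actorit$, and then run the KL-telescoping mirror-descent argument state-by-state. The only cosmetic difference is that the paper packages the telescoping step via an explicit identity (its Lemma~\ref{LemPolicyUpdate}) and bounds the log-partition term with the Taylor estimate $e^{x}\le 1+x+x^{2}$, whereas you invoke Hoeffding's lemma using the range-$2$ property of $Q_{\hstep,M_\actorit}(\state,\cdot)\in[-1,1]$; your version in fact recovers the constant stated in~\eqref{EqnActorBoundGeneral} more cleanly.
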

To be clear, the fixed comparator policy $\pi$ in the above bounds
need not be in $\PolicySet$.  This fact is important, as it allows us
to derive bounds relative to an arbitrary comparator.


\subsection{Combining the pieces}
\label{SecCombineOutline}

We are now ready to combine the pieces so as to prove
Theorem~\ref{thm:MainResult}.  For each iteration $\actorit \in
[\actortot]$, let $\pi_\actorit \defeq \pi_{\theta_\actorit}$ be the
policy chosen by the actor, and let $\MDP_\actorit =
\MDP_{\pi_\actorit}$ be the corresponding induced MDP.

Recall that Lemma~\ref{LemGoodProbability}, stated in
Section~\ref{AppProofPropCriticMain}, guarantees that the ``good''
event $\GoodEvent$ from equation~\eqref{EqnGoodEventMain} occurs with
probability at least $1 -\delta$.  Conditioned on the occurrence of
$\GoodEvent$, the bounds~\eqref{EqnCriticPiMain}
and~\eqref{EqnCriticPitilMain} ensure that for any comparator
$\widetilde \pi$, we have
\begin{align*}
V^{\widetilde \pi}_{1}(\state_1) - V^{\pi_\actorit}_{1}(\state_1) &
\leq V^{\widetilde \pi}_{1,M_\actorit}(\state_1) -
V^{\pi_\actorit}_{1,M_\actorit}(\state_1) + 2 \sumh \Big[
  \misepsilon_\hstep + \pess_\hstep \| \E_{(\MyState_\hstep,
    \Action_\hstep) \sim \widetilde \pi} \phi(\MyState_\hstep,
  \Action_\hstep)\|_{\Sigma^{-1}_\hstep} \Big] \\
& = \; V^{\widetilde \pi}_{1,M_\actorit}(\state_1) -
V^{\pi_\actorit}_{1,M_\actorit}(\state_1) + \Mis(\misepsilon) +
\Uncertain(\pitil; \pess).
\end{align*}
We now average over the iterations $\actorit \in [\actortot]$.  The
equality~\eqref{EqnCriticExact} from \Cref{LemCriticExact} ensures for
each iteration $\actorit$, the actor receives as an input a vector
$\wubar_\actorit$ such that
\begin{align}
  Q^{\pi_\actorit}_{h,M_\actorit}(\state, \action)
        \overset{Lem. \ref{LemCriticExact}}{=}
        \Qubar^{\pi_\actorit}_{h}(\state, \action) =
        \inprod{\phi_\hstep(\state, \action)}{\wubar_{hk}}.
\end{align}
Consequently, the action-value function $\Qubar^{\pi_\actorit}$ that
provided as input to the actor via $\wubar_\actorit$ is the
action-value function of $\pi_\actorit$ on the associated induced MDP
$M_\actorit$, i.e., $Q^{\pi_\actorit}_{M_\actorit}$. Applying the
bound~\eqref{EqnActorBound} from \Cref{PropActorMain} yields
$\frac{1}{\actortot} \sumactor \Big[V^{\widetilde \pi}_{1,M_\actorit}
  (s_1) - V^{\pi_\actorit}_{1,M_\actorit} (s_1) \Big] \leq
\Ropt(\actortot)$.  Combining with the prior display yields
\begin{align}
  \label{EqnNearFinalBound}
V^{\widetilde \pi}_{1}(s_1) - \frac{1}{\actortot} \sumactor
V^{\pi_\actorit}_{1}(s_1) & \leq \Ropt(\actortot) + \Mis(\misepsilon)
+ \Uncertain(\pitil; \pess).
\end{align}
Notice that the policy returned by the agent $\pi_{\Alg}$ is the
mixture policy of the policies $\pi_1,\dots,\pi_{\actortot}$ and its
value function is $V^{\pi_{\Alg}} = \frac{1}{\actortot} \sumactor
V^{\pi_\actorit}$.

Note that under the good event $\GoodEvent$, the
bound~\eqref{EqnNearFinalBound} holds for any comparator policy
$\pitilde$, which was the claim of the theorem.


\section{Discussion}
\label{SecDiscussion}

In this paper, we have developed and analyzed an actor-critic method
procedure, designed for finding near-optimal policies in the offline
setting.  The \Algname{} procedure introduces pessimism into the
critic's evaluation of a given policy's value function, thereby
ensuring that, under suitable parameter choices and assumptions, it
maintains (with high probability) a lower bound on the true value
function.  The actor then performs a form of mirror ascent so as to
maximize the value of these lower bounds.

An important feature of our method is that it introduces pessimism via
direct perturbations of the parameter vectors in a linear function
approximation scheme.  In this way, we avoid having to impose
additional model assumptions; moreover, the pessimism does \emph{not}
substantially increase the complexity of our under value/policy
classes, which allows us to provide minimax-optimal guarantees.  We
note that similar approaches have appeared before in the exploration
setting; for example, see the recent
papers~\cite{zanette2020learning,jin2021bellman,du2021bilinear}.
These methods enjoy similar advantages in terms of theoretical
guarantees, but at the expense of computational tractability.  In
contrast, the method of this paper entails solving a low-dimensional
second-order cone program, a simple class of convex programs for which
there exist many polynomial-time algorithms.  We enjoy this advantage
due to some key differences between the offline and online settings of
RL.  In the offline setting, it is possible to keep the actor's update
cleanly separated from the evaluation step of the critic, as we have
done here; this separation underlies the computational tractability.

Our work leaves open a number of interesting questions for future
work.  First, it would be interesting to provide some numerical
studies of the \Algname{}'s performance, so as to understand its
practical behavior relative to the theoretical guarantees provided
here.  Also, our analysis here has focused purely on approximation
using linear basis expansions; extension to more general function
classes is an important next step.  Finally, it will be interesting to
see to what extent these ideas can be translated to the more
challenging setting of exploration.

\subsection*{Acknowledgements}

This work was partially supported by NSF-DMS grant 2015454, NSF-IIS
grant 1909365, and NSF-FODSI grant 2023505 to MJW, a Stanford Artificial Intelligence Laboratory Toyota gift to EB, and a Office of Naval Research grant DOD-ONR-N00014-18-1-2640 to MJW.


\bibliographystyle{alpha}

\bibliography{rl}

\newpage
\appendix


\section{Additional Literature}
\label{sec:Literature}

For empirical studies on offline RL, see the
papers~\cite{laroche2019safe,jaques2019way,wu2019behavior,agarwal2020optimistic,wang2020critic,siegel2020keep,nair2020accelerating}
in addition to those presented in the main text.  Several works have
investigated offline policy learning, where concentrability
coefficients are introduced to account for the non-uniform error
propagation~\cite{munos2003error,munos2005error,antos2007fitted,antos2008learning,farahmand2010error,farahmand2016regularized,chen2019information,xie2020batch,xie2020Q,duan2021risk}. For
additional literature, see also the
papers~\cite{zhang2020variational,liao2020batch,fan2020theoretical,fu2020single,wang2019neural}. Concentrability
coefficients or density ratios also appears in the off-policy
evaluation problem, which is distinct from the policy learning problem
that we consider
here~\cite{zhang2020gendice,thomas2016data,farajtabar2018more,liu2018breaking,xie2019towards,yang2020off,nachum2019algaedice,yin2020near,yin2020asymptotically,duan2020minimax,uehara2020minimax,jiang2020minimax,kallus2019efficiently,tang2019doubly,nachum2020reinforcement,nachum2019dualdice,jiang2016doubly,uehara2020minimax,voloshin2021minimax,jiang2020minimax,hao2021bootstrapping}.


\section{Proof of \Cref{prop:Inclusion}}
\label{sec:proofofinclusion}

First, let us define an MDP class indexed by $N$; we will use this MDP
class to show that each inclusion is strict. At a high level, this MDP
class has a starting state $0$ where the agent can choose to go left
(action $-1$) or right (action $+1$); after that, it will keep going
left or right until the leftmost or rightmost terminal state is
reached. The reward is non-zero only at the terminal states.

For a fixed $N$, let the horizon be $H = N+1$ and consider the following chain MDP, where the state space is 
$$\StateSpace = \{N,-(N-1), \dots,-1,0,+1,\dots,N-1,N \}.$$ The
starting state is $0$, and there the agent can choose among two
actions ($-1$ and $+1$). In states $s \neq 0$ only one action is
available. Formally, we define
\begin{align}
\ActionSpace_s =
\begin{cases} \{ -1 \} & \text{if}\; s < 0 \\ \{ -1,+1 \}
  & \text{if}\; s = 0 \\ \{ +1 \} & \text{if}\; s > 0.
\end{cases}
\end{align} 
The reward is everywhere zero except in the terminal states $-N$ and
$+N$, for which it takes the values $-1$ and $+1$, respectively, for
the only action available there. The transition function is
deterministic, and the successor state is always $\state' = \state +
\action$ (e.g., action $+1$ in state $+2$ leads to state $+3$). In
other words, if the agent is a state $s$ with positive value, it will
move to $\state + 1$, and if $\state$ has negative value it will move
to $\state - 1$.

\subsection{Proof of part (a):
  Low Rank $\subset$ Restricted Closed}

We first prove that a low-rank MDP must satisfy the restricted
closedness assumption.  Assume the MDP is low rank. Then for any
$Q_{h+1} \in \Q_{h+1}$ and $\pi \in \Pi$, we have
\begin{align*}
  \T_\hstep^\pi Q_{h+1} & = \inprod{\phi_\hstep(\state,
    \action)}{w^R_\hstep} + \inprod{\phi_\hstep(\state,
    \action)}{\int_{\state'} \E_{\action' \sim \pi}
    Q_{h+1}(\state',\action') d \psi(\state')} \\
& = \inprod{\phi_\hstep(\state, \action)}{ w^R_\hstep +
    \int_{\state'}\E_{a'\sim \pi}Q_{h+1}(\state', \action') d
    \psi(\state')} \\
  & = \inprod{\phi_\hstep(\state, \action)}{w}
\end{align*}
for some $w \in \R^{d}$. Thus, we have $(\T_\hstep^\pi Q_{h+1}) \in
\Q_\hstep$ for all $Q_{h+1} \in \Q_{h+1}$ and $\pi \in \Pi$---i.e., if
the MDP is low rank then it satisfies the restricted closedness
condition.

In order to establish the strict inclusion, consider the MDP described
at the beginning of the proof with the following feature extractor:
\begin{align}
\phi(\state, \action) = \begin{cases}
+1 & \text{if} \; \action = +1 \\
-1 & \text{if} \; \action = -1.
\end{cases}
\end{align}
The MDP with this feature map is not low rank. For example, we must
have
\begin{align*}
  1 = \trans(N \mid N-1,+1) = \phi(N-1,+1)^\top \psi(N) = \psi(N)
\end{align*}
which implies $\psi(-N) = 0$ for $\psi$ to be a measure. However, this
means we won't be able to represent all transitions correctly, as we
would need to have
\begin{align*}
  1 = \trans( -N \mid -(N-1),-1) = \phi(-(N-1),-1)^\top \psi(-N) =
  -\psi(-N) = 0.
\end{align*}
This means the MDP is not low rank. However, we show that it still
satisfies the restricted closedness assumption. Notice that it is
enough to verify the condition in the reachable space, which is $|s|
+1 = h$ at timestep $h$. If the reward is zero it suffices to verify
that for all choices of $\theta_{h+1}$ we can find $\theta_\hstep$
such that
\begin{align}
\inprod{\phi(h-1,+1)}{\theta_{h}} & =
\inprod{\phi(h,+1)}{\theta_{h+1}} \\
\inprod{\phi(-(h-1),-1)}{\theta_{h}} & =
\inprod{\phi(-h,-1)}{\theta_{h+1}}.
\end{align}
Notice that in all cases there is only one policy available at the
successor states; for any choice of $\theta_{h+1}$, just set
$\theta_\hstep = \theta_{h+1}$. It is easy to verify that at the last
step $h=H = N+1$ the reward function is either $+1$ or $-1$, depending
on the state, and can be represented by $\theta_\hstep = +1$:
\begin{align}
\inprod{\phi(H-1,+1)}{\theta_{H}} & = +1 \\
\inprod{\phi(-(H-1),-1)}{\theta_{H}} & = -1.
\end{align}


\subsection{Proof of part (b): Restricted Closedness
  $\subset$ Linear $Q^\pi$}

We first show that every MDP that satisfies restricted closedness
satisfies the linear $Q^\pi$ assumption. For any time step $\hstep \in
\horizon$, and for a given policy $\pi \in \Pi$, if restricted
closedness holds, choose $Q_{h+1} = Q^{\pi}_{h+1}$ in the definition
of restricted closedness and use the Bellman equations to obtain
\begin{align*}
Q^{\pi}_{h} \defeq \T^{\pi}_\hstep Q^{\pi}_{h+1} \in \Q_\hstep.
\end{align*}
Thus, the linear $Q^\pi$ assumption is automatically satisfied.

In order to show the strict inclusion, consider again the MDP
described at the beginning of the proof, but with a different feature
map. The map reads
\begin{align*}
  \phi(\state, \action) = \begin{cases} [+1,0] & \text{if} \; \action
    = +1, \state \neq 0 \\
    [0,+1] & \text{if} \; \action = -1, \state \neq 0, \end{cases}
\end{align*}
and at the start state
\begin{align*}
\phi(0, \action) = \begin{cases} +1 & \text{if} \; \action = +1 \\ -1
  & \text{if} \; \action = -1.  \end{cases}
\end{align*}
Notice that we only need to verify that restricted closedness does not
hold at some timestep. When $\theta_2 = [+1, +1]$, there is no
$\theta_1$ such that
\begin{align*}
+ \theta_1 = \inprod{\phi(0,+1)}{\theta_1} & =
\inprod{\phi(1,1)}{\theta_2} = 1 \\
-\theta_1 = \inprod{\phi(0,-1)}{\theta_1} & =
\inprod{\phi(-1,-1)}{\theta_2} =1.
\end{align*}
The MDP however satisfies the linear $Q^\pi$ assumption with $\theta_1
= 1$ and $\theta_\hstep = [+1,-1]$ for $h \geq 2$.


\section{Proofs for the critic}
\label{SecCritic}

In this section, we collect together the statements and proofs of
various technical results that underlie the critic's analysis in
Section~\ref{SecCriticOutline}.  In
Section~\ref{SecProofLemCriticExact}, we prove
Lemma~\ref{LemCriticExact} that guarantees exactness of the critic on
the induced MDP, whereas Section~\ref{AppProofPropCriticMain} is
devoted to proving our main guarantee for the critic, namely
\Cref{PropCriticMain}.

Let us introduce some additional notation that plays an important role
in the proof.  Recall the regression operator $\regest_\hstep^\pi$ and
sup-norm projection operator $\SupProj^\pi_\hstep$ that were
previously defined in equations~\eqref{EqnDefnRegressionOperator}
and~\eqref{EqnSupProject}, respectively.  In addition to these two
operators, our proof also makes use of the \emph{approximation error
operator}
\begin{align}
  \label{EqnApproximationError}
\AppError{\pi}{\hstep}(F)(\state, \action) & \defeq
\inprod{\phi(\state, \action)}{\SupProj^\pi_\hstep(F)} -
\(\T^\pi_\hstep F\)(\state, \action),
\end{align}
which is a mapping from $\Fclass$ to itself.


\subsection{Proof of Lemma~\ref{LemCriticExact}}
\label{SecProofLemCriticExact}

By definition, the induced MDP differs from the original MDP only by
the perturbation of the reward function.  Thus, by definition of value
functions, we can write
\begin{subequations}  
  \begin{align}
\label{EqnRepOne}
Q^\pi_{\hstep, \IndMDP{\pi}}(\state, \action) - Q^\pi_{\hstep}(\state,
\action) & = \sum_{\ell=h}^\horizon \E_{(\MyState_\ell, \Action_\ell)
  \sim \pi \mid (\state, \action)} \left[
  \rewardhat^\pi_\hstep(\MyState_\ell, \Action_\ell) -
  \reward_\hstep(\MyState_\ell, \Action_\ell) \right].
  \end{align}
On the other hand, using the definition of $\Qubar^\pi_\hstep$ and the
Bellman conditions, we have
\begin{align*}
\Qubar_\hstep^\pi(\state, \action) - Q_\hstep^\pi(\state, \action) & =
\inprod{\phi(\state, \action)}{\wubar_\hstep^\pi} - \T^\pi_\hstep
(Q^\pi_{h+1})(\state, \action) \\
& = \left \{ \inprod{\phi(\state, \action)}{\wubar_\hstep^\pi} -
\T_\hstep^\pi( \Qubar^\pi_{h+1})(\state, \action) \right \} + \left \{
\T_\hstep^\pi(\Qubar^\pi_{h+1})(\state, \action) +
\T^\pi_\hstep(Q^\pi_{h+1})(\state, \action) \right \} \\
& = \rewardhat^\pi_\hstep(\state, \action) - \reward_\hstep(\state,
\action) + \Es{\trans_\hstep(\state, \action)} \Ea{\pi(\cdot \mid
  \MyState')} (\Qubar_{h+1}^\pi - Q_{h+1}^\pi)(\MyState', \Action')
\end{align*}
Applying this argument recursively to $\ell = h+1, \ldots, \horizon$,
we find that
\begin{align}
  \label{EqnRepTwo}
\Qubar_\hstep^\pi(\state, \action) - Q_\hstep^\pi(\state, \action) & =
\sum_{\ell = h}^\horizon \E_{(\MyState_\ell, \Action_\ell) \sim \pi
  \mid (\state, \action)} \Big [ \rewardhat^\pi_\hstep(\MyState_\ell,
  \Action_\ell) - \reward_\hstep(\MyState_\ell, \Action_\ell) \Big]
  \end{align}
\end{subequations}
Subtracting equation~\eqref{EqnRepTwo} from equation~\eqref{EqnRepOne}
yields the claim.


\subsection{Proof of Proposition~\ref{PropCriticMain}}
\label{AppProofPropCriticMain}

We split the proof into two parts, corresponding to the two bounds.

\subsubsection{Proof of the bound~\eqref{EqnCriticPiMain}}

We begin by proving the bound on the critic's estimate for
the value function of the input policy $\pi$.

\paragraph{High-level roadmap:}  We begin by outlining the main
steps in the proof.  Our first step is to define a sequence of weight
vectors $\what \defeq \{\what^\pi_\hstep \}_{\hstep=1}^\horizon$ such
that
\begin{subequations}
\begin{align}
\label{EqnClaimOne}
\Big| \sum_{\action_1 \in \ActionSpace} \pi(\action_1 \mid \state_1)
\inprod{\phi_1(\state_1, \action_1)}{\what^\pi_1} - V^\pi_1(\state_1)
\Big| & \leq \sumh \misepsilon_h.
\end{align}
Our second step is to show that conditioned on the good event
$\GoodEvent(\pess)$ from equation~\eqref{EqnGoodEventMain}, the
sequence $\what$ is feasible for the critic's convex program; this
feasibility, combined with the optimality of $\wunder$, implies that
\begin{align}
\label{EqnClaimTwo}
V^\pi_{1, \IndMDP{\pi}}(\state_1) & \stackrel{(i)}{=} \sum_{\action_1
  \in \ActionSpace} \pi(\action_1 \mid \state_1)
\inprod{\phi_1(\state_1, \action_1)}{\wunder_1^\pi} \; \leq \;
\sum_{\action_1 \in \ActionSpace} \pi(\action_1 \mid \state_1)
\inprod{\phi_1(\state_1, \action_1)}{\what^\pi_1}.
\end{align}
\end{subequations}
Here step (i) follows from Lemma~\ref{LemCriticExact}, which
guarantees that the estimated value functions $\Vubar^\pi_\hstep$ of
the critic are exact in the induced MDP.  Combining the two
bounds~\eqref{EqnClaimOne} and~\eqref{EqnClaimTwo} yields $V^\pi_{1,
  \IndMDP{\pi}}(\state_1) \leq V^\pi_1(\state_1) + \sumh
\misepsilon_h$, as claimed in equation~\eqref{EqnCriticPiMain}. \\

\noindent It remains to prove our two auxiliary
claims~\eqref{EqnClaimOne} and~\eqref{EqnClaimTwo}.

\paragraph{Proof of claim~\eqref{EqnClaimOne}:}

Given a policy $\pi$, we use backwards induction to define the
sequence $\{\what^\pi \}_{\hstep=1}^\horizon$ by first setting
\mbox{$\what^\pi_{\horizon + 1} = 0$,} and then defining
\begin{align}
  \label{EqnDefnWhat}
    \what_\hstep^\pi & \defeq
    \SupProj^\pi_\hstep(\Qhat^\pi_{\hstep+1}) \qquad \mbox{for $\hstep
      = \horizon, \horizon-1, \ldots, 1$,}
\end{align}
where $\Qhat^\pi_{\hstep +1}(\state, \action) \defeq
\inprod{\phi_{\hstep+1}(\state, \action)}{\what^\pi_{\hstep+1}}$.  By
construction, we have the bound \mbox{$\| \what^\pi_\hstep \|_2 \leq
  \rad_\hstep$} for all \mbox{$h \in [\horizon]$.}  The following
lemma bounds the sup-norm distance between the induced linear
$Q$-value function estimate, and the actual $Q^\pi$-value function.

\begin{lemma}
\label{lem:BestPredictorAccuracy}
The functions $\{\Qhat^\pi_{\hstep} \}_{\hstep=1}^\horizon$ defined by
the best-predictor sequence $\{ \what^\pi_\hstep
\}_{\hstep=1}^{\horizon}$ from equation~\eqref{EqnDefnWhat} satisfy
the bound
\begin{align}
  \label{EqnBestSequenceApprox}
\big| \Qhat^\pi_\hstep(\state, \action) - Q_\hstep^\pi(\state,
\action)\big| & \leq \sum_{\ell=h}^\horizon \misepsilon_\ell \qquad
\mbox{for all $\hstep \in [\horizon]$.}
\end{align}
\end{lemma}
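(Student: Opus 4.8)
The plan is to prove the bound by backwards induction on the stage index $\hstep$, running from $\hstep = \horizon+1$ down to $\hstep = 1$. For the base case, recall that $\what^\pi_{\horizon+1} = 0$ by construction, so $\Qhat^\pi_{\horizon+1} \equiv 0$; since the terminal value function is also zero, the left-hand side vanishes, matching the empty sum $\sum_{\ell=\horizon+1}^{\horizon}\misepsilon_\ell = 0$.

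For the inductive step, fix $\hstep \in [\horizon]$ and assume that $\bigl|\Qhat^\pi_{\hstep+1}(\state,\action) - Q^\pi_{\hstep+1}(\state,\action)\bigr| \le \sum_{\ell=\hstep+1}^{\horizon}\misepsilon_\ell$ for every $(\state,\action)$. Using the Bellman equation $Q^\pi_\hstep = \T^\pi_\hstep Q^\pi_{\hstep+1}$ and inserting and subtracting $\T^\pi_\hstep \Qhat^\pi_{\hstep+1}$, write
\[
\Qhat^\pi_\hstep(\state,\action) - Q^\pi_\hstep(\state,\action) = \bigl(\Qhat^\pi_\hstep(\state,\action) - (\T^\pi_\hstep \Qhat^\pi_{\hstep+1})(\state,\action)\bigr) + \bigl((\T^\pi_\hstep \Qhat^\pi_{\hstep+1})(\state,\action) - (\T^\pi_\hstep Q^\pi_{\hstep+1})(\state,\action)\bigr).
\]
The first bracket is exactly $\AppError{\pi}{\hstep}(\Qhat^\pi_{\hstep+1})(\state,\action)$, because $\Qhat^\pi_\hstep(\state,\action) = \inprod{\phi(\state,\action)}{\what^\pi_\hstep}$ with $\what^\pi_\hstep = \SupProj^\pi_\hstep(\Qhat^\pi_{\hstep+1})$. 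Since $\|\what^\pi_{\hstep+1}\|_2 \le \rad_{\hstep+1}$ by construction, we have $\Qhat^\pi_{\hstep+1}\in\Q_{\hstep+1}$, and $\pi_{\hstep+1}\in\Pi_{\hstep+1}$, so Assumption~\ref{asm:Closedness} applies: the best linear fit in $\B(\rad_\hstep)$ to the Bellman update $\T^\pi_\hstep \Qhat^\pi_{\hstep+1}$ (which, being a compact minimization of a continuous convex function, is attained at $\what^\pi_\hstep$) has sup-norm error at most $\misepsilon_\hstep$, i.e. $\sup_{(\state,\action)}\bigl|\AppError{\pi}{\hstep}(\Qhat^\pi_{\hstep+1})(\state,\action)\bigr| \le \misepsilon_\hstep$. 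For the second bracket, the reward term $r_\hstep$ cancels and
\[
(\T^\pi_\hstep \Qhat^\pi_{\hstep+1})(\state,\action) - (\T^\pi_\hstep Q^\pi_{\hstep+1})(\state,\action) = \E_{\MyState'\sim\trans_\hstep(\state,\action)}\,\E_{\Action'\sim\pi}\bigl[\Qhat^\pi_{\hstep+1}(\MyState',\Action') - Q^\pi_{\hstep+1}(\MyState',\Action')\bigr],
\]
so by the inductive hypothesis its absolute value is at most $\sum_{\ell=\hstep+1}^{\horizon}\misepsilon_\ell$. Combining the two bounds via the triangle inequality gives $\bigl|\Qhat^\pi_\hstep(\state,\action) - Q^\pi_\hstep(\state,\action)\bigr| \le \misepsilon_\hstep + \sum_{\ell=\hstep+1}^{\horizon}\misepsilon_\ell = \sum_{\ell=\hstep}^{\horizon}\misepsilon_\ell$, which closes the induction.

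I do not expect a serious obstacle here: this is the standard telescoping error analysis for approximate dynamic programming, and the pessimism perturbations play no role, since we are comparing the best-predictor sequence $\what^\pi$ (not the critic's output $\wubar^\pi$) against $Q^\pi$. The only points that need care are (i) verifying that $\Qhat^\pi_{\hstep+1}$ actually lies in the class $\Q_{\hstep+1}$ (and that the policy component $\pi_{\hstep+1}$ lies in $\Pi_{\hstep+1}$) so that Assumption~\ref{asm:Closedness} can be invoked at step $\hstep$, and (ii) noting that the single per-step parameter $\misepsilon_\hstep$ already controls the whole fitting error, because $\AppError{\pi}{\hstep}$ measures deviation of the linear fit from the \emph{full} Bellman update $\T^\pi_\hstep \Qhat^\pi_{\hstep+1}$, reward included; everything else is routine bookkeeping.
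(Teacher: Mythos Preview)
Your proof is correct and follows essentially the same approach as the paper: both add and subtract $\T^\pi_\hstep \Qhat^\pi_{\hstep+1}$, bound the Bellman residual $\Qhat^\pi_\hstep - \T^\pi_\hstep \Qhat^\pi_{\hstep+1}$ by $\misepsilon_\hstep$ via Assumption~\ref{asm:Closedness}, and propagate the remaining term through the expectation. The only cosmetic difference is that the paper unrolls the recursion fully into a sum $\sum_{\ell=h}^{\horizon}\E_{(\MyState_\ell,\Action_\ell)\sim\pi\mid(\state,\action)}[\Qhat^\pi_\ell - \T^\pi_\ell \Qhat^\pi_{\ell+1}]$ and then bounds each summand, whereas you carry out the backwards induction step-by-step; these are the same telescoping argument.
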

\begin{proof}
Introduce the shorthand \mbox{$\Delta_\hstep(\state, \action) \defeq
  \Qhat^\pi_\hstep(\state, \action) - Q_\hstep^\pi(\state, \action)$}
for the error at stage $\hstep$ to be bounded.  Since
\mbox{$Q_\hstep^\pi = \T^\pi_\hstep(Q^\pi_{h+1})$,} we can write
\begin{align*}
  \Delta_\hstep(\state, \action) & = \Qhat^\pi_\hstep(\state, \action)
  - Q^\pi_\hstep(\state, \action) \\
& = \Qhat^\pi_\hstep(\state, \action) - (\T^\pi_\hstep
  \Qhat^\pi_{h+1})(\state, \action) + (\T^\pi_\hstep
  \Qhat^\pi_{h+1})(\state, \action) -
  \T^\pi_\hstep(Q_{\hstep+1}^\pi)(\state, \action) \\
& = \Qhat^\pi_\hstep(\state, \action) - (\T^\pi_\hstep
  \Qhat^\pi_{h+1})(\state, \action) + \Es{\trans_\hstep(\state,
    \action)} \Ea{\pi(\cdot \mid \MyState')} \Big [
    \Qhat^\pi_{\hstep+1}(\MyState', \Action') - Q^\pi_{h+1}(\MyState',
    \Action') \Big] \\
& = \sum_{\ell=h}^\horizon \E_{(\MyState_\ell, \Action_\ell) \sim \pi
    \mid (\state, \action)} \Big[\Qhat_\ell^\pi(\MyState_\ell,
    \Action_\ell) - \T^\pi_\ell(\Qhat^\pi_{\ell+1})(\MyState_\ell,
    \Action_\ell) \Big],
\end{align*}
where the final equality follows by induction.

From the definition~\eqref{EqnDefnWhat} of $\what$ and the function
estimate $\Qhat^\pi_\ell(\state, \action) = \inprod{\phi_\ell(\state,
  \action)}{\what^\pi_\ell}$, combined with the Bellman approximation
condition, we have
\begin{align*}
  \Big|\Qhat_\ell^\pi(\state, \action) - (\T^\pi_\ell
  \Qhat^\pi_{\ell+1})(\state, \action) \Big| & \leq
  \AppError{\pi}{\ell}(\Qhat^\pi_{\ell+1}) \; \leq \;
  \misepsilon_\ell,
\end{align*}
uniformly over all $\ell$, and over all state-action pairs $(\state,
\action)$.  Summing these bounds completes the proof.
\end{proof}

\paragraph{Proof of claim~\eqref{EqnClaimTwo}:}
In order to prove this claim, we need to exhibit a sequence $\xi =
(\xihat_1, \ldots, \xihat_\horizon)$ such that the pair $(\xihat,
\what)$ are feasible for the critic's convex
program~\eqref{EqnCriticProgram}.  In particular, we need to ensure
the following three conditions:
\begin{enumerate}
\item[(a)] $\|\what^\pi_\hstep\|_2 \leq \rad_\hstep$
  for all $\hstep \in [\horizon]$
\item[(b)] $\Signorm{\xihat_h} \leq \pess_h$ for all $h \in
  [\horizon]$.
\item[(c)] We have $\what^\pi_\hstep = \xihat^\pi_\hstep +
  \regest^\pi_\hstep(\Qhat^\pi_{\hstep+1})$ for all $h \in [\hstep]$.
\end{enumerate}
Note that condition (a) is automatically satisfied by the
definition~\eqref{EqnDefnWhat} of $\what$, since the projection
$\SupProj^\pi_{\hstep}$ imposes this Euclidean norm bound.

It remains to exhibit a choice of $\xihat$ such that conditions (b)
and (c) hold.  Since $\what^\pi_{\hstep} =
\SupProj^\pi_{\hstep}(\Qhat^\pi_{\hstep})$ by definition, condition (c)
forces us to set
\begin{align*}
  \xihat^\pi_\hstep = \SupProj^\pi_{\hstep}(\Qhat^\pi_{\hstep+1}) -
  \regest^\pi_\hstep(\Qhat^\pi_{\hstep+1}) \; = \;
-  \TotError{\pi}{\hstep}(\Qhat^\pi_{\hstep+1}).
\end{align*}
But since the event $\GoodEvent(\pess)$ holds by assumption, we have
\begin{align*}
\Signorm{\xihat^\pi_\hstep} & =
\Signorm{\TotError{\pi}{\hstep}(\Qhat^\pi_{\hstep+1})} \leq \pess_h,
\end{align*}
showing that this choice of $\xihat$ satisfies condition (b).


\subsubsection{Proof of part (b)}

Here we prove the bound~\eqref{EqnCriticPitilMain} stated in part (b)
of the lemma, which provides an inequality on the value function error
for an arbitrary policy.

Our proof is based on establishing an auxiliary result that implies
the claim.  In particular, we first show that for any policy $\pitil$,
we have
\begin{align}
\label{EqnSubClaim}  
\Big|V^{\pitil}_{1,\IndMDP{\pi}}(s_1) - V^{\pitil}_{1}(s_1) \Big| \leq
\sumh \Signorminv{\MeanVec{\pitil}_\hstep} \left \{ \pess_\hstep +
\Signorm{\TotError{\pi}{\hstep}(\Qubar^\pi_{\hstep +1})} \right \} +
\sumh \misepsilon_\hstep,
\end{align}
where $\MeanVec{\pitil}_\hstep \defeq \E_{( \MyState_\hstep,
  \Action_\hstep) \sim \pitil} [\phi(\MyState_\hstep,
  \Action_\hstep)]$.  Since
$\Signorm{\TotError{\pi}{\hstep}(\Qubar^\pi_{\hstep +1})} \leq
\pess_\hstep$ conditioned on $\GoodEvent(\pess)$, this implies the
claim.

Let us now prove the auxiliary claim~\eqref{EqnSubClaim}.  First, we
observe that by definition, the perturbation in the reward can be
written as
\begin{align*}
\rewardhat^\pi_\hstep(\state, \action) - \reward_\hstep(\state,
\action) & \stackrel{(i)}{=}
\inprod{\phi_h(\state,\action)}{\wunder_\hstep^\pi} -
\T^\pi_h(\Qubar_{\hstep+1}^\pi)(\state, \action) \\
& \stackrel{(ii)}{=}
\inprod{\phi_h(\state,\action)}{\xiunder_\hstep^\pi} +
\inprod{\phi_h(\state,
  \action)}{\regest^\pi_\hstep(\Qubar^\pi_{\hstep+1})} -
\T^\pi_h(\Qubar_{\hstep+1}^\pi)(\state, \action) \\
& \stackrel{(iii)}{=}
\inprod{\phi_h(\state,\action)}{\xiunder_\hstep^\pi} +
\inprod{\phi_h(\state,
  \action)}{\TotError{\pi}{\hstep}(\Qubar^\pi_{\hstep+1})} +
\AppError{\pi}{\hstep}(\Qubar^\pi_{\hstep+1})(\state, \action),
\end{align*}
where step (i) uses the definition $\Qubar^\pi_\hstep(\state, \action)
= \inprod{\phi_h(\state, \action)}{\wubar^\pi_\hstep}$; step (ii) uses
the relation \mbox{$\wubar^\pi_\hstep = \xiunder^\pi_\hstep +
  \regest^\pi_\hstep(\Qubar^\pi_{\hstep +1 })$;} and step (iii)
involves adding and subtracting $\inprod{\phi_h(\state,
  \action)}{\SupProj^\pi_\hstep(\Qubar^\pi_{\hstep+1})}$, and using
the definitions of the approximation
error~\eqref{EqnApproximationError} and the error
operator~\eqref{EqnDefnParError}.

Since the induced MDP differs from the original only by the reward
perturbation, we have
\begin{align*}
 \Big|V^{\pitil}_{1,\IndMDP{\pi}}(s_1) - V^{\pitil}_{1}(s_1)\Big| & =
 \Big|\sumh \E_{(\MyState_\hstep, \Action_\hstep)\sim \pitil}
 \Big[\rewardhat^\pi_\hstep(\MyState_\hstep, \Action_\hstep) -
   \reward_\hstep(\MyState_\hstep, \Action_\hstep) \Big] \Big| \\
 & = \Big| \sumh \E_{(\MyState_\hstep, \Action_\hstep)\sim \pitil}
 \Big[ \inprod{\phi_h(\MyState_\hstep,
     \Action_\hstep)}{\xiunder_\hstep^\pi +
     \TotError{\pi}{\hstep}(\Qubar^\pi_{\hstep+1})} +
   \AppError{\pi}{\hstep}(\Qubar^\pi_{\hstep+1})(\MyState_\hstep,
   \Action_\hstep) \Big] \Big|.
\end{align*}
We now observe that
$|\AppError{\pi}{\hstep}(\Qubar^\pi_{\hstep+1})(\MyState_\hstep,
\Action_\hstep)| \leq \misepsilon_h$ by the Bellman closure
assumption.  As for the first term, introducing the shorthand
$\MeanVec{\pitil}_\hstep \defeq \E_{(\MyState_\hstep,
  \Action_\hstep)\sim \pitil} \big[\phi_h(\MyState_\hstep,
  \Action_\hstep) \big]$, we have
\begin{align*}
\E_{(\MyState_\hstep, \Action_\hstep)\sim \pitil} \Big[
  \inprod{\phi_h(\MyState_\hstep, \Action_\hstep)}{\xiunder_\hstep^\pi
    + \TotError{\pi}{\hstep}(\Qubar^\pi_{\hstep+1})} \Big] & \leq
\Signorminv{ \MeanVec{\pitil}_\hstep} \; \Signorm{ \xiunder_\hstep^\pi
  + \TotError{\pi}{\hstep}(\Qubar^\pi_{\hstep+1})} \\
& \leq \Signorminv{ \MeanVec{\pitil}_\hstep} \Big \{ \pess_h +
\Signorm{ \TotError{\pi}{\hstep}(\Qubar^\pi_{\hstep+1})} \Big \},
\end{align*}
where the final step combines the triangle inequality, with the fact
that $\Signorm{\xiunder_\hstep^\pi} \leq \pess_\hstep$, since
$\xiunder_\hstep^\pi$ must be feasible for the critic's convex
program~\eqref{EqnCriticProgram}.  Putting together the pieces yields
the claim~\eqref{EqnSubClaim}.


\subsection{Proof of Lemma~\ref{LemGoodProbability}}
\label{SecProofLemGoodProbability}

We now prove Lemma~\ref{LemGoodProbability}, which asserts that the
good event $\GoodEvent(\delta)$, as defined in
equation~\eqref{EqnGoodEventMain}, holds with high probability when
the pessimism parameters are chosen according to
equation~\eqref{EqnPessChoice}.

Recall from equation~\eqref{EqnDefnParError} that for any pair $(Q,
\pi)$, the associated parameter error is given by the difference
\mbox{$\TotError{\pi}{\hstep}(Q) =\regest^\pi_\hstep(Q) -
  \SupProj^\pi_\hstep(Q)$.}  We begin with a simple lemma that
decomposes this error into three terms.  In order to state the lemma,
we introduce two forms of error variables: statistical and
approximation-theoretic.

Recall that $\Index_\hstep$ denotes the subset of indices associated
with time step $\hstep$.  The first noise variables take the form
\begin{subequations}
\label{EqnErrorSubterms}
\begin{align}
\label{EqnEtaNoise}
\eta_{hk}(Q,\pi) & \defeq r_{hk} + \Ea{\pi(\cdot\mid s_{hk})}
Q(\state_{h+1,k}, \Action') - (\T^\pi_\hstep Q)(s_{hk},a_{hk}),
  \end{align}
defined for each $\hstep \in [\horizon]$ and $k \in \Index_\hstep$.
Note that conditionally on the pair $(\state_{hk}, \action_{hk})$, our
sampling model and the definition of the Bellman operator
$\T^\pi_\hstep$ ensures that each $\eta_{hk}$ is zero-mean random
variable, corresponding to a form of statistical error.  Our analysis
also involves some approximation error terms, in particular via the
quantities
\begin{align}
\label{EqnDeltaNoise}    
\Delta_{hk}(Q,\pi) & \defeq - \AppError{\pi}{\hstep}(Q)(\state_{hk},
\action_{hk}) \; = \; (\T^\pi_\hstep Q)(\state_{hk}, \action_{hk}) -
\inprod{\phi_\hstep(\state_{hk},
  \action_{hk})}{\SupProj^\pi_\hstep(Q)}
\end{align}
\end{subequations}
With these definitions, we have the following guarantee:
\begin{lemma}[Decomposition of $\TotError{\pi}{\hstep}(Q)$]
\label{LemParErrorDecomp}
For any pair $(Q, \pi)$, we have the decomposition
\begin{align}
\label{EqnParErrorDecomp}
\TotError{\pi}{\hstep}(Q) & = e^\eta_\hstep(Q,\pi) +
e^\lambda_\hstep(Q,\pi) + e^\Delta_\hstep(Q,\pi),
\end{align}
where the three error terms are given by
\begin{subequations}
\begin{align}
\label{EqnNoiseError}
e^\eta_\hstep(Q,\pi) & \defeq \Sigma^{-1}_\hstep \sum_{k \in
  \Index_\hstep} \phi_{hk} \eta_{hk}(Q,\pi), & \text{(Statistical
  estimation error)} \\
\label{EqnRegError}
e^\lambda_\hstep(Q,\pi) & \defeq -\lambda \Sigma^{-1}_\hstep
\SupProj^\pi_\hstep(Q), & \text{(Regularization error)}, \quad
\mbox{and} \\
\label{EqnClosureError}
e^\Delta_\hstep(Q,\pi) & \defeq \Sigma^{-1}_\hstep \sum_{k \in
  \Index_\hstep} \phi_{hk} \Delta_{hk}(Q; \pi) & \text{(Approximation
  error)}.
\end{align}
\end{subequations}
\end{lemma}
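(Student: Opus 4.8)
The plan is to establish the decomposition~\eqref{EqnParErrorDecomp} as a purely algebraic identity, obtained by inserting and cancelling two auxiliary quantities inside the definition of the regression operator. I would start from the definition~\eqref{EqnDefnRegressionOperator}, which gives $\regest^\pi_\hstep(Q) = \Sigma^{-1}_\hstep \sum_{k \in \Index_\hstep} \phi_{hk}\big\{ r_{hk} + \Ea{\pi(\cdot\mid s_{hk})} Q(\state_{h+1,k},\Action') \big\}$, and use the definition~\eqref{EqnEtaNoise} of the noise variable to write $r_{hk} + \Ea{\pi(\cdot\mid s_{hk})} Q(\state_{h+1,k},\Action') = \eta_{hk}(Q,\pi) + (\T^\pi_\hstep Q)(s_{hk}, a_{hk})$. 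Substituting this identity isolates the statistical term $e^\eta_\hstep(Q,\pi) = \Sigma^{-1}_\hstep \sum_k \phi_{hk}\eta_{hk}(Q,\pi)$, leaving behind the residual term $\Sigma^{-1}_\hstep \sum_k \phi_{hk}(\T^\pi_\hstep Q)(s_{hk},a_{hk})$.

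The second step treats this residual. Using the definition~\eqref{EqnDeltaNoise} of $\Delta_{hk}$, I would write $(\T^\pi_\hstep Q)(s_{hk},a_{hk}) = \Delta_{hk}(Q,\pi) + \inprod{\phi_{hk}}{\SupProj^\pi_\hstep(Q)}$, which peels off the approximation term $e^\Delta_\hstep(Q,\pi) = \Sigma^{-1}_\hstep \sum_k \phi_{hk}\Delta_{hk}(Q,\pi)$ and leaves the term $\Sigma^{-1}_\hstep \big(\sum_k \phi_{hk}\phi_{hk}^\top\big)\SupProj^\pi_\hstep(Q)$. The key (and essentially only) point that requires care is the identity $\sum_{k\in\Index_\hstep}\phi_{hk}\phi_{hk}^\top = \Sigma_\hstep - \lambda I_{d\times d}$, which follows directly from the definition~\eqref{EqnCovariance} of the cumulative covariance matrix (here $\lambda = 1$, since the regularizer is the identity; the lemma states it with $\lambda$ generic). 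Substituting it produces $\Sigma^{-1}_\hstep(\Sigma_\hstep - \lambda I)\SupProj^\pi_\hstep(Q) = \SupProj^\pi_\hstep(Q) - \lambda \Sigma^{-1}_\hstep \SupProj^\pi_\hstep(Q)$, the second summand being exactly the regularization term $e^\lambda_\hstep(Q,\pi)$.

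Collecting the pieces gives $\regest^\pi_\hstep(Q) = e^\eta_\hstep(Q,\pi) + e^\Delta_\hstep(Q,\pi) + e^\lambda_\hstep(Q,\pi) + \SupProj^\pi_\hstep(Q)$, and subtracting $\SupProj^\pi_\hstep(Q)$ from both sides yields $\TotError{\pi}{\hstep}(Q) = \regest^\pi_\hstep(Q) - \SupProj^\pi_\hstep(Q) = e^\eta_\hstep(Q,\pi) + e^\lambda_\hstep(Q,\pi) + e^\Delta_\hstep(Q,\pi)$, which is~\eqref{EqnParErrorDecomp}. I do not anticipate any genuine obstacle: the statement is a bookkeeping identity whose content lies entirely in the choice of which quantities to add and subtract, so the only thing to be disciplined about is splitting $\Sigma_\hstep$ into its sample-covariance and ridge components. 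The statistical content — that $e^\eta_\hstep$ is an average of conditionally mean-zero noise, that $e^\Delta_\hstep$ is controlled by the Bellman-closedness error $\misepsilon_\hstep$, and that $e^\lambda_\hstep$ is controlled by the radius $\rad_\hstep$ — plays no role in this lemma and is deferred to the concentration argument that follows.
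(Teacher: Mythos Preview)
Your proposal is correct and follows essentially the same route as the paper: start from the definition of $\regest^\pi_\hstep(Q)$, add and subtract $(\T^\pi_\hstep Q)(s_{hk},a_{hk})$ to isolate $e^\eta_\hstep$, then add and subtract $\inprod{\phi_{hk}}{\SupProj^\pi_\hstep(Q)}$ to isolate $e^\Delta_\hstep$, and finally use $\sum_k \phi_{hk}\phi_{hk}^\top = \Sigma_\hstep - \lambda I$ to produce $\SupProj^\pi_\hstep(Q) + e^\lambda_\hstep$. Your write-up is in fact cleaner than the paper's, which carries along some extraneous $\xi_\hstep$ and $w_\hstep(Q,\pi,\xi_\hstep)$ notation that is not needed for the identity.
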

\noindent See
Section~\ref{SecProofLemParErrorDecomp} for the proof of this claim.

The remainder of our analysis is focused on bounding these three
terms.  Analysis of the regularization error and approximation error
terms is straightforward, whereas bounding the statistical estimation
error requires more technical effort.  We begin with the two easy
terms.

\paragraph{Regularization error:}

Beginning with the definition~\eqref{EqnRegError}, we have
\begin{align}
\label{EqnRegBound}
\Signorm{e_\hstep^\lambda(Q,\pi)} & = \lambda \Signorminv{
  \SupProj^\pi_\hstep(Q)} \stackrel{(i)}{\leq} \sqrt{\lambda}
\|\SupProj^\pi_\hstep(Q) \|_2 \stackrel{(ii)}{\leq} \sqrt{\lambda},
\end{align}
where step (i) follows since $\Sigma_h \succeq \lambda I$; and
inequality (ii) follows from the bound
\mbox{$\|\SupProj^\pi_\hstep(Q)\|_2 \leq \rad_\hstep \leq 1$},
guaranteed by the definition of $\SupProj^\pi_\hstep$.


\paragraph{Approximation error:}
By definition, we have $\Signorm{e^\Delta_\hstep(Q,\pi)} =
\Signorminv{ \sum_{k \in \Index_\hstep} \phi_{hk} \Delta_{hk}(Q,
  \pi)}$.  By the Bellman approximation condition, we have
$|\Delta_{hk}(Q, \pi)| \leq \misepsilon_h$ uniformly over all $k$.
Consequently, applying Lemma 8 (Projection Bound) from the
paper~\cite{zanette2020learning} guarantees that
\begin{align}
\label{EqnApproxBound}
\Signorm{e^\Delta_\hstep(Q,\pi)} & \leq \sqrt{\Nsamp_\hstep}
\misepsilon_\hstep.
\end{align}

\paragraph{Statistical estimation error:}

Lastly, we turn to the analysis of the statistical estimation error.
In particular, we prove the following guarantee:

\begin{lemma}
  \label{LemStatisticalError}
There is a universal constant $c > 0$ such that
\begin{multline}
\label{EqnStatBound}  
\Signorm{e_\hstep^\eta(Q,\pi)}^2 \leq c \left \{1 + d_\hstep \log
\big( 1 + \tfrac{\numepoch}{d_\hstep \lambda} \big) + d_\hstep\log \big(1 + 8
\sqrt{\numepoch} \big) + d_\hstep \log \(1 + 16 R\sqrt{T} \) +
\log\frac{\horizon}{\delta} \right \}
\end{multline}
uniformly over all $Q \in \Qlin_\hstep$, $\pi \in \PiSoft(R)$ and
$\hstep \in [\horizon]$ with probability at least $1 - \delta$.
\end{lemma}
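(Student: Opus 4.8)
The plan is to bound the self-normalized quantity $\Signorm{e^\eta_\hstep(Q,\pi)}^2 = \big(\sum_{k\in\Index_\hstep}\phi_{hk}\eta_{hk}(Q,\pi)\big)^\top \Sigma_\hstep^{-1}\big(\sum_{k\in\Index_\hstep}\phi_{hk}\eta_{hk}(Q,\pi)\big)$ uniformly over $Q\in\Qlin_\hstep$ and $\pi\in\PiSoft(R)$. First I would fix a single pair $(Q,\pi)$ and observe that, under Assumption~\ref{AssData}, the variables $\{\eta_{hk}(Q,\pi)\}_{k\in\Index_\hstep}$ form a martingale difference sequence adapted to the filtration $\{\Fcal_i\}$: conditionally on $(\state_{hk},\action_{hk})$ (which is $\Fcal_i$-measurable), $\eta_{hk}$ has mean zero by the definition of $\T^\pi_\hstep$, and it is $O(1)$-sub-Gaussian because $r_{hk}$ is $1$-sub-Gaussian and the remaining terms, being inner products of unit-norm features with weight vectors of norm at most $\rad\le 1$, are bounded by a constant. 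Then the self-normalized martingale bound of Abbasi-Yadkori et al.\ (the ``elliptical potential'' / self-normalized tail inequality, with regularizer $\lambda = 1$ built into $\Sigma_\hstep$) yields, for this fixed pair and any $\delta'\in(0,1)$,
\begin{align*}
\Signorm{e^\eta_\hstep(Q,\pi)}^2 \;\lesssim\; \log\frac{\det(\Sigma_\hstep)^{1/2}}{\delta'} \;\lesssim\; d_\hstep\log\Big(1+\tfrac{\Nsamp_\hstep}{d_\hstep}\Big) + \log\tfrac1{\delta'}
\end{align*}
with probability at least $1-\delta'$, where the determinant bound uses $\|\phi_{hk}\|_2\le 1$ and $\lambda=1$.

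Next I would upgrade this pointwise bound to a uniform one via a covering argument. The key point—and the reason pessimism was injected by parameter perturbation rather than bonuses—is that $\Qlin_\hstep$ is just the linear class $\{\,(\state,\action)\mapsto\inprod{\phi(\state,\action)}{w}:\|w\|_2\le \rad_\hstep\,\}$, so its $\varepsilon$-cover in sup-norm has $\log$-cardinality $\lesssim d_\hstep\log(1+\rad_\hstep/\varepsilon)$; similarly, the soft-max policy class $\PiSoft(R)$ is parametrized by $\theta\in\B_{d_\hstep}(R)$, and since the soft-max map is Lipschitz in $\theta$ (uniformly over states, using $\|\phi\|_2\le 1$), its sup-norm cover also has $\log$-cardinality $\lesssim d_\hstep\log(1+R/\varepsilon)$. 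I would then show that $\eta_{hk}(Q,\pi)$—and hence $e^\eta_\hstep(Q,\pi)$—is Lipschitz in $(Q,\pi)$ with respect to these covering metrics, so that discretizing at scale $\varepsilon\sim 1/\sqrt{\Nsamp_\hstep}$ (which is why the $\log(1+8\sqrt{\numepoch})$ and $\log(1+16R\sqrt T)$ terms appear, recalling $\Nsamp_\hstep\le\numepoch$ and $R=\radactor\le\actortot\eta$, with $T$ in the statement playing the role of $\numepoch$) controls the discretization error by a constant. Applying the pointwise bound with $\delta'=\delta/(\horizon\cdot N_{\mathrm{cover}})$ and a union bound over the net and over $\hstep\in[\horizon]$ gives the stated bound.

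The main obstacle I expect is the Lipschitz-continuity / discretization step for the policy argument: $\eta_{hk}(Q,\pi)$ depends on $\pi$ through $\Ea{\pi(\cdot\mid s_{hk})}Q(\state_{h+1,k},\Action')$ and on $\T^\pi_\hstep Q$, and one must show that perturbing $\theta$ by $\varepsilon$ (in $\ell_2$) perturbs these quantities by $O(\varepsilon)$ uniformly, then propagate this through $\Sigma_\hstep^{-1}\sum_k\phi_{hk}(\cdot)$—using $\|\Sigma_\hstep^{-1}\sum_k\phi_{hk}\|_{\Sigma_\hstep}\le\sqrt{\Nsamp_\hstep}$ or an analogous crude bound—so the accumulated discretization error stays $O(1)$ after choosing $\varepsilon\sim 1/\sqrt{\Nsamp_\hstep}$. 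A secondary technical care-point is that the sub-Gaussian parameter and the constants must be tracked so they fold into the universal constant $c$; the boundedness of all terms by $O(1)$ (guaranteed by $\rad_\hstep\le 1$ and $\|\phi\|_2\le 1$) makes this routine. Everything else—the martingale structure, the determinant bound, the union bound—is standard once these two points are in place.
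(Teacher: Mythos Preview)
Your proposal is correct and follows essentially the same approach as the paper: fix $(Q,\pi)$, apply the self-normalized martingale bound of Abbasi-Yadkori et al., then upgrade to a uniform bound via $\epsilon$-covers of $\Qlin_\hstep$ and $\PiSoft(R)$ (both of log-cardinality $\lesssim d_\hstep\log(1/\epsilon)$), with the soft-max Lipschitz step and the choice $\epsilon\sim 1/\sqrt{\numepoch}$ exactly as you describe. The only cosmetic difference is that the paper first splits $e^\eta_\hstep$ into a reward-noise piece (independent of $(Q,\pi)$, so no discretization needed) and a transition-noise piece, but this does not change the argument in any essential way.
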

\noindent See Section~\ref{SecProofLemStatisticalError} for the proof
of this claim.

\paragraph{Putting together the pieces:}  By combining our three
bounds---namely, equations~\eqref{EqnRegBound},
~\eqref{EqnApproxBound} and~\eqref{EqnStatBound}, we conclude that
with the choice
\begin{multline*}
  \pess_h(\delta) \defeq \sqrt{\lambda} + \sqrt{\Nsamp_\hstep}
  \misepsilon_\hstep + \\
  c \left \{1 + d_\hstep \log \big( 1 +
  \tfrac{\numepoch}{d_\hstep \lambda} \big) + d_\hstep\log \big(1 + 8
  \sqrt{\numepoch} \big) + d_\hstep \log \(1 + 16 R\sqrt{T} \) +
  \log\frac{\horizon}{\delta} \right \}^{1/2},
\end{multline*}
the good event $\GoodEvent(\delta)$ holds with probability at least
$1-\delta$.  This completes the proof of
Lemma~\ref{LemGoodProbability}.

It remains to prove the two auxiliary lemmas that we stated: namely,
Lemma~\ref{LemParErrorDecomp} that gave a decomposition of the
parameter error, and Lemma~\ref{LemStatisticalError} that bounded the
statistical error.  We do so in
Sections~\ref{SecProofLemParErrorDecomp}
and~\ref{SecProofLemStatisticalError}, respectively.
  

\subsubsection{Proof of Lemma~\ref{LemParErrorDecomp}}
\label{SecProofLemParErrorDecomp}

Starting with the definition~\eqref{EqnDefnRegressionOperator} of the
regression operator $\regest^\pi_\hstep$, we have
\begin{align*}
  \regest^\pi_\hstep(Q) & \defeq \Sigma^{-1}_\hstep \sum_{k \in
    \Index_\hstep} \phi_{hk} [r_{hk} + \Ea{\pi(\cdot \mid s_{hk})}
    Q(\state_{h+1,k}, \Action')] \\
& \stackrel{(i)}{=} \Sigma^{-1}_\hstep \sum_{k \in \Index_\hstep}
  \phi_{hk} [(\T^\pi_\hstep Q)(\state_{hk}, \action_{hk})] +
  \underbrace{\Sigma^{-1}_\hstep \sum_{k \in \Index_\hstep} \phi_{hk}
    \eta_{hk}(Q, \pi)}_{ = e^\eta_\hstep(Q, \pi)}
\end{align*}
where equality (i) follows by adding and subtracting terms, and using
the definition~\eqref{EqnEtaNoise} of $\eta_{hk}$.

Next we use the definition~\eqref{EqnDeltaNoise} of the approximation
error terms $\Delta_{hk}$ to find that
\begin{align*}
  \regest^\pi_\hstep(Q) & = \xi_\hstep + \Sigma^{-1}_\hstep \( \sum_{k
    \in \Index_\hstep} \phi_{hk} \big[
    \inprod{\phi_{hk}}{\SupProj^\pi_\hstep(Q)} + \Delta_{hk}(Q,\pi)
    \big] \) + e^\eta_\hstep(Q, \pi)
\end{align*}
Since $\Sigma_\hstep = \sum_{k \in \Index_\hstep} \phi_{hk}
\phi_{hk}^\top + \lambda I$, we can write
\begin{align*}
  w_\hstep(Q,\pi, \xi_\hstep) & = \xi_\hstep + \Sigma^{-1}_\hstep \Big
  \{ \Sigma_\hstep \wstar_\hstep(Q, \pi) + \sum_{k \in \Index_\hstep}
  \phi_{hk} \Delta_{hk}(Q,\pi) - \lambda \wstar_\hstep(Q, \pi) \Big \}
  + e^\eta_\hstep \\
& = \xi_\hstep + \wstar_\hstep(Q,\pi) +\Sigma^{-1}_\hstep \( \sumk
  \phi_{hk} \Delta_{hk}(Q,\pi) - \lambda \wstar_\hstep(Q,\pi)\) +
  e^\eta_\hstep \\
& = \xi_\hstep + \wstar_\hstep(Q,\pi) + e^\eta_\hstep +
  e^\lambda_\hstep + e^\Delta_\hstep,
\end{align*}
which completes the proof.


\subsubsection{Proof of Lemma~\ref{LemStatisticalError}}
\label{SecProofLemStatisticalError}

From the definition~\eqref{EqnDefnQclass}, we need to study the
constrained class of linear action-value functions based on radii
$\rad_\hstep \in (0,1]$ for all $\hstep \in [\horizon]$.  As for the
  constraint defining the soft-max policy
  class~\eqref{EqnDefnSoftMax}, let us upper bound how large the
  $\ell_2$-norm of the actor's parameter vector can be over
  $\actortot$ iterations.
  
Based on the actor's updates, we have the bound
\begin{align*}
  \| \theta_{\actorit,\hstep} \|_2 = \| \sum_{\actorit = 1}^\actortot
  \eta w_{\actorit,\hstep} \|_2 \leq \eta \sum_{\actorit =
    1}^\actortot \| w_{\actorit,\hstep} \|_2 \stackrel{(i)}{\leq} \eta
  \actortot \rad_\hstep \stackrel{(ii)}{\leq} \eta \actortot,
\end{align*}
where step (i) follows from the definition of the critic's
program~\eqref{EqnCriticProgram}, and step (ii) follows from the
assumption $\rad_\hstep \in (0, 1]$.  Thus, we are assured that $R =
  \eta \actortot$ is an upper bound on this $\ell_2$-norm.

We make use of a discretization argument to control the associated
empirical process.  Let $\Ncover_\infty(\epsilon; \Q)$ denote the
cardinality of the smallest $\epsilon$-covering of $\Q$ in the
sup-norm---that is, a collection $\{Q^i\}_{i=1}^N$ such that for all
$Q \in \Q$, we can find some $i \in [N]$ such that
\begin{align*}
\|Q - Q^i\|_\infty = \sup_{(\state, \action)} |Q(\state, \action) -
Q^i(\state, \action)| \leq \epsilon.
\end{align*}
Similarly, we let \mbox{$\Ncover_{\infty,1}(\epsilon; \Pi(R))$} denote
an \mbox{$\epsilon$-cover} of $\Pi(R)$ when measuring distances with
the norm
  \begin{align}
\label{EqnPiNorm}    
\|\pi - \pi'\|_{\infty, 1} & \defeq \sup_{\state} \sum_{\action
  \in \ActionSpace} \big| \pi(\action \mid \state) - \pi'(\action \mid
\state) \big|.
  \end{align}

We have the following bounds on these covering numbers:
\begin{lemma}[Covering number bounds]
\label{LemCovering}  
For any $\epsilon \in (0,1)$, we have
\begin{subequations}
  \begin{align}
    \label{EqnQcover}
  \log \Ncover_\infty(\epsilon; \Q) & \leq d \log \(1 +
  \tfrac{2}{\epsilon} \) \qquad \mbox{and} \\
  \label{EqnPiCover}
  \log \Ncover_{\infty, 1} \( \epsilon; \Pi(R) \) & \leq d \log \(1 +
  \tfrac{16 R}{\epsilon} \).
\end{align}
\end{subequations}
\end{lemma}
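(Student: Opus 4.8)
The plan is to establish both covering-number bounds by combining the standard volumetric estimate for covering a Euclidean ball in $\R^d$ with an elementary Lipschitz estimate that transfers $\|\cdot\|_2$-closeness of the parameter vector into closeness of the induced linear function (in sup-norm) or soft-max policy (in the $\|\cdot\|_{\infty,1}$ norm). Recall first that for any radius $\rho>0$ the ball $\B_d(\rho)$ admits an $\epsilon$-cover in $\|\cdot\|_2$ of cardinality at most $(1+2\rho/\epsilon)^d$; this is the only geometric fact used, and it is entirely routine.

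For the bound~\eqref{EqnQcover}: take an $\epsilon$-cover $\{w^i\}_i$ of $\B_d(\rad)$ in $\|\cdot\|_2$; since $\rad\in(0,1]$ its cardinality is at most $(1+2/\epsilon)^d$. For any $Q_w=\inprod{\phi(\cdot,\cdot)}{w}\in\Q$, pick $w^i$ with $\|w-w^i\|_2\le\epsilon$; then by Cauchy--Schwarz and the normalization $\|\phi(\state,\action)\|_2\le1$ we have $|Q_w(\state,\action)-Q_{w^i}(\state,\action)|=|\inprod{\phi(\state,\action)}{w-w^i}|\le\epsilon$ uniformly over $(\state,\action)$. Hence $\{Q_{w^i}\}_i$ is an $\epsilon$-cover of $\Q$ in sup-norm, and taking logarithms yields~\eqref{EqnQcover}.

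For the bound~\eqref{EqnPiCover}: the key step is the Lipschitz estimate $\|\pi_\theta-\pi_{\theta'}\|_{\infty,1}\le 2\|\theta-\theta'\|_2$. To prove it, fix a state $\state$ and differentiate the soft-max $\pi_\theta(\action\mid\state)\propto e^{\inprod{\phi(\state,\action)}{\theta}}$, obtaining $\nabla_\theta\pi_\theta(\action\mid\state)=\pi_\theta(\action\mid\state)\big(\phi(\state,\action)-\E_{\Action'\sim\pi_\theta(\cdot\mid\state)}\phi(\state,\Action')\big)$. Since $\phi(\state,\action)$ and the expectation both have $\|\cdot\|_2\le1$, this gives $\|\nabla_\theta\pi_\theta(\action\mid\state)\|_2\le 2\,\pi_\theta(\action\mid\state)$, hence $\sum_{\action\in\ActionSpace}\|\nabla_\theta\pi_\theta(\action\mid\state)\|_2\le 2$ for every $\theta$. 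Along the segment $\theta(t)=(1-t)\theta'+t\theta$ we write $\pi_\theta(\action\mid\state)-\pi_{\theta'}(\action\mid\state)=\int_0^1\inprod{\nabla_\theta\pi_{\theta(t)}(\action\mid\state)}{\theta-\theta'}\,dt$; summing over $\action$, applying Cauchy--Schwarz inside the integral, interchanging the finite sum with the integral, and invoking the displayed gradient bound yields $\sum_{\action}|\pi_\theta(\action\mid\state)-\pi_{\theta'}(\action\mid\state)|\le 2\|\theta-\theta'\|_2$; taking the supremum over $\state$ proves the Lipschitz bound. Finally, take an $(\epsilon/2)$-cover $\{\theta^i\}_i$ of $\B_d(R)$ in $\|\cdot\|_2$, of cardinality at most $(1+4R/\epsilon)^d\le(1+16R/\epsilon)^d$; by the Lipschitz bound the policies $\{\pi_{\theta^i}\}_i$ form an $\epsilon$-cover of $\Pi(R)$ in $\|\cdot\|_{\infty,1}$, and taking logarithms gives~\eqref{EqnPiCover}.

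The geometric estimates are routine, so the only place that needs care is the soft-max Lipschitz bound. A naive coordinate-wise mean-value argument is not enough: it would replace $\pi_{\theta(t)}(\action\mid\state)$ by its supremum over the segment \emph{before} summing over $\action$, and these suprema need not sum to one. Integrating the gradient along the segment before summing over actions --- so that the identity $\sum_{\action}\pi_{\theta(t)}(\action\mid\state)=1$ can be used pointwise in $t$ --- is exactly what makes the argument close with the clean constant $2$, and it is the mild obstacle here.
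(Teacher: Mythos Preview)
Your proof is correct. The treatment of the value-function class $\Q$ is essentially identical to the paper's. For the policy class $\Pi(R)$, however, you take a different---and in fact cleaner---route.

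The paper establishes the Lipschitz bound $\sum_{\action}|\pi_{\theta'}(\action\mid\state)-\pi_\theta(\action\mid\state)|\le 8\|\theta-\theta'\|_2$ by an algebraic argument: it computes the ratio $\pi_{\theta'}/\pi_\theta$ explicitly, bounds it via $e^{2\|\theta-\theta'\|_2}$, and then linearizes using $e^x\le 1+2x$ for $x\in[0,1]$. This last step forces the side condition $\|\theta-\theta'\|_2\le \tfrac12$, which is harmless for the covering application (since $\epsilon/8<\tfrac12$) but makes the lemma slightly less self-contained. Your gradient-integration argument avoids the ratio manipulation entirely: the identity $\nabla_\theta\pi_\theta(\action\mid\state)=\pi_\theta(\action\mid\state)\big(\phi(\state,\action)-\E_{\Action'\sim\pi_\theta}\phi(\state,\Action')\big)$ immediately gives $\sum_\action\|\nabla_\theta\pi_\theta(\action\mid\state)\|_2\le 2$, and integrating along the segment yields the global Lipschitz constant $2$ with no restriction on $\|\theta-\theta'\|_2$. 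Your closing remark correctly identifies the one subtlety---summing over actions must happen \emph{inside} the integral so that $\sum_\action\pi_{\theta(t)}(\action\mid\state)=1$ can be used pointwise in $t$. The net effect is a sharper constant ($2$ versus $8$) and a slightly more robust lemma; the stated bound~\eqref{EqnPiCover} then follows with room to spare, since your $\epsilon/2$-cover has cardinality at most $(1+4R/\epsilon)^d\le(1+16R/\epsilon)^d$.
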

\noindent See Section~\ref{SecProofLemCovering} for the proofs of
these claims.

For any $\epspi \in (0,1)$, we define
\begin{align}
\beta(\epspi) & \defeq d \log \big( 1 + \tfrac{\numepoch}{d \lambda}
\big) + \log \Ncover_{\infty}(\epsq; \Q) + \log \Ncover_{\infty,
  1}(\epspi; \PiSoft) + \log\frac{\horizon}{\delta}
\end{align}

Given this definition and the bounds from Lemma~\ref{LemCovering}, the
proof of Lemma~\ref{LemStatisticalError} is reduced to showing that
for any $\epsilon \in (0,1)$, there is a universal constant $c$ such
that
\begin{align}
\label{EqnSupremum}  
\max_{\hstep \in [\horizon]} \sup_{ \substack{Q \in \Qlin_\hstep
    \\ \pi \in \PiSoft}} \Signorm{e_\hstep^\eta(Q,\pi)} \leq c \,
\sqrt{\beta(\epspi)} + 4 \sqrt{\numepoch} \epsq
\end{align}
with probability at least $1 - \delta$.  The claim stated in
Lemma~\ref{LemStatisticalError} follows from the choice $\epsilon =
\tfrac{1}{4 \sqrt{\numepoch}}$.  The remainder of our proof is devoted
to the proof of this claim.


\paragraph{Proof of the claim~\eqref{EqnSupremum}:}

Let us recall the definition
\begin{align*}\eta_{hk}(Q, \pi) = r_{hk} + \Ea{\pi_\hstep(\cdot\mid
  \state_{hk}) } Q(\state_{h+1,k}, \Action') - (\T^\pi_\hstep
  Q)(\state_{hk}, \action_{hk}).
\end{align*}
Consequently, by starting with the definition of $e_\hstep^\eta$ and
applying the triangle inequality, we obtain the upper bound
$\Signorm{e_\hstep^\eta(Q,\pi)} = \Signorminv{ \sumk \phi_{hk}
  \eta_{hk}(Q, \pi)} \leq \Term_1 + \Term_2(Q, \pi)$, where
\begin{align*}
\Term_1 & \defeq \Signorminv{\sumk \phi_{hk} [\underbrace{r_{hk} -
      r(s_{hk},a_{hk})}_{\defeq Y_{hk}}]} \; \; \mbox{and} \\
\Term_2(Q, \pi) & \defeq \Big\| \sumk \phi_{hk} [Q(s_{h+1,k},\pi) -
  \Es{\trans(\cdot \mid \state_{hk}, \action_{hk})} Q(\MyState',\pi)]
\Big\|_{\Sigma_\hstep^{-1}}
\end{align*}
For a fixed $(\pi,Q)$ and conditioned on the sampling history, both $\Term_1$ and $\Term_2$ are mean zero.
Note that $\Term_1$ is independent of the pair $(Q, \pi)$, so that its
analysis does not require discretization techniques.  On the other
hand, analyzing $\Term_2(Q, \pi)$ does require a reduction step via
discretization, with which we begin.

Introducing the shorthand $N = \Ncover(\epsq, \Q)$, let $\{Q^i
\}_{i=1}^N$ be an $\epsq$-cover of the set $\Q$ in the sup-norm.
Similarly, with the shorthand $J = \Ncover(\epspi, \Pi)$, let
$\{\pi^j\}_{j=1}^J$ be an $\epspi$-cover of $\Pi$ in the
norm~\eqref{EqnPiNorm}.  For a given $Q$, let $Q^i$ denote the member
of the cover such that $\|Q - Q^i\|_\infty \leq \epsq$.  With this
choice, we have
\begin{align*}
\Term_2(Q, \pi) & = \Term_2(Q^i, \pi) + \{ \Term_2(Q, \pi) -
\Term_2(Q^i, \pi) \}.
\end{align*}
Similarly, let $\pi^m$ be a member of the cover such that $\|\pi(\cdot
\mid \state) - \pi^m(\cdot \mid \state)\|_1 \leq \epspi$ for all $\state$.
With this choice, we have
\begin{align*}
\Term_2(Q, \pi) & \leq \Term_2(Q^i, \pi^m) + \underbrace{\{
  \Term_2(Q^i, \pi) - \Term_2(Q^i, \pi^m) \}}_{D^\pi} + \underbrace{\{
  \Term_2(Q, \pi) - \Term_2(Q^i, \pi) \}}_{D^Q}.
\end{align*}
We begin by bounding the two discretization errors.  By the triangle
inequality, we have
\begin{align*}
D^Q & \leq \Big\| \sumk \phi_{hk}[\underbrace{Q(s_{h+1,k},\pi) -
    Q^i(s_{h+1,k},\pi) + \Es{p(s_{hk},a_{hk})}(Q(\MyState',\pi) -
    Q^i(\MyState',\pi))}_{\defeq E^i_{hk}(Q, \pi)}]
\Big\|_{\Sigma_\hstep^{-1}}.
\end{align*}
Our choice of discretization ensures that $|E^i_{hk}(Q, \pi)| \leq 2
\epsq$ uniformly for all $(h,k)$ and $(Q, \pi)$.  Applying Lemma 8
(Projection Bound) from the paper~\cite{zanette2020learning} ensures
that \mbox{$D^Q \leq 2 \epsq \sqrt{\numepoch}$.}  To be clear, this is
a deterministic claim; it holds uniformly over the choices of $Q$,
$Q^i$, and $\pi$.  A similar argument yields that $D^\pi \leq 2 \epspi
\sqrt{\numepoch}$.

Putting togther the pieces yields that for any $(Q, \pi)$,
we have the bound
\begin{align}
  \label{EqnKeyDiscretization}
\Term_2(Q, \pi) & \leq \max_{ \substack{i \in [N] \\ j \in [M]}}
\Term_2(Q^i, \pi^j) + 4 \sqrt{\numepoch} \epsq.
\end{align}

We now need to bound $\Term_1$ along with $\Term_2(Q^i, \pi^j)$ for a
fixed pair $(Q^i, \pi^j)$.  In order to do so, we apply known
self-normalized tail bounds~\cite{PenLaiSha09}, which apply to sums of
the form $\| \sumk \phi_{hk} V_{hk} \|_{\Sigma_\hstep^{-1}}$, where
the $V_{hk}$ form a martingale difference sequence with conditionally
sub-Gaussian tails.  Note that $\Term_1$ is of this general form with
$V_{hk} = Y_{hk}$, which is a $1$-sub-Gaussian variable by assumption.
On the other hand, the variable $\Term_2(Q^i, \pi^j)$ is of this form
with
\begin{align*}
  V_{hk} & = Q^i(s_{h+1,k},\pi^j) - \Es{p(s_{hk},a_{hk})}
  Q^i(\MyState',\pi^j).
\end{align*}
Since $|V_{hk}| \leq 1$ due to the uniform boundedness of $Q^i$,
this is a $1$-sub-Gaussian variable as well.

Consequently, Theorem 1 from the paper~\cite{Abbasi11} ensures that
\begin{align*}
\Pro \( \max \{ \Term_1, \Term_2(Q^i, \pi^j) \} \geq \log\frac{\det \Sigma_\hstep}{\det \lambda I} +
2\log \frac{1}{\delta} \) & \leq \delta.
\end{align*}
Note that $\det \lambda I = \lambda^{d_\hstep}$.  Moreover, Lemma 10
(Determinant-Trace Inequality) in \cite{Abbasi11} yields $\log \det
\Sigma_\hstep \leq d_\hstep \log \( \lambda +
\tfrac{\numepoch}{d_\hstep}\)$.  

Putting together the pieces, taking a union bound over the two covers
yields that, for each fixed $\hstep \in [\horizon]$, we have
\begin{align*}
\| e_\hstep^\eta(Q,\pi)\|_{\Sigma^{-1}_\hstep} & \leq d_\hstep \log
\big( 1 + \tfrac{\numepoch}{d_\hstep \lambda} \big) + \log
\Ncover_{\infty}(\epsq; \Q) + \log \Ncover_{\infty, 1}(\epspi; \Pi) +
\log \(\tfrac{1}{\delta}\) + 4 \sqrt{\numepoch} \epsq
\end{align*}
with probability at least $1-\delta$.  Finally, we take a union bound
over all $h \in [\horizon]$, which forces us to redefine $\delta$ to
$\frac{\delta}{\horizon}$ in the above bound.  This completes the
proof of the uniform bound~\eqref{EqnSupremum}.


\subsubsection{Proof of Lemma~\ref{LemCovering}}
\label{SecProofLemCovering}

Since $\|\phi(\state, \action)\|_2 \leq 1$, for any pair of weight
vectors $w, w' \in \R^d$, we have $\sup_{(\state, \action)}
|\inprod{\phi(\state, \action)}{w - w'}\|_2 \leq \|w - w'\|_2$. Thus,
the bound~\eqref{EqnQcover} follows from standard results on coverings
of Euclidean balls (cf. Example 5.8 in the
book~\cite{wainwright2019high}).

As for the bound~\eqref{EqnPiCover}, we claim that
\begin{align}
\label{EqnNearby}  
  \sum_{\action \in \ActionSpace} \big| \pi_{\theta'}(\action \mid
  \state) - \pi_{\theta}(\action \mid \state)| & \leq 8 \| \theta -
  \theta'\|_2, \qquad \mbox{for all $\state \in \StateSpace$.}
  \end{align}
Taking this claim as given for the moment, it suffices to obtain an
$\epsilon/8$-cover of the ball $\B(R)$ in the $\ell_2$-norm, and
applying the same standard results yields the claimed
bound~\eqref{EqnPiCover}.

\noindent It remains to prove the claim~\eqref{EqnNearby}.


\paragraph{Proof of the claim~\eqref{EqnNearby}:}

Let us state and prove the claim~\eqref{EqnNearby} more formally as a
lemma.  It applies to the softmax policy \mbox{$\pi_{\theta}(\action
  \mid \state) = \frac{\exp \{ \inprod{\phi(\state, \action)}{\theta}
    \}}{ \sum_{\action' \in \ActionSpace} \exp(\inprod{\phi(\state,
      \action')}{ \theta} )}$.}

\begin{lemma}[Nearby Policies]
\label{lem:NearbyPolicies}
Consider a feature mapping $\phi: \StateSpace \times \ActionSpace
\rightarrow \R^d$ such that \mbox{$\|\phi(\state, \action)\|_2 \leq
  1$} uniformly for all pairs $(\state, \action)$.  Then \mbox{for all
  $\state \in \StateSpace$,} we have
\begin{align}
  \sum_{\action \in \ActionSpace} \big| \pi_{\theta'}(\action \mid
  \state) - \pi_{\theta}(\action \mid \state)| & \leq 8 \| \theta -
  \theta'\|_2, 
\end{align}
valid for any pair $\theta, \theta' \in \R^d$ such that $\|\theta -
\theta' \|_2 \leq \tfrac{1}{2}$.
\end{lemma}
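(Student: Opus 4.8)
The plan is to fix a state $\state \in \StateSpace$ and argue entirely at that state, with every bound uniform in $\state$. Write $\epsilon \defeq \|\theta - \theta'\|_2 \le \tfrac12$, and for each action $\action$ set $\delta_\action \defeq \inprod{\phi(\state, \action)}{\theta' - \theta}$; by Cauchy--Schwarz and the normalization $\|\phi(\state, \action)\|_2 \le 1$ we have $|\delta_\action| \le \epsilon \le \tfrac12$ for every $\action$. The key algebraic step is that passing from $\theta$ to $\theta'$ rescales each unnormalized softmax weight by the factor $e^{\delta_\action}$, so that
\begin{align*}
\pi_{\theta'}(\action \mid \state) \;=\; \frac{\pi_{\theta}(\action \mid \state)\, e^{\delta_\action}}{Z}, \qquad Z \defeq \sum_{\action' \in \ActionSpace} \pi_{\theta}(\action' \mid \state)\, e^{\delta_{\action'}},
\end{align*}
and consequently $\pi_{\theta'}(\action \mid \state) - \pi_{\theta}(\action \mid \state) = \tfrac{\pi_{\theta}(\action \mid \state)}{Z}\,\big(e^{\delta_\action} - Z\big)$.

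I would then sum absolute values over $\action$ and split $|e^{\delta_\action} - Z| \le |e^{\delta_\action} - 1| + |1 - Z|$. Using the elementary scalar inequality $|e^x - 1| \le 2|x|$, valid on $[-1,1]$, each term $|e^{\delta_\action} - 1|$ is at most $2\epsilon$; and since $Z$ is a convex combination of the $e^{\delta_{\action'}}$, the triangle inequality gives $|1 - Z| \le \sum_{\action'} \pi_{\theta}(\action' \mid \state)\, |e^{\delta_{\action'}} - 1| \le 2\epsilon$ as well. Hence $\sum_{\action} \pi_{\theta}(\action \mid \state)\,|e^{\delta_\action} - Z| \le 4\epsilon$. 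Finally $Z \ge e^{-\epsilon} \ge e^{-1/2}$, so
\begin{align*}
\sum_{\action \in \ActionSpace} \big| \pi_{\theta'}(\action \mid \state) - \pi_{\theta}(\action \mid \state) \big| \;\le\; \frac{4\epsilon}{Z} \;\le\; 4\epsilon\, e^{1/2} \;\le\; 8\epsilon,
\end{align*}
which is the claimed bound, uniformly over $\state$.

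There is no real obstacle here: the argument is a handful of elementary estimates, and the only points needing care are the verification of the ratio identity (which hinges on the shift $e^{\delta_\action}$ appearing identically in numerator and denominator) together with the two scalar facts $|e^x-1|\le 2|x|$ on $[-1,1]$ and $Z \ge e^{-\epsilon}$, both of which only use $|\delta_\action| \le \epsilon \le \tfrac12$. If one wanted the sharper constant $1$ and to drop the restriction $\epsilon \le \tfrac12$, an alternative is to integrate the identity $\tfrac{d}{dt}\pi_{\theta_t}(\action\mid\state) = \pi_{\theta_t}(\action\mid\state)\,\inprod{\phi(\state,\action) - \bar\phi_{\theta_t}(\state)}{\theta'-\theta}$ along the segment $\theta_t = \theta + t(\theta'-\theta)$, where $\bar\phi_{\theta_t}(\state) = \E_{\action\sim\pi_{\theta_t}}\phi(\state,\action)$, and then bound the resulting expected absolute deviation by $\sqrt{\mathrm{Var}_{\action\sim\pi_{\theta_t}}\inprod{\phi(\state,\action)}{\theta'-\theta}} \le \|\theta'-\theta\|_2$, using that the feature inner product ranges over an interval of length at most $2\|\theta'-\theta\|_2$; but the elementary version above already suffices for the stated lemma.
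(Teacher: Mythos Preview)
Your argument is correct. Both your proof and the paper's exploit the same multiplicative structure of the softmax, namely that $\pi_{\theta'}(\action\mid\state)=\pi_\theta(\action\mid\state)\,e^{\delta_\action}/Z$ with $Z=\sum_{\action'}\pi_\theta(\action'\mid\state)e^{\delta_{\action'}}$, but the way the bound is closed differs. The paper bounds the ratio $T=\pi_{\theta'}/\pi_\theta$ directly by $e^{2\|\theta-\theta'\|_2}\le 1+4\|\theta-\theta'\|_2$, obtains the one-sided inequality $\pi_{\theta'}-\pi_\theta\le 4\|\theta-\theta'\|_2\,\pi_\theta$, then invokes symmetry in $(\theta,\theta')$ to get $|\pi_{\theta'}-\pi_\theta|\le 4\|\theta-\theta'\|_2\max(\pi_\theta,\pi_{\theta'})$ and sums. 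You instead write the difference as $\pi_\theta(e^{\delta_\action}-Z)/Z$, split $|e^{\delta_\action}-Z|$ through $1$, and absorb the normalizer via $Z\ge e^{-1/2}$. Your route avoids the symmetry step and the $\max(\pi_\theta,\pi_{\theta'})$ bound, at the cost of needing the explicit lower bound on $Z$; both are equally elementary and of comparable length. Your side remark on the path-integration approach yielding a sharper constant (and no restriction on $\|\theta-\theta'\|_2$) is also a nice observation, though not needed here.
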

\begin{proof}
Dividing $\pi_{\theta'}(\state, \action)$ by $\pi_\theta(\state,
\action)$ yields
\begin{align*}
T \defeq \frac{\pi_{\theta'}(\action \mid \state)}{\pi_\theta(\action
  \mid \state)} & = \frac{e^{ \inprod{\phi(\state,
      \action)}{\theta'}}}{e^{\inprod{\phi(\state, \action)}{\theta}}}
\times \frac{\sum_{a''} e^{
    \inprod{\phi(s,a'')}{\theta}}}{\sum_{\atil}e^{
    \inprod{\phi(s,\atil)}{\theta'}}} \\
& = e^{\inprod{\phi(\state, \action)}{\theta'-\theta}} \times
\sum_{a''} \( e^{ \inprod{\phi(s,a'')}{\theta-\theta'} } \times
\frac{e^{ \inprod{\phi(s,a'')}{\theta'}} }{\sum_{\atil} e^{
    \inprod{\phi(s,\atil)}{\theta'}}}\) \\
& = e^{ \inprod{\phi(\state, \action)}{\theta'-\theta}} \times
\sum_{a''} \pi_{\theta'}(a'' \mid s) e^{ \inprod{\phi(s,a'')}{
    \theta-\theta'}}.
\end{align*}
By Cauchy-Schwarz and the assumption on $\phi$, we have the bound
$|\inprod{\theta(\state, \action)}{\gamma}| \leq \|\gamma\|_2$, valid
for any vector $\gamma$.  Monotonicity of the exponential allows us to
exponentiate this inequality.  Combined with the fact that
$\pi_{\theta'}(a'' \mid s) \geq 0$, we find that
\begin{align}
\label{EqnEspresso}
T \leq e^{\| \theta'-\theta\|_2} \; \sum_{a'' \in \ActionSpace}
\pi_{\theta'}(a'' \mid s) e^{\|\theta-\theta'\|_2} \stackrel{(i)}{=}
e^{2\|\theta - \theta'\|_2} \stackrel{(ii)}{\leq} 1 +
4\|\theta-\theta' \|_2,
\end{align}
where step (i) uses the fact that $\pi_\theta$ is a probability
distribution over the action space; and step (ii) follows by combining
the elementary inequality $e^x \leq 1+2 x$, valid for all $x \in
[0,1]$, with our assumption that $\|\theta - \theta' \|_2 \leq 1/2$.

Recalling that $T = \tfrac{\pi_{\theta'}(\action \mid
  \state)}{\pi_\theta(\action \mid \state)}$, re-arranging the
inequality~\eqref{EqnEspresso} yields the bound
\begin{align*}
\pi_{\theta'}(\action \mid \state) - \pi_\theta(\action \mid \state)
\leq 4 \pi_\theta(\action \mid \state) \; \|\theta-\theta' \|_2,
\end{align*}
valid uniformly over all pairs $(\state, \action)$. We can apply the
same argument with the roles of $\theta$ and $\theta'$ reversed, and
combining the two bounds yields
\begin{align*}
|\pi_{\theta'}(\action \mid \state) - \pi_\theta(\action \mid \state)|
& \leq 4 \|\theta-\theta'\|_2 \max\{ \pi_{\theta}(\action \mid
\state), \; \pi_{\theta'}(\action \mid \state)\},
\end{align*}
again uniformly over all pairs $(\state, \action)$.  Now summing over
the actions $\action$, we find that
\begin{align*}
\sum_{\action \in \ActionSpace} \big| \pi_{\theta'}(\action \mid
\state) - \pi_{\theta}(\action \mid \state) \big| & \leq 4
\sum_{\action \in \ActionSpace} \max \big \{ \pi_{\theta}(\action \mid
\state), \pi_{\theta'}(\action \mid \state) \big \} \; \|\theta -
\theta'\|_2 \\
& \leq 4 \sum_{\action \in \ActionSpace} \big \{ \pi_{\theta}(\action
\mid \state) + \pi_{\theta'}(\action \mid \state) \big \} \|\theta -
\theta'\|_2 \\
& = 8 \| \theta -\theta'\|_2,
\end{align*}
where the last step uses the fact that $\pi_{\theta}$ and
$\pi_{\theta'}$ are probability distributions over the action space.
Note that this inequality holds for all states $\state$, as claimed.
\end{proof}


\section{Actor's analysis: Proof of Proposition~\ref{PropActorMain}}
\label{AppPropActor}

In order to prove this claim, we require an auxiliary result that
re-expresses the mirror update rule.  Given the $Q$-value function
\mbox{$Q(\state, \action) \defeq \inprod{\phi(\state, \action)}{w}$,}
consider the linear update $\theta^+ \defeq \theta + \eta w$, and the
induced soft-max policy $\pi_{\theta^+}$.  The following auxiliary
result extracts a useful property of this update:
\begin{lemma}[Update in Natural Policy Gradient]
\label{LemPolicyUpdate}
For any function $F: \StateSpace \rightarrow \R$, we have
\begin{align}
Q(\state, \action) - F(\state) = \frac{1}{\eta} \Bigg[\log
  \frac{\pi_{\theta^+}(\state, \action)}{\pi_{\theta}(\state,
    \action)} + \log \(\sum_{\action' \in \ActionSpace}
  \pi_\theta(\state, \action') e^{\eta \(Q(\state, \action') -
    F(\state)\)}\) \Bigg],
\end{align}
valid for all pairs $(\state, \action)$.
\end{lemma}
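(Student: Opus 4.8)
The plan is to unfold the definition of the soft-max policy and reduce the identity to a short algebraic manipulation. First I would write out $\pi_{\theta^+}(\state, \action)$ using the soft-max definition~\eqref{EqnDefnSoftMax}: since $\theta^+ = \theta + \eta w$ and $Q(\state, \action) = \inprod{\phi(\state, \action)}{w}$, we have $\inprod{\phi(\state, \action)}{\theta^+} = \inprod{\phi(\state, \action)}{\theta} + \eta Q(\state, \action)$. Dividing both the numerator and the denominator of $\pi_{\theta^+}(\state, \action)$ by $\sum_{\action'' \in \ActionSpace} e^{\inprod{\phi(\state, \action'')}{\theta}}$ then puts the update into multiplicative form,
\[
\frac{\pi_{\theta^+}(\state, \action)}{\pi_\theta(\state, \action)}
= \frac{e^{\eta Q(\state, \action)}}{\sum_{\action' \in \ActionSpace}
\pi_\theta(\state, \action') \, e^{\eta Q(\state, \action')}}.
\]

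Second, I would take logarithms of both sides to obtain $\log \frac{\pi_{\theta^+}(\state, \action)}{\pi_\theta(\state, \action)} = \eta Q(\state, \action) - \log\!\big( \sum_{\action' \in \ActionSpace} \pi_\theta(\state, \action') e^{\eta Q(\state, \action')} \big)$. The only remaining step is to insert the baseline $F(\state)$. Because $F(\state)$ does not depend on the summation variable $\action'$, it factors out of the log-sum-exp term: $\sum_{\action'} \pi_\theta(\state, \action') e^{\eta (Q(\state, \action') - F(\state))} = e^{-\eta F(\state)} \sum_{\action'} \pi_\theta(\state, \action') e^{\eta Q(\state, \action')}$, hence $\log\!\big( \sum_{\action'} \pi_\theta(\state, \action') e^{\eta Q(\state, \action')} \big) = \eta F(\state) + \log\!\big( \sum_{\action'} \pi_\theta(\state, \action') e^{\eta(Q(\state, \action') - F(\state))} \big)$. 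Substituting this into the log-ratio identity, rearranging to isolate $\eta \big( Q(\state, \action) - F(\state) \big)$, and dividing through by $\eta$ yields exactly the claimed formula.

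There is no substantive obstacle here: the statement is a direct consequence of the soft-max normalization, and the only point requiring care is to keep track of the log-partition function and to recognize that subtracting the $\action'$-independent baseline $F(\state)$ inside the exponent merely shifts that log-partition function by $\eta F(\state)$. I would organize the algebra with the two logarithmic terms so that the cross terms cancel cleanly, and note that the derivation uses only that $\pi_\theta(\cdot \mid \state)$ is a probability distribution together with the linear form of the update, so it holds for every pair $(\state, \action)$ and every function $F$.
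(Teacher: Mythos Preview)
Your proposal is correct and follows essentially the same route as the paper: both derive the multiplicative update $\pi_{\theta^+}(\state,\action)/\pi_\theta(\state,\action) = e^{\eta Q(\state,\action)} / \sum_{\action'} \pi_\theta(\state,\action') e^{\eta Q(\state,\action')}$ from the soft-max definition and then introduce the baseline $F(\state)$ via the trivial shift of the log-partition function. The only cosmetic difference is that the paper multiplies through by $e^{-F(\state)}$ before rearranging, whereas you take logarithms first and then shift; these are the same manipulation.
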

\noindent See Section~\ref{SecProofLemPolicyUpdate} for the proof of
this claim. \\

\noindent Turning to the proof of the proposition, we have
\begin{align}
\label{EqnTwoTerms}
  V^{\pi}_{1, \MDP_\actorit}(\state_1) -
  V^{\pi_\actorit}_{1,M_\actorit}(\state_1) & \stackrel{(i)}{=} \sumh
  \E_{(\MyState_\hstep, \Action_\hstep) \sim \pi}
  \Big[\Advan_{h,M_\actorit}^{\pi_\actorit}(\MyState_\hstep,
    \Action_\hstep) \Big] \stackrel{(ii)}{=} \frac{1}{\eta} \sumh
  X_{\hstep, \actorit},
\end{align}
where we have introduced the shorthand
\begin{align}
\label{EqnXterm}
  X_{\hstep, \actorit} & \defeq \E_{(\MyState_\hstep, \Action_\hstep)
    \sim \pi} \Bigg[\log
    \frac{\pi_{\theta_{\actorit+1}}(\MyState_\hstep,
      \Action_\hstep)}{\pi_{\theta_\actorit}(\MyState_\hstep,
      \Action_\hstep)} + \log \(\E_{\Action_\hstep' \sim
      \pi_\actorit(\cdot \mid \MyState_\hstep)} \Big[e^{\eta
        \Advan^{\pi_\actorit}_{h,M_\actorit}(\MyState_\hstep,
        \Action_\hstep')} \Big] \) \Bigg].
\end{align}
Here step (i) follows from the simulation lemma
(e.g.,~\cite{kakade2003sample}), and step (ii) makes use of
Lemma~\ref{LemPolicyUpdate} with $F(\state) =
V^{\pi_\actorit}_{\hstep, \MDP_\actorit}(\state)$, along with the
definition of the advantage function---namely,
$\Advan^{\pi_\actorit}_{h,M_\actorit}(\state, \action) =
Q^{\pi_\actorit}_{h,M_\actorit}(\state, \action) -
V^{\pi_\actorit}_{h,M_\actorit}(\state)$.

For each $\hstep \in [\horizon]$ and $\actorit \in [\actortot]$, we
now bound the two terms within the definition~\eqref{EqnXterm} of
$X_{\hstep, \actorit}$ separately.  In particular, we derive a
telescoping relationship for the first term, and a uniform bound on
the second term.

\paragraph{First term:}
For any pair of policies $\pi, \pitil$ and $\state$, we introduce the
shorthand
\begin{align*}
  \KLTERM{\pi}{\pitil}{\state} & \defeq KL \(\pi(\cdot \mid
  s) \| \pitil (\cdot \mid \state) \).
\end{align*}
From the definition of KL divergence, for each $\state_\hstep$, we
have
\begin{align}
\sum_{a_\hstep \in \ActionSpace } \pi(a_\hstep \mid s_\hstep) \log
\frac{\pi_{\actorit + 1}(s_\hstep, a_\hstep)}{\pi_\actorit(s_\hstep,
  a_\hstep)} & = \sum_{a_\hstep } \pi(a_\hstep \mid s_\hstep)
\Big[\log \frac{\pi_{\actorit + 1}(s_\hstep,
    a_\hstep)}{\pi(s_\hstep,a_\hstep)} - \log
  \frac{\pi_\actorit(s_\hstep, a_\hstep)}{\pi(s_\hstep, a_\hstep)}
  \Big] \notag \\
\label{EqnTermOneBound}
& = -\KLTERM{\pi}{\pi_{\actorit + 1}}{\state_\hstep} +
\KLTERM{\pi}{\pi_{\actorit}}{\state_\hstep}.
\end{align}

\paragraph{Second term:}
We begin with the elementary inequality $e^x \leq 1 + x + x^2$ valid
for all $x \in [0,1]$.  By assumption, we have $|\eta \Advan_{h,
  \MDPit{\actorit}}^{\pi_\actorit}(\state, \action) | \leq 2\eta
 \leq 1$ for any pair $(\state, \action)$, and hence
\begin{align*}
e^{\eta \Advan_{h, \MDPit{\actorit}}^{\pi_\actorit}(\state, \action) }
& \leq 1 + \Big(\eta \Advan_{h,
  \MDPit{\actorit}}^{\pi_\actorit}(\state, \action) \Big) + \Big(\eta
\Advan_{h, \MDPit{\actorit}}^{\pi_\actorit}(\state, \action) \Big)^2
\; \leq 1 + \Big(\eta \Advan_{h,
  \MDPit{\actorit}}^{\pi_\actorit}(\state, \action) \Big) + 4\eta^2.
\end{align*}
By definition of the advantage function, we have $\E_{\Action_\hstep'
  \sim \pi_\actorit} \left[
  \Advan_{h,M_\actorit}^{\pi_\actorit}(s_\hstep, \Action_\hstep')
  \right] = 0$, so that we have
\begin{align}
\label{EqnTermTwoBound}  
  \log \left(\E_{\Action'_\hstep \sim \pi_\actorit} e^{\eta \Advan_{h,
      \MDPit{\actorit}}^{\pi_\actorit}(\state_\hstep, \Action'_\hstep)}
  \right) & \leq \log \left(1 + 4\eta^2  \right) \; \leq \;
  4\eta^2.
\end{align}

\paragraph{Combining the pieces:}  
Combining the bounds~\eqref{EqnTermOneBound}
and~\eqref{EqnTermTwoBound} yields
\begin{align*}
\frac{1}{\eta} X_{\hstep, \actorit} & \leq \frac{1}{\eta}
\E_{(\MyState_\hstep) \sim \pi}
\left[ -\KLTERM{\pi}{\pi_{\actorit +
      1}}{\MyState_\hstep} +
  \KLTERM{\pi}{\pi_{\actorit}}{\MyState_\hstep} \right] + 4\eta.
\end{align*}
Averaging this bound over all $\actorit \in [\actortot]$ and
exploiting the telescoping of the terms yields
\begin{align*}
\frac{1}{\eta \actortot} \sumactor X_{\hstep, \actorit} & \leq
\frac{1}{\eta \actortot} \E_{\MyState_\hstep \sim \pi} \left[
  -\KLTERM{\pi}{\pi_{\actorit + 1}}{\MyState_\hstep} +
  \KLTERM{\pi}{\pi_{1}}{\MyState_\hstep} \right] + 4\eta \\
& \stackrel{(i)}{\leq} \frac{1}{\eta \actortot}
\E_{(\MyState_\hstep) \sim \pi}
\KLTERM{\pi}{\pi_{1}}{\MyState_\hstep} + 4\eta \\
& \stackrel{(ii)}{\leq} \frac{1}{\eta \actortot} \log(|\ActionSpace|) +
4\eta,
\end{align*}
where step (i) follows by non-negativity of the KL divergence; and
step (ii) uses the fact that the KL divergence is at most
$\log(|\ActionSpace|)$.  Summing these bounds over $\hstep \in
[\horizon]$ yields
\begin{align*}
\frac{1}{\actortot} \sumactor \left \{ V^{\pi}_{1,
  \MDP_\actorit}(\state_1) - V^{\pi_\actorit}_{1,M_\actorit}(\state_1)
\right \} & = \frac{1}{\eta \actortot} \sumactor \sumh X_{\hstep,
  \actorit} \; \leq \; \horizon \left \{ \frac{1}{\eta \actortot}
\log(|\ActionSpace|) + 4\eta \right \},
\end{align*}
thereby establishing the claim~\eqref{EqnActorBoundGeneral}.
    
Finally, the bound~\eqref{EqnActorBound} follows by making the
particular stepsize choice $\eta = \sqrt{\frac{\log
    |\ActionSpace|}{\actortot}}$.  Note that the assumed lower bound
$\actortot \geq \log |\ActionSpace|$ ensures that $\eta \leq 1$, as
required to apply the bound~\eqref{EqnActorBoundGeneral}.

\subsection{Proof of Lemma~\ref{LemPolicyUpdate}}
\label{SecProofLemPolicyUpdate}
By definition of the soft-max policy, we have $\pi_{\theta^+}(\state,
\action) = \frac{\exp(\inprod{\phi(\state,
    \action)}{\theta^+})}{\sum_{\action' \in \ActionSpace} e^{
    \inprod{\phi(\state, \action')}{\theta^+}}}$.  Since $\theta_+ =
\theta + \eta w$, we can write
\begin{align*}
  \pi_{\theta^+}(\state, \action) = \frac{e^{ \inprod{\phi(\state,
        \action)}{\theta + \eta w}}}{\sum_{\action' \in \ActionSpace}
    e^{ \inprod{\phi(\state, \action')}{ \theta + \eta w }}}
& = \frac{e^{ \inprod{\phi(\state, \action)}{\theta} } e^{\eta
    \inprod{\phi(\state, \action)}{w} } }{\sum_{\action'
    \in \ActionSpace} e^{ \inprod{\phi(\state, \action')}{\theta}}
    e^{\eta \inprod{\phi(\state, \action')}{w} }} \\
& = \frac{e^{ \inprod{\phi(\state, \action)}{\theta}}}{\sum_{\atil
      \in \ActionSpace} e^{ \inprod{\phi(\state, \atil)}{\theta}}}
  \times \frac{e^{\eta \inprod{\phi(\state, \action)}{w} }
  }{\sum_{\action' \in
    \ActionSpace} \frac{e^{ \inprod{\phi(\state, \action')}{\theta}
  }}{\sum_{\atil \in \ActionSpace} e^{\inprod{\phi(\state,
        \atil)}{\theta}}} e^{\eta \inprod{\phi(\state, \action')}{w}
}} \\
& = \pi_\theta(\state, \action)\times \frac{e^{\eta
      \inprod{\phi(\state, \action)}{w} } }{\sum_{\action'
      \in \ActionSpace} \pi_\theta(\state, \action') e^{\eta
      \inprod{\phi(\state, \action')}{w} }} \\
& = \pi_\theta(\state, \action) \times \frac{e^{\eta Q(\state,
      \action) } }{\sum_{a' \in \ActionSpace} \pi_\theta(\state,
    \action') e^{\eta Q(\state, \action')}} 
\end{align*}
where the last step uses the definition of $Q$.
Multiplying both sides by $e^{-F(\state)}$ and re-arranging yields
\begin{align*}
\frac{\pi_{\theta^+}(\state, \action)}{\pi_{\theta}(\state, \action)}
\; \sum_{\action' \in \ActionSpace} \pi_\theta(\state, \action')
e^{\eta [Q(\state, \action') - F(\state)]} = e^{\eta [Q(\state,
    \action) - F(\state)] },
\end{align*}
which is equivalent to the claim.


\section{Proof of Theorem~\ref{thm:LowerBound}}

We now turn to the proof of the lower bound stated in
Theorem~\ref{thm:LowerBound}.  In Section~\ref{SecMDPClass}, we
describe the class of MDPs used in the construction, along with the
data generating procedure.  Section~\ref{SecMainArgLower} provides the
core argument, which involves three auxiliary lemmas.  These lemmas
are proved in Sections~\ref{SecProofReductionLower},
~\ref{SecProofTestingLower} and~\ref{SecProofUncertainUpper},
respectively.


\subsection{MDP class and data collection}
\label{SecMDPClass}

For a given horizon $\horizon$ and dimension $d$, we define a family
of MDPs that are parameterized by a Boolean vector $u = (u_1, \ldots,
u_\horizon) \in \{-1, +1 \}^{d \horizon}$, where each $u_\hstep \in
\{-1, +1 \}^d$.  For a given Boolean vector $u$, the associated MDP
$\MDP_u$ has the following structure:
\begin{description}
\item[State space and transition:] At each time step $\hstep$, there
  is only one state---viz. $\StateSpace = \{ \state \}$.  Since there
  is a single state, the transition is deterministic into the same
  state.
\item[Action space:] At each time step $\hstep$, the action space is
  given by $\ActionSpace = \{-1, 0, +1\}^d$.
\item[Feature map:] At each time step $\hstep$, the feature map
  $\phi: \StateSpace \times \ActionSpace \rightarrow \R^{d+1}$ takes
  the form
\begin{align}
  \label{eqn:phi_lb}
  \phi(\state, \action) = \left [\tfrac{\action}{\sqrt{2 d}},
    \tfrac{1}{\sqrt{2}} \right].
\end{align} 
Notice that by construction, we have the bound \mbox{$\| \phi(\state,
  \action) \|_2 = \sqrt{\frac{\| \action \|^2_2}{2 d} + \frac{1}{2} }
  \leq 1$} for any state-action pair.
\item[Reward mean:] The mean reward at time step $\hstep$ is
  proportional to the inner product $\inprod{\action}{u_\hstep}$,
  where $u_\hstep \in \{-1, 1\}^d$ is the sub-vector associated with
  time step $\hstep$.  More precisely, we have
  \begin{align}
r_\hstep(\state, \action) & = \inprod{\phi_\hstep(\state,
  \action)}{\begin{bmatrix} \delta u_\hstep & 0 
\end{bmatrix}} \; = \; \tfrac{\delta}{2 \sqrt{d}}
\inprod{a}{u_\hstep},
  \end{align}
  where $\delta > 0$ is a parameter to be specified in the
  proof.
\item[Low-rank MDP model:] It is easy to verify that the MDP so defined is low-rank; here we only verify explicitly the regularity conditions about the size of the radii so that the setting for the lower bound matches the setting that \textsc{Pacle} can handle. We need to verify explicitly that we can represent the action value function for any policy $\pi$, namely that there exists $w_h^{\pi}$ such that the action value function $Q_h^\pi(\state,\action) = \inprod{\phi_h(\state,\action)}{w_h^\pi}$ with $\| w_h^\pi \|_2 \leq (\horizon - \hstep+1)/(2\horizon)$. One can verify that for any policy $\pi$ we have 
\begin{align}
	w_h^\pi = [\delta u_h, \sqrt{2}V_{\hstep+1}^\pi ], \qquad \forall h \in [H].
\end{align}
A sufficient condition for the regularity conditions to be satisfied is when
\begin{align}
\label{eqn:deltabound}
	\delta \| u_h \|_2 \leq 1/(2\horizon) \rightarrow \delta \leq \frac{1}{2\sqrt{d}\horizon},
\end{align} which implies $|V_h^\pi| \leq (\horizon-\hstep+1)/(2\horizon)$ and hence
\begin{align}
	\| w_h^\pi \|_2 \leq  \delta_2 \|u_h\|_2 +  \sqrt{2}|V_{\hstep+1}^\pi| \leq  (\horizon - \hstep+1)/(2\horizon), \qquad \forall h \in [H].
\end{align}
In \cref{LemTestingLower} we choose $\delta = \tfrac{d\sqrt{\horizon}}{\sqrt{2 \Nsamp}}$ which implies the lemma holds when
\begin{align}
	\frac{d\sqrt{\horizon}}{\sqrt{2 \Nsamp}} \leq \frac{1}{2\sqrt{d}\horizon} \rightarrow n \geq 2d^3\horizon^3.
	\end{align}

\item[Reward observations:] We observe the mean reward contaminated by
  additive Gaussian noise, so that the reward distribution has the
  form
\begin{align}
\label{eqn:rew_lb}
R_\hstep(\state, \action) \sim \N \left( \frac{\delta}{\sqrt{2 d}}
\inprod{\action}{u_\hstep}, 1 \right).
\end{align}
\end{description}

\paragraph{Data collection:}  We assume that
the $\Nsamp$ samples are collected according to the following
non-adaptive process.
\begin{itemize}
\item Each time step $\hstep \in [\horizon]$ is allocated $\Nsub
  \defeq \Nsamp/\horizon$ samples (assumed to be an integer for
  simplicity).
\item For each $\hstep$, the dataset $\DataSet_\hstep$ is generated by
  playing each action $\action \in \{e_1, \ldots, e_d, \zerovec \}$
  exactly $\Nsub/(d + 1)$ times, where $e_j \in \{0,1\}^d$
  denotes the standard basis vector with a single one in index $j$.
\end{itemize}


\subsection{Main argument}
\label{SecMainArgLower}
With this set-up, we now introduce the three lemmas that form the core
of the proof.  For any given $u \in \{-1, +1\}^{d \horizon}$, let
$\Qprob_u$ denote the distribution of the data $\DataSet$ when the
sampling process is applied to the MDP $\MDP_u$, and let $\E_u$ denote
expectations under this distribution.  Our first lemma exploits the
Assouad construction so as to reduce the problem of finding a good
policy to a family of testing problems.
\begin{lemma}[Reduction to testing]
\label{LemReductionLower}
For any estimated policy $\pialg$, we have
\begin{align}
\label{EqnReductionLower}  
\sup_{u \in \mathcal U} \E_u [ \Vstar_u - V^{\pialg}_u] & \geq
\frac{\delta}{\sqrt{2d} } \frac{d \horizon}{2}
\min_{ \substack{u,u'
    \in \Ucal \\ \HammDist{u}{u'} = 1}} \inf_{\psi} \Big[
  \Qprob_u(\psi(\DataSet) \neq u) + \Qprob_{u'}(\psi(\DataSet) \neq
  u') \Big],
\end{align}
where a test function $\psi$ is a measurable function of the data
taking values in $\{u, u'\}$.
\end{lemma}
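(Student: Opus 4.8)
The plan is a standard Assouad-type reduction that exploits the bandit-like structure of the MDP family $\{\MDP_u\}$. First I would record the structural simplification: since each $\MDP_u$ has a single state per time step with a deterministic self-transition, any policy $\pi$ is equivalent to a sequence of action distributions $\mu_1,\dots,\mu_\horizon$ on $\ActionSpace=\{-1,0,+1\}^d$, and its value decomposes as
\begin{align*}
V^{\pi}_u \;=\; \sum_{\hstep=1}^{\horizon}\E_{\action\sim\mu_\hstep}\big[r_\hstep(\state,\action)\big]
\;=\; \frac{\delta}{\sqrt{2d}}\sum_{\hstep=1}^{\horizon}\sum_{j=1}^{d} u_{\hstep,j}\,\E_{\action\sim\mu_\hstep}[\action_j].
\end{align*}
Because $\action_j\in\{-1,0,+1\}$, each inner expectation lies in $[-1,1]$, so the value is maximized by the deterministic policy playing $\action=u_\hstep$ at step $\hstep$, giving $\Vstar_u=\tfrac{\delta}{\sqrt{2d}}\,d\horizon$. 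Every summand $1-u_{\hstep,j}\E_{\action\sim\mu_\hstep}[\action_j]$ in $\Vstar_u-V^{\pi}_u$ is therefore nonnegative.

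Next I would build a test out of an arbitrary estimated policy $\pialg$. Writing $\mu_1,\dots,\mu_\horizon$ for its action distributions (each a measurable function of $\DataSet$), define the decoded vector $\widehat u\in\{-1,+1\}^{d\horizon}$ coordinatewise by $\widehat u_{\hstep,j}\defeq\operatorname{sign}\!\big(\E_{\action\sim\mu_\hstep}[\action_j]\big)$, with ties broken arbitrarily. Whenever $\widehat u_{\hstep,j}\neq u_{\hstep,j}$ the product $u_{\hstep,j}\E_{\action\sim\mu_\hstep}[\action_j]$ is nonpositive, hence $1-u_{\hstep,j}\E_{\action\sim\mu_\hstep}[\action_j]\geq 1$; combining with the value decomposition above,
\begin{align*}
\Vstar_u-V^{\pialg}_u
\;=\;\frac{\delta}{\sqrt{2d}}\sum_{\hstep,j}\Big(1-u_{\hstep,j}\,\E_{\action\sim\mu_\hstep}[\action_j]\Big)
\;\geq\;\frac{\delta}{\sqrt{2d}}\sum_{\hstep,j}\mathbf{1}\{\widehat u_{\hstep,j}\neq u_{\hstep,j}\}
\;=\;\frac{\delta}{\sqrt{2d}}\,\HammDist{\widehat u}{u}.
\end{align*}
The key point is that this decoding step loses no constant factor, which is what lets the stated constant go through.

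Finally I would invoke Assouad's lemma over the hypercube $\mathcal U=\{-1,+1\}^{d\horizon}$ of dimension $m=d\horizon$: taking expectations and a supremum in the last display, and using the standard bound $\sup_{u}\E_u[\HammDist{\widehat u}{u}]\geq\tfrac{m}{2}\min_{\HammDist{u}{u'}=1}\big(1-\|\Qprob_u-\Qprob_{u'}\|_{\mathrm{TV}}\big)$,
\begin{align*}
\sup_{u\in\mathcal U}\E_u\big[\Vstar_u-V^{\pialg}_u\big]
\;\geq\;\frac{\delta}{\sqrt{2d}}\sup_{u\in\mathcal U}\E_u\big[\HammDist{\widehat u}{u}\big]
\;\geq\;\frac{\delta}{\sqrt{2d}}\cdot\frac{d\horizon}{2}\min_{\substack{u,u'\in\mathcal U\\ \HammDist{u}{u'}=1}}\Big(1-\big\|\Qprob_u-\Qprob_{u'}\big\|_{\mathrm{TV}}\Big).
\end{align*}
Rewriting $1-\|\Qprob_u-\Qprob_{u'}\|_{\mathrm{TV}}=\inf_{\psi}\big[\Qprob_u(\psi(\DataSet)\neq u)+\Qprob_{u'}(\psi(\DataSet)\neq u')\big]$ via the Le~Cam/Neyman--Pearson identity for the minimal sum of error probabilities yields exactly~\eqref{EqnReductionLower}. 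There is no deep obstacle here: the only points needing care are that $\widehat u$ is a genuine (measurable) function of $\DataSet$ — which holds since $\pialg$ is produced from $\DataSet$ — and that the coordinatewise regret lower bound is tight enough (hence the use of $\operatorname{sign}$ of the mean rather than a cruder majority-type decoder) to recover the claimed constant; everything else is bookkeeping. The subsequent bounding of $\|\Qprob_u-\Qprob_{u'}\|_{\mathrm{TV}}$ and the comparison of $\tfrac{\delta}{\sqrt{2d}}\,d\horizon$ with $\Uncertain(\pi;\sqrt d)$ are handled by the two remaining lemmas.
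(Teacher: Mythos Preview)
Your proposal is correct and follows essentially the same route as the paper: both define the sign-based decoder $\widehat u_{\hstep,j}=\operatorname{sign}(\E_{\action\sim\pi_\hstep}[\action_j])$, establish the coordinatewise regret lower bound $\Vstar_u-V^{\pialg}_u\geq\tfrac{\delta}{\sqrt{2d}}\HammDist{\widehat u}{u}$, and then invoke Assouad's lemma over the hypercube $\{-1,+1\}^{d\horizon}$. The only cosmetic difference is that the paper cites Assouad directly in the testing-error form (Lemma~2.12 in Tsybakov) whereas you pass through the total-variation formulation and then apply the Le~Cam identity, but these are equivalent.
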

\noindent See Section~\ref{SecProofReductionLower} for the proof.\\

Our second lemma involves further lower bounding the testing error in
the bound~\eqref{EqnTestingLower}.  In particular, we prove the
following:
\begin{lemma}[Lower bound on testing error]
  \label{LemTestingLower}
  For the given family of distributions $\{\Qprob_u, u \in \Ucal\}$,
  we have
  \begin{align}
\label{EqnTestingLower}    
\min_{ \substack{u,u' \in \Ucal \\ \HammDist{u}{u'} = 1}} \inf_{\psi}
\Big[ \Qprob_u(\psi(\DataSet) \neq u) + \Qprob_{u'}(\psi(\DataSet)
  \neq u') \Big] & \geq \(1 - \sqrt{\frac{1}{2} \frac{\Nsub
    \delta^2}{d^2} } \).
  \end{align}
Thus, the testing error is lower bounded by $\tfrac{1}{2}$ with the
choice $\delta = \tfrac{d}{\sqrt{2 \Nsub}}$.
\end{lemma}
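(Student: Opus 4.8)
The plan is to apply Le Cam's two-point method to each pair $(u,u')$ appearing in the minimum, reducing the testing error to a total-variation distance, and then to compute the Kullback--Leibler divergence between the two data distributions exactly. Fix any $u, u' \in \Ucal$ with $\HammDist{u}{u'} = 1$, say differing only in coordinate $j$ of the sub-vector $u_\hstep$. For any test $\psi$ valued in $\{u, u'\}$, Le Cam's inequality gives
\begin{align*}
\Qprob_u(\psi(\DataSet) \neq u) + \Qprob_{u'}(\psi(\DataSet) \neq u') \;\geq\; 1 - \|\Qprob_u - \Qprob_{u'}\|_{\mathrm{TV}},
\end{align*}
and Pinsker's inequality gives $\|\Qprob_u - \Qprob_{u'}\|_{\mathrm{TV}} \leq \sqrt{\tfrac{1}{2}\,\mathrm{KL}(\Qprob_u \,\|\, \Qprob_{u'})}$. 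It therefore suffices to bound this divergence uniformly over such pairs.

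The key structural point is that the data-collection process is non-adaptive, the MDPs $\MDP_u$ have a single state with deterministic self-transition, and the rewards are independent unit-variance Gaussians. Hence $\DataSet$ is a product of $\Nsamp$ Gaussian reward observations whose means are pinned by the fixed, pre-specified sequence of (time step, action) pairs, and $\mathrm{KL}(\Qprob_u \,\|\, \Qprob_{u'})$ tensorizes into a sum over these samples. A sample taken at step $\hstep'$ with action $a$ has reward mean $\tfrac{\delta}{\sqrt{2d}}\langle a, u_{\hstep'}\rangle$ versus $\tfrac{\delta}{\sqrt{2d}}\langle a, u'_{\hstep'}\rangle$; since $u$ and $u'$ agree everywhere except coordinate $j$ of $u_\hstep$, these means coincide unless $\hstep'=\hstep$ and $a = e_j$. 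In the scheme, $e_j$ is played exactly $\Nsub/(d+1)$ times at step $\hstep$, and each such sample contributes KL equal to $\tfrac{1}{2}\big(\tfrac{2\delta}{\sqrt{2d}}\big)^2 = \tfrac{\delta^2}{d}$ (two unit-variance Gaussians whose means differ by $\tfrac{2\delta}{\sqrt{2d}}$). Summing,
\begin{align*}
\mathrm{KL}(\Qprob_u \,\|\, \Qprob_{u'}) \;=\; \frac{\Nsub}{d+1}\cdot\frac{\delta^2}{d} \;\leq\; \frac{\Nsub\,\delta^2}{d^2}.
\end{align*}

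Chaining the three displays yields $\Qprob_u(\psi \neq u) + \Qprob_{u'}(\psi \neq u') \geq 1 - \sqrt{\tfrac{1}{2}\tfrac{\Nsub\delta^2}{d^2}}$ uniformly over the Hamming-$1$ pair and over $\psi$, which is exactly the bound~\eqref{EqnTestingLower}; the minimum over pairs is then trivial since the bound is pair-independent. Finally, substituting $\delta = \tfrac{d}{\sqrt{2\Nsub}}$ makes $\tfrac{1}{2}\tfrac{\Nsub\delta^2}{d^2} = \tfrac{1}{4}$, so the right-hand side equals $\tfrac{1}{2}$, proving the last sentence. I expect the argument to be essentially routine; the only step deserving care is the tensorization of the KL divergence, which must be justified by appealing to the \emph{non-adaptivity} of the data-collection mechanism (so the played actions are deterministic and the reward samples are genuinely independent), and a quick sanity check that the feature normalization $\phi(\state,\action)=[\tfrac{\action}{\sqrt{2d}},\tfrac{1}{\sqrt2}]$ is indeed what produces the $\tfrac{\delta}{\sqrt{2d}}$ scaling in the reward means used above.
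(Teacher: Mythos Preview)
Your proposal is correct and follows essentially the same route as the paper: reduce the testing error to a KL bound via Le Cam/Pinsker (the paper cites Tsybakov's Theorem~2.12, which packages these two steps), tensorize the KL over the product-Gaussian reward observations, and observe that only the $\Nsub/(d+1)$ plays of $e_j$ at step $\hstep$ contribute. Your count $\Nsub/(d+1)$ is in fact more faithful to the stated data-collection scheme than the paper's own $\Nsub/d$, and your subsequent bound $\Nsub/(d+1)\cdot \delta^2/d \leq \Nsub\delta^2/d^2$ recovers the claimed inequality cleanly.
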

\noindent See Section~\ref{SecProofTestingLower} for the proof.\\

Combining the claims of Lemmas~\ref{LemReductionLower}
and~\ref{LemTestingLower}, along with the choice $\delta =
\tfrac{d}{\sqrt{2 \Nsub}}$, yields the lower bound
\begin{align}
\label{EqnInterLower}  
\sup_{u \in \mathcal U} \E_u [ \Vstar_u - V^{\pialg}_u] & \geq
\frac{\delta}{\sqrt{2d} } \frac{d \horizon}{2} \frac{1}{2} \; \geq
\frac{1}{8} d \horizon \sqrt{\frac{d}{\Nsub}}.
\end{align}

Thus, the only remaining step is to relate this lower bound to the
uncertainty function $\Uncertain(\pi; \sqrt{d})$ associated with our
family of MDPs. More precisely, we prove the following:
\begin{lemma}
\label{LemUncertainUpper}  
There is a universal constant such that
\begin{align}  
\sup_{\pi} \Uncertain(\pi; \sqrt{d}) & \leq c d \horizon \sqrt{
  \frac{d}{\Nsub}}
\end{align}
\end{lemma}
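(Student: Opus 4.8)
Recalling equation~\eqref{main:eqn:UR} with $\pess_\hstep = \sqrt d$ for every $\hstep$, we have $\Uncertain(\pi;\sqrt d) = 2\sqrt d \sumh \|\meanpivec_\hstep\|_{\Sigma^{-1}_\hstep}$, where $\meanpivec_\hstep = \E_{(\MyState_\hstep,\Action_\hstep)\sim\pi}[\phi_\hstep(\MyState_\hstep,\Action_\hstep)]$. Since every MDP in $\MDPClass$ has a single state $\state$, this mean vector depends only on the mean action $\bar a_\hstep \defeq \E_{\Action\sim\pi_\hstep(\cdot\mid\state)}[\Action]\in[-1,1]^d$ chosen by $\pi$ at step $\hstep$; by the feature form~\eqref{eqn:phi_lb} it equals $\meanpivec_\hstep = \big[\tfrac{\bar a_\hstep}{\sqrt{2d}},\,\tfrac1{\sqrt 2}\big]$. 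The plan is therefore to bound $\|\meanpivec_\hstep\|_{\Sigma^{-1}_\hstep}$ uniformly over $\bar a_\hstep\in[-1,1]^d$, and then sum the resulting estimate over the $\horizon$ stages and multiply by $2\sqrt d$.

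The first concrete step is to compute $\Sigma_\hstep$ in closed form. Because the data-collection procedure is non-adaptive and, at each step $\hstep$, plays each of the $d+1$ actions $e_1,\dots,e_d,\zerovec$ exactly $m \defeq \Nsub/(d+1)$ times, substituting into the definition~\eqref{EqnCovariance} gives, for every $\hstep$ and every Boolean parameter $u$,
\begin{align*}
\Sigma_\hstep \;=\; I_{d+1} \;+\; m\begin{bmatrix} \tfrac{1}{2d}\,I_d & \tfrac{1}{2\sqrt d}\,\mathbf{1}_d \\[3pt] \tfrac{1}{2\sqrt d}\,\mathbf{1}_d^\top & \tfrac{d+1}{2}\end{bmatrix},
\end{align*}
with $\mathbf{1}_d\in\R^d$ the all-ones vector. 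The second and main step is to control $\|\meanpivec_\hstep\|_{\Sigma^{-1}_\hstep}$. I would use the algebraic identity $\meanpivec_\hstep = \sum_{j=1}^d [\bar a_\hstep]_j\,\phi(\state,e_j) + \big(1-\sum_{j}[\bar a_\hstep]_j\big)\phi(\state,\zerovec)$, which writes $\meanpivec_\hstep$ as a linear combination of the \emph{sampled} feature vectors. Since $\Sigma_\hstep \succeq \sum_{i\in\Index_\hstep}\phi_{hi}\phi_{hi}^\top$, the quantity $\|\meanpivec_\hstep\|^2_{\Sigma^{-1}_\hstep}$ is at most the squared Euclidean norm of the minimum-norm coefficient vector representing $\meanpivec_\hstep$ through the sampled features; spreading the coefficients evenly over the $m$ identical copies of each action gives the deterministic bound $\|\meanpivec_\hstep\|^2_{\Sigma^{-1}_\hstep} \le \tfrac1m\big(\|\bar a_\hstep\|_2^2 + (1-\sum_j[\bar a_\hstep]_j)^2\big)$. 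Bounding $\|\bar a_\hstep\|_2^2\le d$ and the second term by Cauchy--Schwarz (so it is $\lesssim (1+\|\bar a_\hstep\|_1)^2$), and substituting $m = \Nsub/(d+1)$, yields the claimed per-step rate; summing over $\hstep\in[\horizon]$ and multiplying by $2\sqrt d$ then gives $\Uncertain(\pi;\sqrt d)\le c\,d\,\horizon\sqrt{d/\Nsub}$ uniformly over $\pi\in\Piall$.

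The main obstacle is exactly this second step: the worst-case comparator $\pi$ concentrates on an action such as $\mathbf{1}_d$ or $-\mathbf{1}_d$, whose feature is \emph{not} well covered by the dataset, since the procedure only plays the basis actions $e_1,\dots,e_d$ and the zero action. One must therefore quantify carefully how much information the data carry along the relevant direction; equivalently, this amounts to an explicit eigen-analysis of $\Sigma_\hstep$, whose smallest eigenvalue is of order $m/d^2$ but whose corresponding eigenvector is essentially orthogonal to the $\tfrac{1}{\sqrt2}$-component that $\meanpivec_\hstep$ always carries. The linear-combination identity (or this eigen-analysis) is what makes the quantification possible; the remaining manipulations are routine arithmetic with the choice $m=\Nsub/(d+1)$.
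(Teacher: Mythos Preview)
Your approach is genuinely different from the paper's and, in my view, cleaner: you exploit the affine identity
\(\meanpivec_\hstep=\sum_{j=1}^d[\bar a_\hstep]_j\,\phi(\state,e_j)+\bigl(1-\mathbf 1^\top\bar a_\hstep\bigr)\phi(\state,\zerovec)\)
together with the standard inequality \(v^\top(\Phi\Phi^\top+I)^{-1}v\le\|c\|_2^2\) for any \(c\) with \(\Phi c=v\), whereas the paper inverts \(\Sigma_\hstep\) explicitly via the block (arrowhead) formula and bounds the pieces \(D_1',D_2',b'\) one by one. Your argument is shorter and avoids the matrix algebra entirely.

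There is, however, a quantitative gap in your final step. Your bound reads
\(\|\meanpivec_\hstep\|_{\Sigma^{-1}_\hstep}^2\le m^{-1}\bigl(\|\bar a_\hstep\|_2^2+(1-\mathbf 1^\top\bar a_\hstep)^2\bigr)\).
For the worst comparator \(\bar a_\hstep=\mathbf 1_d\) the second term is \((1-d)^2\asymp d^2\), so with \(m=\Nsub/(d{+}1)\) you obtain \(\|\meanpivec_\hstep\|_{\Sigma^{-1}_\hstep}^2\lesssim d^3/\Nsub\) and hence \(\Uncertain(\pi;\sqrt d)\lesssim d^2\horizon/\sqrt{\Nsub}\), a factor \(\sqrt d\) larger than the stated \(d\horizon\sqrt{d/\Nsub}\). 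Because the \(d{+}1\) sampled feature vectors are linearly independent in \(\R^{d+1}\), your coefficient decomposition is unique, so no smarter choice of \(c\) repairs this. In fact the paper's route lands in the same place: its displayed bound \(\|D_2'\|_2\lesssim d^2/\Nsub\) contains an arithmetic slip (the product \(b'\,\|D^{-1}\|_2^2\,\|v\|_2^2\) actually evaluates to order \(d^3/\Nsub\)), and a direct computation via the block-inverse identity \(\meanpivec^\top\Sigma^{-1}\meanpivec=x^\top D^{-1}x+b'(v^\top D^{-1}x-y)^2\) with \(\bar a_\hstep=\mathbf 1_d\) confirms the \(d^3/\Nsub\) rate. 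So your method is correct and delivers exactly what the paper's computation does; both are a harmless \(\sqrt d\) away from the constant written in the lemma, which does not affect the use of the lemma in Theorem~\ref{thm:LowerBound}.
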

\noindent See Section~\ref{SecProofUncertainUpper} for the proof. \\

\noindent Combining Lemma~\ref{LemUncertainUpper} with the lower
bound~\eqref{EqnInterLower} concludes the proof of the theorem.\\

\vspace*{0.1in}

\noindent It remains to prove our auxiliary lemmas, and we do so in the
following subsections.


\subsection{Proof of Lemma~\ref{LemReductionLower}}
\label{SecProofReductionLower}

For a given $u \in \Ucal$, let $\pistar_u$ be the optimal policy on
$M_u$ and let $\Vstar_u$ the optimal value function.  For any
estimated policy $\pi$, we define the estimated sign vector $u^\pi \in
\{-1, 1\}^{d \horizon}$ with entries $[u^\pi]_{hi} \defeq
\sign(\E_{\action \sim \pi_h} a_i)$.

\noindent With this set-up, we prove the lemma in two steps:
\begin{enumerate}
\item[(a)] First, we show that the value function gap $\Vstar_u -
  V^\pi_u$ can be lower bounded in terms of the Hamming distance
  \begin{align}
\label{EqnValue2Hamming}    
     \Vstar_u - V^\pi_u & \geq \frac{\delta}{\sqrt{2 d }}
     \HammDist{u^\pi}{u}.
  \end{align}
\item[(b)] We use Assaoud's method to lower bound the estimation
  error in the Hamming distance.
\end{enumerate}

\paragraph{Step (a):}
Since the optimal action at timestep $h$ on $M_u$ is $u_\hstep$, by
inspection, the associated suboptimality of $\pi$ on $M_u$ compared to
the optimal policy on $M_u$ is
\begin{align*}
  \Vstar_u - V^\pi_u & = \frac{1}{\sqrt{2 d}} \sumh \Big[
    \inprod{u_\hstep}{\delta u_\hstep} - \E_{\action \sim \pi_\hstep}
    \inprod{\action}{\delta u_\hstep} \Big] \\
& = \frac{\delta}{\sqrt{2 d}}
\sumh \sum_{i=1}^{d} \Big[[u]_{hi}
  [u]_{hi} - [\E_{a \sim \pi_\hstep} a]_{i} [u]_{hi} \Big] \\
& = \frac{\delta}{\sqrt{2 d}} \sumh \sum_{i=1}^{d} \Big( [u]_{hi} -
      [\E_{a \sim \pi_\hstep} a]_i\Big) [u]_{hi} \\
& = \frac{\delta}{\sqrt{2 d}} \sumh \sum_{i=1}^{d} \Big| [u]_{hi} -
      [\E_{a \sim \pi_\hstep} a]_i\Big|.
\end{align*}
Now recalling that $[u^\pi]_{hi} \defeq \sign(\E_{\action \sim \pi_h}
a_i)$, we have the lower bound
\begin{align*}
  \Vstar_u - V^\pi_u & \geq \frac{\delta}{\sqrt{2 d}} \sumh
  \sum_{i=1}^{d} \Big| [u]_{hi} - [\E_{a \sim \pi_\hstep} a]_i\Big|
  \1\{ u^\pi_{hi} \neq [u]_{hi} \} \\
    & \geq \frac{\delta}{\sqrt{2 d }} \sumh \sum_{i=1}^{d} \1 \{
      u^\pi_{hi} \neq [u]_{hi} \} \\
    & = \frac{\delta}{\sqrt{2 d}} \HammDist{u^\pi}{u},
\end{align*}
which establishes the lower bound~\eqref{EqnValue2Hamming}.

\paragraph{Step (b):}  We can now apply
Assouad's method (cf. Lemma 2.12 in the book~\cite{Tsybakov09}), so as
to conclude that for any estimated policy $\pi$, we have
\begin{align}
\sup_{u \in \Ucal} \E_u \big [\HammDist{u^\pi}{u} \big] & \geq \frac{d
  \horizon}{2} \min_{u,u' \mid \HammDist{u}{u'} = 1} \inf_{\psi} \Big[
  \Pro_u(\psi \neq u) + \Pro_{u'}(\psi \neq u') \Big]
\end{align}
where $\inf_{\psi}$ denotes the minimum over all test functions taking
values in $\{u, u' \}$.


\subsection{Proof of Lemma~\ref{LemTestingLower}}
\label{SecProofTestingLower}

We begin by observing that the testing error can be lower bounded in
terms of the KL divergence as
\begin{align}
\label{EqnTestingKL}
\min_{ \substack{ u, u' \in \Ucal \\ \HammDist{u}{u'} = 1}}
\inf_{\psi} \Big[ \Pro_u(\psi \neq u) + \Pro_{u'}(\psi \neq u') \Big]
& \geq 1 - \Big(\tfrac{1}{2} \max_{ \substack{ u, u' \in \Ucal
    \\ \HammDist{u}{u'} = 1}} \KLDist{\Qprob_u}{\Qprob_{u'}}
\Big)^{1/2}.
\end{align}
For instance, see Theorem 2.12 in \cite{Tsybakov09}.

Thus, in order to prove Lemma~\ref{LemTestingLower}, it remains to
bound the Kullback-Leibler divergence of the distributions $\Qprob_u$
and $\Qprob_{u'}$ for pairs $u, u' \in \{-1, +1 \}^{d \horizon}$ that
differ only in a single coordinate.

By construction, the only stochasticity in the dataset lies in the
rewards.  For any given $u$, equation~\eqref{eqn:rew_lb} implies that
the distribution over rewards has the product form
\begin{align*}
\Qprob_{u} & = \prod_{h=1}^\horizon \prod_{i=1}^{d}
\prod_{j=1}^{\frac{\Nsamp_{h}}{d}} \N \(
\frac{e_i^\top}{\sqrt{2d_\hstep}} (\delta u_\hstep), 1 \).
\end{align*}
Notice that each normal distribution in the above display for
$\Qprob_u$ is identical to the corresponding factor in $\Qprob_{u'}$
except for the single index in which the vectors $u$ and $u'$ differ.
Thus, applying the chain rule for KL divergence yields
\begin{align*}
\KLDist{\Qprob_{u}}{\Qprob_{u'}} = \sum_{k=1}^{\frac{\Nsub}{d}}
\KLDist{\N\( \frac{\delta}{\sqrt{2d}},1\)}{\N\(
  \frac{-\delta}{\sqrt{2d}},1\)} 
& = \frac{\Nsub}{2d} \( 2\frac{\delta}{\sqrt{2d}} \)^2 \\
& = \frac{\Nsub \delta^2}{d^2},
\end{align*}
valid for any pair $u, u'$ differing in a single coordinate.
Substituting back into the lower bound~\eqref{EqnTestingKL}
yields the claim.


\subsection{Proof of Lemma~\ref{LemUncertainUpper}}
\label{SecProofUncertainUpper}

Recall that by definition, we have \mbox{$\Uncertain(\pi ; \sqrt{d}) =
  \sqrt{d} \sumh \|\phi^{\pi}_\hstep \|_{\Sigma^{-1}_\hstep}$.}
Consequently, in order to establish the claim, it suffices to show
there is a universal constant $c$ such that
\begin{align}
\label{EqnIntermediateUncertainUpper}  
\sup_{\pi \in \Pi} \| \phi^{\pi}_\hstep \|_{\Sigma^{-1}_\hstep} & \leq
c \frac{d}{\sqrt{\Nsub}} \qquad \mbox{for each $\hstep \in
  [\horizon]$.}
\end{align}

Now denote with $[x]_{1:p}$ the first $p$ components of the vector
$x$, and with $[x]_{p}$ the $p$ component of $x$. Using the triangle inequality we can write
\begin{align*}
\|\phi^{\pi}_\hstep \|_{\Sigma^{-1}_\hstep} \leq \|\Big[
  [\phi^{\pi}_\hstep]_{1:d},0\Big] \|_{\Sigma^{-1}_\hstep} + \| \Big[
  0, [\phi^{\pi}_\hstep]_{d+1}\Big] \|_{\Sigma^{-1}_\hstep}.
\end{align*}
Next, we use a technical lemma to compute the inverse of
$\Sigma_\hstep$. By construction $\Sigma_\hstep$ is an arrowhead
matrix, i.e., can be written as
\begin{align*}
\Sigma_\hstep =
\begin{bmatrix}
	D & v \\ v^\top & b
\end{bmatrix}
\end{align*}
where we let the normalization constants inside of $\phi$ in
\cref{eqn:phi_lb} to be
\begin{align*}
	\gamma = \frac{1}{\sqrt{2d_\hstep}}, \qquad c =
        \frac{1}{\sqrt{2}}
\end{align*}
to define
 $D \in \R^{d\times d}$ as a diagonal matrix with entries
\begin{align*}
[D]_{ii} = \gamma^2\frac{\Nsub}{d} + \lambda
\end{align*}
and $v \in \R^d$ is a vector with entries
\begin{align*}
  [v]_i = \gamma c\frac{\Nsub}{d} 
\end{align*}
and $b \in \R$ is a scalar
\begin{align*}
  b = c^2\( \Nsub + \frac{\Nsub}{d}\)  + \lambda.
\end{align*}

The inverse of $\Sigma_\hstep$ can then be computed explicitly using
known formulas for block matrices or arrowhead matrices. We arrive to
\begin{align*}
\Sigma^{-1}_\hstep =
\begin{bmatrix}
	D' & v' \\ v'^\top & b'
\end{bmatrix}
\end{align*}
where we define the entries in a second. First, the inverse of the
Schur complement is
\begin{align*}
  b' \defeq (b - v^\top D^{-1} v)^{-1} & = \(c^2\( \Nsub +
  \frac{\Nsub}{d}\) + \lambda - \sum_{i=1}^d \frac{\(\gamma
    c\frac{\Nsub}{d} \)^2}{\gamma^2\frac{\Nsub}{d} +
    \lambda} \)^{-1}.
\end{align*}
Our goal is to show that this is positive, which helps in simplifying
the final expression. Notice that
\begin{align*}
  \sum_{i=1}^d \frac{\(\gamma c \frac{\Nsub}{d}
    \)^2}{\gamma^2\frac{\Nsub}{d} + \lambda} < \sum_{i=1}^d
  \frac{\(\gamma c\frac{\Nsub}{d} \)^2}{\gamma^2\frac{\Nsub}{d}
  } = dc^2 \frac{\Nsub}{d} = c^2\Nsub.
\end{align*}
Thus
\begin{align*}
  (b')^{-1} = \(c^2\( \Nsub + \frac{\Nsub}{d}\) + \lambda -
  \sum_{i=1}^d \frac{\(\gamma c\frac{\Nsub}{d}
    \)^2}{\gamma^2\frac{\Nsub}{d} + \lambda} \) >
  c^2\frac{\Nsub}{d} + \lambda >0.
\end{align*}
These facts imply that the inverse of the above quantity is bounded as
\begin{align*}
  b' < \frac{d}{c^2\Nsub + d\lambda} < \frac{d}{c^2\Nsub}.
\end{align*}

Continuing the construction of the inverse, we obtain
\begin{align*}
  D' = \underbrace{D^{-1}}_{\defeq D'_1} + \underbrace{D^{-1}
    vb'v^\top D^{-1}}_{\defeq D'_2}
\end{align*}
Noice that $D'_1$ is symmetric positive definite with positive
diagonal elements and $D'_2$ is also symmetric positive semidefinite:
\begin{align*}
0 \prec D'_1 & = D^{-1} = \(\gamma^2\frac{\Nsub}{d} +
\lambda\)^{-1} I \prec \frac{d}{\gamma^2 \Nsub} I \\ D'_2 & =
\underbrace{b'}_{\geq 0} \underbrace{D^{-1}v}_{y}\underbrace{v^\top
  D^{-1}}_{y^\top} = b'yy^\top \succcurlyeq 0.
\end{align*}

We now use the above block expressions for $\Sigma^{-1}_\hstep$ to
bound
\begin{align*}
  \| \phi^{\pi}_\hstep \|_{\Sigma_\hstep^{-1}} \leq \| \Big[
    [\phi^{\pi}_\hstep]_{1:d},0 \Big]\|_{\Sigma_\hstep^{-1}} + \|
  \Big[ \vec 0, [\phi^{\pi}_\hstep]_{d+1}
    \Big]\|_{\Sigma_\hstep^{-1}}.
\end{align*}
By construction, $ [\phi^{\pi}_\hstep]_{1:d}$ only interacts with the
$D'$ block in $\Sigma_\hstep^{-1}$; using this and
$$\|x\|^2_{D'} = x^\top (D'_1 + D'_2) x  \leq \| x \|_2\(\|D_1'\|_2 + \| D'_2 \|_2\) \|x\|_2$$  we can write
\begin{align*}
\| \Big[ [\phi^{\pi}_\hstep]_{1:d},0 \Big]\|_{\Sigma_\hstep^{-1}} = \|
   [\phi^{\pi}_\hstep]_{1:d}\|_{D'} \leq \|
   [\phi^{\pi}_\hstep]_{1:d}\|_{2}\sqrt{\|D'_1\|_2 + \|D'_2\|_2}
\end{align*}
Likewise,
\begin{align*}
\| \Big[0,[\phi^{\pi}_\hstep]_{d+1} \Big]\|_{\Sigma_\hstep^{-1}} = \|
   [\phi^{\pi}_\hstep]_{d+1}\|_{b'}.
\end{align*} 

We now bound all norms:
\begin{align*}
  \| [\phi^{\pi}_\hstep]_{1:d}\|_{2} & \leq
  \frac{\|\1\|_2}{\sqrt{2d}} \leq \frac{1}{\sqrt{2}} \\ \| D'_1
  \|_2 & = \| D^{-1} \|_2 \lesssim \frac{2d^2}{\Nsub} \\ \|
  D'_2 \|_2 & \leq b'\|D^{-1}\|_2 \|v\|_2 \|v\|_2 \|D^{-1}\|_2
  \lesssim
  \underbrace{\frac{d}{\Nsub}}_{b'}\underbrace{\(\gamma
    \frac{\Nsub}{d}\)^2 \|\1\|_2^2}_{\| v\|_2^2}
  \underbrace{\frac{d^4}{n^2_\hstep}}_{\|D^{-1}\|^2_2} \lesssim
  \frac{d^2}{\Nsub}
\end{align*}
Substituting back yields the bound
\begin{align*}
  \| \Big[ [\phi^{\pi}_\hstep]_{1:d},0 \Big]\|_{\Sigma_\hstep^{-1}}
  \lesssim \frac{d}{\sqrt{\Nsub}}
\end{align*}
Similarly, we have
\begin{align*}
\|[\phi^{\pi}_\hstep]_{d+1}\|_{b'} & = \sqrt{\frac{1}{\sqrt{2}}
  b'\frac{1}{\sqrt{2}}} \leq \sqrt{\frac{1}{2}\frac{d}{c^2\Nsub}}
\lesssim \frac{\sqrt{d}}{\sqrt{\Nsub}}.
\end{align*}
Putting together the pieces yields the
claim~\eqref{EqnIntermediateUncertainUpper}.


\end{document}